\documentclass[
  a4paper,
  USenglish,
  numberwithinsect,
  thm-restate,
  autoref,
  authorcolumns
  ]{lipics-v2021}

\bibliographystyle{plainurl}

\usepackage
{
    graphicx,
    amssymb,
    amsmath,
    amsthm,
    dsfont, %
    xcolor,
    mathtools,
    algorithm,
    algpseudocode,
    comment
}

\usepackage{bm}
\usepackage{csquotes}
\usepackage{enumerate}
\usepackage[inline]{enumitem}
\usepackage[xcolor]{mdframed}
\usepackage{thmtools, thm-restate}

\setlength{\marginparwidth}{2cm}

\usepackage{float}

\usepackage{cases}

\usepackage{standalone}									%
\usepackage{tikz}										%
\usetikzlibrary{arrows}
\usetikzlibrary{arrows.meta}
\usetikzlibrary{shapes.geometric}

\usepackage{todonotes}

\usepackage{subcaption}

\usepackage[capitalize, nameinlink, noabbrev]{cleveref}
\usepackage{nicefrac}

\usepackage{datetime}
\usdate

\usepackage{dsfont} %
\usepackage{tikz}
\usepackage{standalone}
\usetikzlibrary{arrows}
\usetikzlibrary{arrows.spaced}
\usetikzlibrary{arrows.meta}
\usetikzlibrary{shapes.geometric}
\usetikzlibrary{shapes.misc}
\usetikzlibrary{backgrounds}

\newcommand{\N}{\mathbb{N}}                                     %
\newcommand{\R}{\mathbb{R}}                                     %
\newcommand{\C}{\mathbb{C}}                                     %
\providecommand{\abs}[1]{\left\lvert #1 \right\rvert}           %

\newcommand{\id}{\mathbf{1}}   %
\newcommand{\eyes}{\mathbf{0}} %

\newcommand{\fsync}{\textsc{$\mathcal{F}$sync}}

\def\oblot/{$\mathcal{OBLOT}$}

\def\LCM/{\textsc{LCM}}
\def\Look/{\textsc{Look}}
\def\Compute/{\textsc{Compute}}
\def\Move/{\textsc{Move}}

\newcommand*{\sym}{\operatorname{sym}}

\newcommand\set[1]{\left\{\,#1\,\right\}}

\setcounter{MaxMatrixCols}{17}

\makeatletter
\renewcommand*\env@matrix[1][*\c@MaxMatrixCols c]{%
  \hskip -\arraycolsep
  \let\@ifnextchar\new@ifnextchar
  \array{#1}}
\makeatother

\mathtoolsset{centercolon} %

\newlength{\continueindent}
\setlength{\continueindent}{0.25em}
\usepackage{etoolbox}
\makeatletter
\newcommand*{\ALG@customparshape}{\parshape 2 \leftmargin \linewidth \dimexpr\ALG@tlm+\continueindent\relax \dimexpr\linewidth+\leftmargin-\ALG@tlm-\continueindent\relax}
\apptocmd{\ALG@beginblock}{\ALG@customparshape}{}{\errmessage{failed to patch}}
\makeatother
\setuptodonotes{caption={}, size=\tiny}

\newcommand{\trans}{\mathrm{T}}

\renewcommand{\vec}[1]{\mathbf{#1}}
\newcommand{\z}{\vec{z}}

\newcommand{\vzeta}{\boldsymbol{\zeta}}

\newmdenv[
	backgroundcolor=yellow!20,
	leftline=false,
	rightline=false,
	bottomline=false,
	topline=false,
	linewidth=3pt,
	linecolor=blue
	]{myframe}

\newcommand{\GtC}{\textsc{Go-To-The-Center}}

\newcommand{\gta}{\textsc{Go-To-The-Average}}

\newcommand{\gtmlong}{\textsc{Go-to-the-Middle}}
\newcommand{\egtmlong}{$\epsilon$-\textsc{Go-to-the-Middle}}
\newcommand{\egtm}{\ensuremath{\epsilon}\normalfont{\textsc{-GtM}}}
\newcommand{\neargathering}{\textsc{Near-Gathering}}

\newcommand{\wave}[1]{\ensuremath{\normalfont{\texttt{wave}}^{#1}}}
\newcommand{\waveseg}[2]{\ensuremath{\normalfont{\texttt{seg}}^{#1}_{#2}}}
\newcommand{\conboundary}{Connectivity-Boundary}

\newcommand{\wavealgo}{\textsc{Wave-Algorithm}}

\title{Symmetry Preservation in Swarms of Oblivious Robots with Limited Visibility}

\author{Raphael Gerlach}{Institute of Mathematics, Paderborn University, 
Germany}{rgerlach@math.upb.de}{0009-0002-4750-2051}{}
\author{Sören von der Gracht}{Institute of Mathematics, Paderborn University, 
Germany}{soeren.von.der.gracht@uni-paderborn.de}{0000-0002-8054-2058}{Deutsche Forschungsgemeinschaft (DFG, German Research Foundation)--–453112019.}
\author{Christopher Hahn}{University of Hamburg, Department of Informatics, Germany}{christopher.hahn-1@uni-hamburg.de}{0009-0001-7617-6374}{}
\author{Jonas Harbig}{Paderborn University, Heinz Nixdorf Institute,  Germany}{jonas.harbig@uni-paderborn.de}{0000-0003-3943-5979}{Deutsche Forschungsgemeinschaft (DFG, German Research Foundation)--–453112019.}
\author{Peter Kling}{University of Hamburg, Department of Informatics, Germany}{peter.kling@uni-hamburg.de}{0000-0003-0000-8689}{}
\funding{This work was partially supported by the German Research Foundation(DFG) under the project numbers ME 872/14-1 \& 453112019.}

\authorrunning{R. Gerlach, S. v. d. Gracht, C. Hahn, J. Harbig and P. Kling,}

\Copyright{Raphael Gerlach and Sören von der Gracht and Christopher Hahn and Jonas Harbig and Peter Kling}

\ccsdesc[500]{Theory of computation~Self-organization}                         %

\keywords{
    Swarm Algorithm,
    Swarm Robots,
    Distributed Algorithm,
    Pattern Formation,
    Limited Visibility,
    Oblivious
}

\nolinenumbers
\hideLIPIcs

\EventEditors{John Q. Open and Joan R. Access}
\EventNoEds{2}
\EventLongTitle{42nd Conference on Very Important Topics (CVIT 2016)}
\EventShortTitle{CVIT 2016}
\EventAcronym{CVIT}
\EventYear{2016}
\EventDate{December 24--27, 2016}
\EventLocation{Little Whinging, United Kingdom}
\EventLogo{}
\SeriesVolume{42}
\ArticleNo{23}

\begin{document}
\maketitle

\begin{abstract}
In the \emph{general pattern formation} (GPF) problem, a swarm of simple autonomous, disoriented robots must form a given pattern.
The robots' simplicity imply a strong limitation:
When the initial configuration is rotationally symmetric, only patterns with a similar symmetry can be formed \cite{DBLP:journals/tcs/YamashitaS10}. %
The only known algorithm to form large patterns with limited visibility and without memory requires the robots to start in a \emph{near-gathering} (a swarm of constant diameter) \cite{DBLP:conf/sand/HahnHK24}. %
However, not only do we not know any near-gathering algorithm guaranteed to preserve symmetry but most natural gathering strategies trivially increase symmetries \cite{DBLP:conf/opodis/CastenowH0KKH22}.  %

Thus, we study near-gathering without changing the swarm's rotational symmetry for disoriented, oblivious robots with limited visibility (the \oblot/-model, see \cite{DBLP:series/lncs/FlocchiniPS19}). %
We introduce a technique based on the theory of dynamical systems to analyze how a given algorithm affects symmetry and provide sufficient conditions for symmetry preservation.
Until now, it was unknown whether the considered \oblot/-model allows for \emph{any} non-trivial algorithm that always preserves symmetry.
Our first result shows that a variant of \gta{} \emph{always} preserves symmetry but may sometimes lead to multiple, unconnected near-gathering clusters.
Our second result is a symmetry-preserving near-gathering algorithm that works on swarms with a convex boundary (the outer boundary of the unit disc graph) and without \enquote{holes} (circles of diameter $1$ inside the boundary without any robots).

 \end{abstract}

\section{Introduction}%
\label{sec:introduction}

The study of large \emph{robot swarms} that consist of simple (and thus cheap) mobile, autonomous robots has grown into an important and active research area in recent years.
For example, when used for exploration or rescue missions in hazardous environments (like the deep sea or outer space), such swarms can be more robust and economical compared to a small group of more capable but expensive high-end units~\cite{kang2019marsbee}.
At a completely different scale, precision medicine explores how to use swarms of nanobots (with inherently limited capabilities due to their small size) to, e.g., deliver drugs in a more targeted manner~\cite{DOI:10.1002/advs.202002203}.

A key question in this area is to what extent one can make up for the robots' lack of individual capabilities by clever coordination, possibly even reaching emergent behavior of the swarm as a whole.
Examples for the high potential of such an approach can be readily found in nature, where ant colonies or bee hives show a remarkable degree of complexity built on rather simple interaction rules between a vast number of very simple entities~\cite{DOI:10.1126/science.1210361, DBLP:conf/ecal/KhuongTJPG11}.
Similar to many other branches of distributed computing, breaking or avoiding \emph{symmetries} presents a major challenge for such systems.

\subparagraph{General Pattern Formation in the \oblot/ Model.}
Our work studies the role of symmetries in solving one of the most important problems in the theory of swarm robots, the \emph{general pattern formation} (GPF) problem.
Here, a swarm of $n \in \N$ autonomous, mobile robots must form a pattern $P \subseteq \R^2$ of $\abs{P} = n$ coordinates (in an arbitrary rotation/translation).
We assume the well-known \oblot/ model~\cite{DBLP:series/lncs/FlocchiniPS19} for deterministic point robots that are \emph{oblivious} (have no memory), \emph{anonymous} (have no identities and cannot be distinguished), \emph{homogeneous} (execute the same algorithm), and \emph{disoriented} (perceive their surroundings in coordinate systems that may be arbitrarily rotated/translated compared to other robots).
We also assume that the robots have a limited, constant \emph{viewing range}, such that they cannot observe anything beyond that range.
Robots act in discrete time steps in which each of them performs a full LCM-cycle consisting of a \Look/- (observe surroundings), a \Compute/- (calculate target), and a \Move/- (move to target) phase.
This is often referred to as the \emph{fully-synchronous} (\fsync) time model (other models allow the phases of different robots to be arbitrarily interleaved).
We consider only \emph{rigid} movements (robots always reach their designated target).

\subparagraph{Symmetries in GPF.}
Symmetries play a key role in GPF, since – even if the robots are not oblivious and have an unlimited viewing range – a swarm can only form patterns whose \emph{symmetricity} (a measure of a pattern's symmetry, see \cref{def:symmetricity}) is a multiple of the swarm's initial symmetricity~\cite{DBLP:journals/siamcomp/SuzukiY99, DBLP:journals/siamcomp/FujinagaYOKY15}.
In the case of oblivious robots with an unlimited viewing range, even in an asynchronous setting, this \emph{symmetricity condition} characterizes \emph{exactly} those patterns that can be formed~\cite{DBLP:journals/tcs/YamashitaS10, DBLP:journals/siamcomp/FujinagaYOKY15}.
A good overview of the symmetricity condition's role in various synchronous and asynchronous settings is given in~\cite{DBLP:series/lncs/Yamauchi19}.

If the viewing range is limited but robots are not oblivious, \cite{DBLP:conf/sirocco/YamauchiY13} gave a protocol that forms a scaled-down version of a target pattern $P$ adhering to the symmetricity condition.
That protocol first forms a \emph{\neargathering{}} (a formation whose diameter is at most the robots' viewing range) and then uses the de facto global view to form a small version of $P$.
The authors also show that if movements are \emph{non-rigid} (i.e., an adversary can stop robots during their movement), robots must be non-oblivious to solve GPF.

Under the light of these results, it remains an open question whether oblivious robots with a limited viewing range can form arbitrary connected patterns $P$ (ideally in its original size) under the above symmetricity condition.
Very recently, \cite{DBLP:conf/sand/HahnHK24} introduced a protocol that achieves this \emph{if the robots start from a \neargathering}.
This opens the possibility to solve GPF from \emph{any} formation by first (similiar to~\cite{DBLP:conf/sirocco/YamauchiY13}) forming a \neargathering{} and then (once the robots observe $\abs{P}$ peers in the \neargathering) form $P$ using the protocol from~\cite{DBLP:conf/sand/HahnHK24}.

Unfortunately, while there are algorithms for local, oblivious robots that achieve a \neargathering{}~\cite{DBLP:conf/opodis/CastenowH0KKH22, DBLP:journals/dc/PagliPV15}, all of them might (and typically do) \emph{increase} the initial symmetry.
As a result, it might be impossible to form the target pattern from the resulting \neargathering{}, even if the swarm's original symmetry met the symmetricity condition.
The authors of~\cite{DBLP:conf/sand/HahnHK24} leave as a central open question \emph{\enquote{whether there is a suitable \neargathering{} protocol that preserves the initial symmetricity}} for oblivious robots\footnote{Note that~\cite{DBLP:conf/sirocco/YamauchiY13} uses the robots' memory to remember their movements, which allows them to maintain their original position and, thus, symmetricity (symmetric robots in the \neargathering{} can be distinguished by their original position).}.
If the answer were positive, one could show that also for oblivious robots with a limited viewing range in the synchronous model and with rigid movements, the patterns that can be formed are characterized by the symmetricity condition.

\begin{figure}
    \begin{subfigure}{0.48\linewidth}
        \includegraphics[page=3,width=\linewidth]{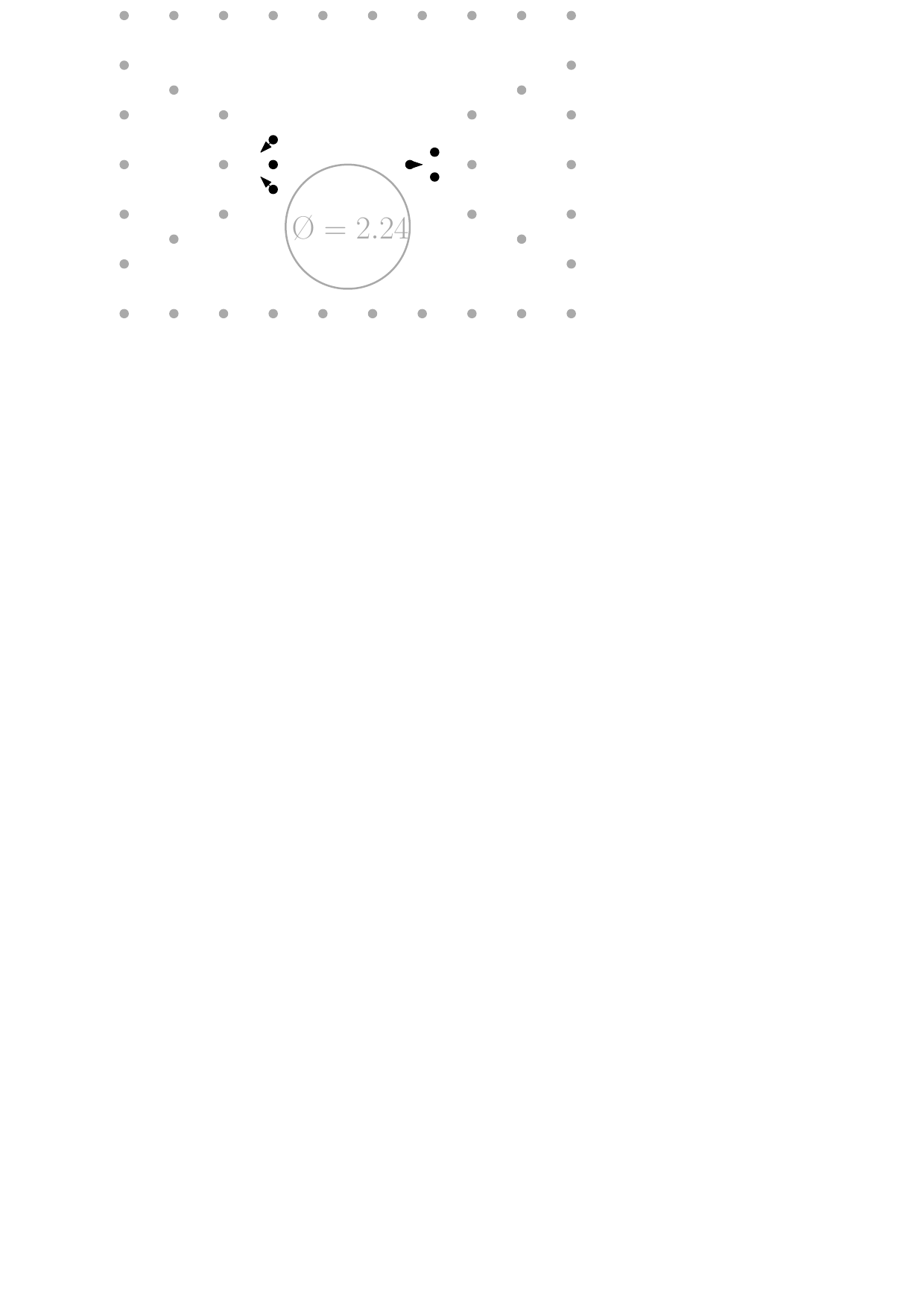}
        \caption{Example with hole. \Cref{alg:near-gath-convex-no-holes}: \neargathering\ but introduces new symmetries\footnotemark[1]. \gta: \neargathering\ with symmetry preservation.\footnotemark[2]
        }
        \label{fig:gta-better-egtm}
    \end{subfigure}
    \hfill
    \begin{subfigure}{0.48\linewidth}
        \hfill
        \includegraphics[page=2,width=0.8\linewidth]{figures/protocol-differences.pdf}
        \caption{Example with convex boundary and without hole. \Cref{alg:near-gath-convex-no-holes}: \neargathering\ with symmetry preservation. \gta: Keeps symmetry but looses connectivity (swarm splits at the gray line)\footnotemark[3].}
        \label{fig:egtm-better-gta}
    \end{subfigure}
    \caption{
        }
    \label{fig:examples-egtm-vs-gta}
\end{figure}
{
    \renewcommand{\thefootnote}{\alph{footnote}}
\footnotetext[1]{{The black robot does not observe robots on the outer rectangle (circle depicts viewing range). It assumes it is a boundary-robot and moves accordingly. All other robots in the inside of the rectangle do not move, because they see enough robots on the outer rectangle to detect, that they are inner-robots.}}
\footnotetext[2]{We simulated this example with the shown viewing range.}
\footnotetext[3]{The distance between grid points is $\nicefrac{1}{\sqrt{2}}$ and the viewing range $2 + \sqrt{2}$. We simulated it with 400 robots at each cluster.}
}

\subparagraph{Our Contribution.}
In this work, we initiate the systematical study of when and how a swarm of oblivious robots with limited viewing range can perform global tasks like \neargathering{} without increasing the swarm's initial symmetricity.
We derive a mathematical framework based on methods from the theory of dynamical systems.
In particular, we formulate the following simple but useful \lcnamecref{thr:main-symmetries-math} (see \cref{sec:preliminaries}) that provides sufficient properties for a given swarm protocol (represented by its evolution function $F\colon \R^{2n} \to \R^{2n}$ that describes the configuration $\z^+ \coloneqq F(\z)$ after one protocol step on a given configuration $\z \in \R^{2n}$ of $n$ robots in the Euclidean plane) to preserve symmetricity.
For the precise definitions of the used terms, like a configuration $\z$'s symmetries, we refer to \cref{sec:preliminaries}.
\begin{theorem}[{name=, restate=[name=restated]thmSymPreservSuff}]%
\label{thr:main-symmetries-math}%
Consider the dynamics of an arbitrary swarm protocol with evolution function $F\colon \R^{2n} \to \R^{2n}$.
Assume that $F$ is (locally) invertible.
Then, any configuration $\z \in \R^{2n}$ and its successor configuration $\z^+ \coloneqq F(\z)$ have the same symmetries $G_{\z^+} = G_{\z}$.
\end{theorem}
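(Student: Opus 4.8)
The plan is to reduce the statement to two structural properties of the evolution map $F$. The first is \emph{equivariance}: since the robots are disoriented, oblivious, anonymous and homogeneous, $F$ must commute with the natural (diagonal) action of the Euclidean isometries on configurations, i.e.\ $F(\gamma\z) = \gamma F(\z)$ for every isometry $\gamma$ (with the usual permutation bookkeeping for an anonymous swarm suppressed); this is exactly the kind of property established in \cref{sec:preliminaries}. The second is \emph{injectivity}, which is precisely what the hypothesis that $F$ is (locally) invertible buys us. Recalling that $G_\z$ is the set of symmetries of $\z$, that is, the stabiliser $\set{\gamma \with \gamma\z = \z}$ under this action, the asserted equality $G_{\z^+} = G_\z$ splits into the two inclusions $G_\z \subseteq G_{\z^+}$ and $G_{\z^+} \subseteq G_\z$, and I would prove them separately.

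The inclusion $G_\z \subseteq G_{\z^+}$ I would prove directly, and it uses no invertibility at all: if $\gamma\z = \z$, then $\gamma\z^+ = \gamma F(\z) = F(\gamma\z) = F(\z) = \z^+$, so $\gamma \in G_{\z^+}$. This is the formal counterpart of the well-known fact that a single protocol step can only ever \emph{create} new symmetries and never destroys existing ones — which is exactly why naive \neargathering{} strategies tend to be problematic. For the reverse inclusion $G_{\z^+} \subseteq G_\z$ I would invoke injectivity: if $\gamma\z^+ = \z^+$, then $F(\gamma\z) = \gamma F(\z) = \gamma\z^+ = \z^+ = F(\z)$, and cancelling the injective map $F$ gives $\gamma\z = \z$, i.e.\ $\gamma \in G_\z$. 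Combining the two inclusions yields $G_{\z^+} = G_\z$.

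The one delicate point — and the step I expect to be the main obstacle — is this cancellation when $F$ is only \emph{locally} invertible: from $F(\gamma\z) = F(\z)$ one may conclude $\gamma\z = \z$ only if $\gamma\z$ happens to lie in a neighbourhood of $\z$ on which $F$ is injective, which is not automatic. I would try to close this gap by exploiting that equivariance holds for the \emph{whole} rotation group, not merely for the finite group $G_\z$: writing $\gamma$ as the endpoint of a continuous one-parameter family $(\gamma_t)_{t\in[0,1]}$ of rotations about the common centre, with $\gamma_0$ the identity and $\gamma_1 = \gamma$, equivariance gives $F(\gamma_t\z) = \gamma_t\z^+$ for every $t$, so $F$ carries the path $t \mapsto \gamma_t\z$ onto the \emph{loop} $t \mapsto \gamma_t\z^+$ based at $\z^+$ (a loop precisely because $\gamma \in G_{\z^+}$), and a subdivision/continuity argument along this path, using that $F$ is a local diffeomorphism and that $\R^{2n}$ is simply connected, should force $\gamma_1\z = \z$. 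I would also point out that in every application of \cref{thr:main-symmetries-math} later in the paper the map $F$ is in fact globally injective (it is a small, Lipschitz perturbation of the identity), so there the bare cancellation argument already suffices and this subtlety can be dispensed with.
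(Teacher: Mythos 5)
Your overall architecture is the same as the paper's: equivariance of $F$ with respect to every potential symmetry $M_\kappa M_\rho \in G$ (\cref{prop:equiv}) yields the inclusion $G_{\z} \subseteq G_{\z^+}$ with no invertibility at all (\cref{prop:ap-preserve}), and invertibility is used only for the reverse inclusion, where your direct cancellation of the injective $F$ is equivalent to the paper's argument that $F^{-1}$ inherits equivariance and the first inclusion can be applied to it (\cref{prop:ap-nogain}). The genuine divergence, and the genuine gap, is in the locally invertible case, which is the case the theorem actually asserts and the one invoked for both protocols. Your proposed monodromy patch does not work in this setting for two reasons. First, a symmetry of a configuration is not a pure rotation but a composition $M_\kappa M_\rho$ of a rotation with a relabelling permutation; the permutation part is discrete, so $M$ is in general not the endpoint of a continuous family in $G$ starting at the identity, and the curve $t \mapsto M_{\rho_t}\z^+$ you would want to lift is not a loop: only the combined map $M_\kappa M_\rho$ fixes $\z^+$, while the rotation alone carries $\z^+$ to the relabelled copy $M_{\kappa^{-1}}\z^+$. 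Second, path lifting needs $F$ to be continuous (a local homeomorphism), but the theorem is deliberately stated without any regularity assumption, and this is essential: the evolution function of \cref{alg:near-gath-convex-no-holes} in \cref{sec:protocol:contractingwaves} is discontinuous and is nothing like a small Lipschitz perturbation of the identity, so your closing claim that all later applications are globally injective for that reason is also inaccurate (it is plausible for $\varepsilon$-\gta{}, not for Protocol~2).

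The paper closes the local case purely algebraically, with no topology: for $M \in G_{\z^+}$ it computes $\z = F_{\z}^{-1}(\z^+) = F_{\z}^{-1}(M\z^+) = F_{\z}^{-1}\bigl(M F(F_{\z}^{-1}(\z^+))\bigr) = F_{\z}^{-1}\bigl(F(M F_{\z}^{-1}(\z^+))\bigr) = M F_{\z}^{-1}(\z^+) = M\z$, using equivariance together with the fact that $M$ fixes $\z^+$, so that every evaluation of the local inverse takes place at $\z^+$ itself. The subtlety you flag — that cancelling $F$ against a local inverse is only licensed on the neighbourhood where $F$ is injective — is a real one and is exactly what this computation is organised around; if you want to repair your reverse inclusion, replace the topological detour by this algebraic manipulation (or simply assume the global invertibility you can actually verify in the application at hand), rather than by an argument that presupposes continuity of $F$ and a path-connected symmetry.
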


The framework of dynamical systems provides a clean mathematical basis to formulate the symmetries of a given configuration and how they are affected by a protocol step.
To prove the usefulness of our framework, we provide two example protocols that, under certain conditions, achieve a \neargathering\ without increasing the swarms symmetricity.

The first protocol (see \cref{sec:protocol:averaging}) is a variant of the well known \gta{} protocol.
This protocol is known to \emph{not} always preserve the swarm's initial connectivity (see \Cref{fig:egtm-better-gta} for an example), but if it does, it leads to \neargathering\footnote{Robots on the swarm's convex hull move inwards in every step.}.
Our framework easily implies that it preserves the swarm's initial symmetry.

Our second protocol (see \cref{sec:protocol:contractingwaves}) is an adaption of the well known \gtmlong{} strategy.
It restricts movements to robots close to the swarm's boundary and coordinates the movement of nearby robots to ensure that no symmetries are created.
While this protocol always preserves connectivity and leads to \neargathering\ (\Cref{remark:egtm-always-near-gathering}),
it is not always symmetry preserving (see \Cref{fig:gta-better-egtm} for an example). 
We can prove that its evolution function is locally invertible (and, thus, the protocol symmetry preserving) for configurations that contain no \enquote{holes} and are convex (we formalize this requirement in \cref{sec:protocol:contractingwaves}).

\subparagraph{Outline.}
This work is outlined as follows: First, further related work is reviewed in \Cref{sec:related_work}.
In \cref{sec:preliminaries}, we give a formal description of the problem and formulate a sufficient condition for the preservation of symmetries based on the theory of dynamical systems.
We present and analyze our two algorithms in \cref{sec:protocol:averaging} and \cref{sec:protocol:contractingwaves}.
Finally, we discuss limitations and possible generalizations of both algorithms in \cref{sec:conclusion}.
\section{Further Related Work}%
\label{sec:related_work}

The general pattern formation problem has been considered in numerous further settings and variants whose focus or assumptions differ from our setting and the work described in the \cref{sec:introduction}.
For example, \cite{DBLP:journals/tcs/BoseAKS20} considers pattern formation for oblivious robots with unlimited visibility on an infinite grid, showing that an initially asymmetric swarm can form any pattern (basically thanks to the grid enabling partial axis agreement).
The authors of~\cite{DBLP:journals/tcs/BoseKAS21} assume unlimited but obstructed visibility (i.e., robots can obstruct each others view) with partial axis agreement and luminous robots (that can communicate via a constant number of lights).
The role of partial axis agreement and disorientation was also studied in~\cite{DBLP:journals/tcs/FlocchiniPSW08, DBLP:journals/dc/CiceroneSN19}, proving several possibility and impossibility results depending on how much of their coordinate systems the robots agree.
In~\cite{DBLP:journals/dc/0001FSY15}, the authors considered when oblivious robots with unlimited visibility can form (cyclic) sequences of patterns.
Note that the impossibility results in most of these works typically rely to a large degree on assuming an asynchronous model (where the robots' LCM-cycles may be arbitrarily interleaved) and/or non-rigid movements (where an adversary can stop robots mid-motion).
For further results on general pattern formation we refer to the survey~\cite{DOI:10.1088/1742-6596/473/1/012016}.

There are also several results on forming specific patterns, like
\begin{itemize*}[afterlabel=, label=]
\item a point~\cite{DBLP:conf/spaa/DegenerKLHPW11, DBLP:conf/opodis/CastenowH0KKH22, DBLP:journals/tcs/CastenowFHJH20} (gathering),
\item an arbitrarily tight near-gathering~\cite{DBLP:journals/siamcomp/CohenP05, DBLP:conf/podc/KirkpatrickKNPS21} (convergence), and
\item a uniform circle~\cite{DBLP:series/lncs/Viglietta19, DBLP:conf/icdcit/MondalC20, DBLP:journals/dc/FlocchiniPSV17}.
\end{itemize*}
A rather up-to-date and good overview can be found in~\cite{DBLP:series/lncs/11340}.

In dynamical systems theory, much of the related literature considers \emph{consensus} or \emph{synchronization} problems that share characteristics with gathering or near-gathering, e.g., by identifying robot positions with \enquote{opinions} the agents want to find a consensus on. A good overview over this research branch can be found in~\cite{Pikovskij.2003} or in the slightly more recent surveys~\cite{Chen.2013, Dorfler.2014}. Extensions of these methods to general pattern formation (in our sense) have been proposed as well (see e.g., \cite{Almuzakki.2023}). However, in this area the focus is typically on time-continuous systems---in the form of differential equations---and the models are not as strictly restricted as necessary in our context, e.g., allowing global communication range.

\section{Preliminaries \& Notation}%
\label{sec:preliminaries}

This \lcnamecref{sec:preliminaries} introduces some formal notation and definitions we use throughout the rest of the paper.
In particular, we use some tools from the theory of dynamical systems to formulate sufficient conditions for swarm protocols that preserve symmetries.

\subparagraph{Basic Notions \& Notation.}

When discussing the configuration (state) of the swarm in some round $t \in \N$, we typically take the perspective of an external observer who passively observes the motion of the robot swarm.
In particular, given a swarm of $n$ robots we specify the robots' positions $z_1^t=(x_1^t,y_1^t),\dotsc,z_n^t=(x_n^t,y_n^t)$ in round $t$ in an arbitrary \emph{global coordinate system} for $\R^2$.
The \emph{configuration} of the entire swarm in round $t$ can then be expressed as the (column) vector $\z^t={(z_i^t)}_{i=1}^{n}\in(\R^2)^n\equiv\R^{2n}$.
During the analysis, we sometimes identify a robot with its position (e.g., saying that robot $z_i$ moves to its target point).

The \emph{unit disc graph} of a configuration $\z$ is an undirected graph $G = (V, E)$ with $V = \set{1, \dots, n}$ and $\set{i, j} \in E$ if and only if $\|z_i - z_j\| \leq 1$.
A \emph{\neargathering} is a configuration whose unit disc graph is a clique (i.e., any two robots see each other).
We say a configuration is \emph{connected} if its unit disc graph is connected.

The protocol definition given in \cref{sec:protocol:contractingwaves} relies on the \emph{\conboundary} of the swarm's configuration, which we define as the outer boundary of the (in general non-convex) polygon enclosed by the unit disc graph.
In our analysis we sometimes identify the \conboundary{} with the set of robots that lie on it.

\subsection{Characterizing Protocols via their Evolution Function}%
\label{sec:evolutionfunction}

We now formalize how to model the effect of a given protocol in the language of dynamical systems, which we will use to analyze the protocol's effect on the swarm's symmetry.

Remember that robots move autonomously in the plane in discrete synchronous rounds according to the same protocol.
Mathematically, from round $t$ to $t + 1$, the $i$-th robot moves from its old position $z_i^t$ to its new position $z_i^{t+1}$.
This new position $z_i^{t+1}$ depends on its own as well as on (potentially) \emph{all other robots'} positions (e.g., if the robot currently sees the whole swarm and the protocol states to move towards the center of gravity of visible robots).
In the most general formulation, we can describe such a dynamics as
\begin{equation}
	\label{eq:general_robot}
	z_i^{t+1} = f(z_i^t; \z^t) \quad i=1,\dotsc, n
\end{equation}
for some function $f\colon \R^2 \times \R^{2n} \to \R^2$.
Note that all robots execute the same protocol, so the function $f$ does not depend on $i$.
However, since all robots have their own coordinate system they must to be able to distinguish their own position from all other positions in the swarm.
This is reflected in the explicit first argument given to $f$.

With this, we can describe the evolution of the entire configuration as
\begin{equation}
	\label{eq:general_configuration}
	\z^{t+1} = F(\z^t) = \begin{pmatrix} f(z_1^t; \z^t) \\ \vdots \\ f(z_n^t; \z^t)
	\end{pmatrix}
    .
\end{equation}
We refer to $F$ as the \emph{evolution function} of the protocol.

Whenever the specific value of $t$ is irrelevant (e.g., when we investigate an arbitrary round) we drop the superscript and abbreviate $z_i^+ = f(z_i; \z)$ where $z_i^+\in\R^2$ indicates the \enquote{next} position of robot $i$.
Similarly, we use the notation $\z^+ = F(\z)$ to indicate the \enquote{next} configuration if the evolution function $F$ is applied to some configuration $\z$.

\subsection{Preserving Symmetries via Local Invertibility}%
\label{sec:preservingsymmetries}

As explained in the introduction, wee seek \neargathering{} strategies that do not increase the swarm's \emph{symmetricity} (because of the \emph{symmetricity condition} for pattern formation, see~\cite[Theorem~1]{DBLP:journals/siamcomp/FujinagaYOKY15}).
The symmetricity measures the rotational symmetricity of a finite set $P \subseteq \R^2$ (in our case the set of robot positions) and can be defined as follows:
\begin{definition}[Symmetricity~\cite{DBLP:journals/siamcomp/FujinagaYOKY15}]%
\label{def:symmetricity}
Consider a finite set $P \subseteq \R^2$ whose smallest enclosing circle is centered at $c \in \R^2$.
A \emph{$m$-regular} partition of $P$ is a partition of $P$ into $k = \abs{P}/m$ regular $m$-gons with common center $c$.
The \emph{symmetricity} of $P$ is defined as
\begin{math}
\sym(P)
\coloneqq
\max\set{ m \in \N \mid \text{there is a $m$-regular partition of $P$} }
\end{math}.
\end{definition}
In \cref{def:symmetricity}, a single point is considered a $1$-gon with an arbitrary center.
Thus, any $P$ has a $1$-regular partition.
Note that, if the center $c$ is an element of $P$, then $\sym(P) = 1$.\footnote{
    One might assume a $n$-gon together with its center forms a rather symmetric set of size $n+1$.
    But robots can easily break the perceived symmetry, since the center robot basically functions as a leader.
}
Since here we consider the configuration typically in the global coordinate system of the external observer (which can be chosen arbitrarily), we assume (without loss of generality) that the swarm's center $c$ is the origin.

We now formalize the notion of a \emph{symmetry} in such a way that we can apply the theory of dynamic systems to argue how a protocol's evolution function influences those symmetries.
For a swarm of symmetricity $m > 1$ there are exactly $m$ rotations $\rho\colon \R^2 \to \R^2$ around the origin (center) under which the \emph{set} of robot positions is invariant (i.e., $\set{\rho(z_1), \dotsc, \rho(z_n)} = \set{z_1, \dotsc, z_n}$).
We represent configurations as tuples instead of sets, which is more typical in the context of dynamical systems.
Thus, we define a symmetry of a configuration $\z \in (\R^2)^n$ as follows (using a permutation to \enquote{relabel} the tuple suitably after the rotation).
\begin{definition}[Symmetry of a Configuration 1]%
\label{def:symmetry1}
Consider a configuration $\z=(z_1,\dotsc,z_n)^\trans$ and a rotation $\rho\colon\R^2\to\R^2$ centered at the origin. Then $\rho$ is a \emph{symmetry of the configuration $\z$} if and only if there exists a permutation $\kappa\colon\set{1,\dotsc,n}\to\set{1,\dotsc,n}$ such that
\begin{math}
\rho(z_i) = z_{\kappa(i)}
\end{math}
for all $i \in \set{1, \dots, n}$.
\end{definition}

To lift the rotation $\rho$ and permutation $\kappa$ from \cref{def:symmetry1} to the entire configuration $\z \in (\R^2)^n \equiv \R^{2n}$, we use the following block matrices $M_{\rho}$ and $M_{\kappa}$ ($n \times n$ matrices with entries in $\R^{2 \times 2}$) implied by $\rho$ and $\kappa$:
\begin{equation}%
\label{eq:symm-mat}
\begin{split}
M_\rho &=
\begin{pmatrix}
    \rho & \eyes & \cdots & \eyes \\
    \eyes & \rho & \ddots & \vdots \\
    \vdots & \ddots & \ddots & \eyes \\
    \eyes & \cdots & \eyes & \rho
\end{pmatrix}
\qquad\text{and}\qquad
(M_\kappa)_{ij} &= \begin{cases}
    \id_2, \quad &\text{if } \kappa(i)=j\\
    \eyes, \quad &\text{otherwise}.
\end{cases}
\end{split}
\end{equation}
Here, $\id_l$ denotes for the $l \times l$ identity matrix and $\eyes$ the zero matrix of suitable dimensions.
Without further mention, we identify both matrices with their $\R^{2n\times2n}$ counterparts.

The matrices above allows us to reformulate the condition from \cref{def:symmetry1} as
\begin{math}
M_\rho \z = M_\kappa \z
\end{math}.
Since $M_{\kappa}$ is furthermore invertible with $M_{\kappa}^{-1} = M_{\kappa^{-1}}$ (and the inverse $\kappa^{-1}$ is also a permutation), we can reformulate \cref{def:symmetry1}:
\begin{definition}[Symmetry of a Configuration (alternative)]
Consider a configuration $\z \in \R^{2n}$ and a rotation $\rho\colon\R^2\to\R^2$, both centered at the origin.
Then $\rho$ is a \emph{symmetry of the configuration $\z$} if and only if there exists a permutation $\kappa\colon\set{1,\dotsc,n}\to\set{1,\dotsc,n}$ such that
\begin{math}
M_\kappa M_\rho\z = \z
\end{math}.
\end{definition}

We denote the set of all \emph{potential symmetries} as
\begin{equation}%
\label{eq:G}
G = \set{M_\kappa M_\rho \mid \kappa\colon\set{1,\dotsc,n}\to\set{1,\dotsc,n} \text{permutation}, \rho\colon\R^2\to\R^2 \text{ rotation}}.
\end{equation}
and the subset of \emph{(actual) symmetries} of a configuration $\z$ as
\begin{equation}%
\label{eq:Gz}
G_\z = \set{M_\kappa M_\rho\in G \mid M_\kappa M_\rho\z = \z}.
\end{equation}

With this formalization at hand, we can study how a protocol (via its evolution function) influences a configurations actual symmetries.
In fact, the classical theory of equivariant dynamics~\cite{Golubitsky.1988,Chossat.2000} immediately yields that a protocol can never cause the \emph{loss} of symmetries\footnote{
    For example, robots forming an identical $n$-gon have identical views and, thus, perform the same local calculations.
    Thus, the swarm would be forever trapped in a, possibly scaled, $n$-gon formation.
}
Our \cref{thr:main-symmetries-math} states that if the evolution function $F$ is additionally invertible, then we also cannot \emph{gain} symmetries.
Its proof is given in \cref{sec:proofthm1}.
\thmSymPreservSuff*

Note that the assumption of invertibility is required on the level of the configuration.
In particular, a robot \emph{does not} have to be able to determine its previous position based on its local information.
Informally speaking, from our perspective as an external observer we must always be able to determine the swarm's configuration in the previous round.\footnote{
    The statement is in fact stronger, since we require only \emph{local} invertibility (i.e., for any configuration $\z$ there are open subsets $U_{\z}$ and $V_{F(\z)}$ containing $\z$ and $F(\z)$, respectively, such that $F_{\z}\colon U_{\z} \to V_{F(\z)})$ is invertible).
    This is, for example, central for the analysis of the averaging strategy from \cref{sec:protocol:averaging}.
}

Moreover, it is important to emphasize that the \lcnamecref{thr:main-symmetries-math} does not require any regularity of the evolution function $F$.
In particular, it remains true even if $F$ is non-continuous, which will be essential for the protocol presented in \Cref{sec:protocol:contractingwaves} below.
\section{Protocol 1: Preserving Symmetries via Averaging}%
\label{sec:protocol:averaging}

This \lcnamecref{sec:protocol:averaging} presents a protocol that preserves
the symmetries of a configuration
(which we prove via \cref{thr:main-symmetries-math}).
However, it does not always achieve \neargathering{} as certain initial configurations may result in several clusters of \neargathering s.
Our proposed protocol $\varepsilon$-\gta{} does the following in each round:
\begin{enumerate}
\item Observes positions of visible neighbors (viewing range $1$).
\item Calculate the \emph{target point} as the \emph{weighted} (see below) average of all visible robot positions.
\item Move an $\varepsilon$-fraction towards the target point for an $\varepsilon\in(0,1)$.
\end{enumerate}

The weights of the $i$-robot at position $\z_i$ for a visible neighbor at position $\z_j$ is derived via a monotonically decreasing \emph{bump function} of their squared distances $X \coloneqq \|z_i - z_j\|^2$ defined via
\begin{equation}%
\label{eq:ex_bump0}
b(X) = \begin{cases}
    \exp \left(-\frac{X^2}{1-X^2}\right) & \text{if }X \in [0,1]\\
    0 & \text{if } X > 1.
\end{cases}
\end{equation}
whose graph is shown in \cref{fig:ex_bump0}.
\begin{figure}
\includegraphics[width=0.9\linewidth]{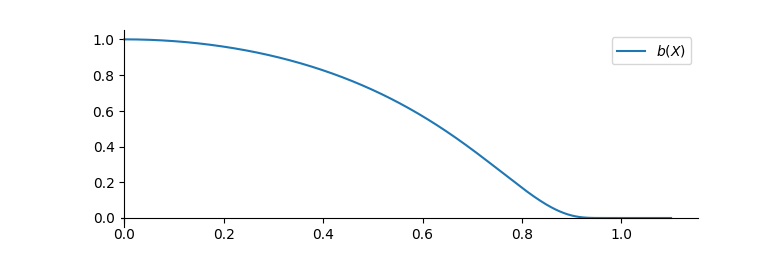}
\caption{%
    Graph of bump function $b(X)$.
    It decreases monotonically from $1$ (for $X = \|z_i - z_j\|^2 = 0$; i.e., when $j$ is at the same position as $i$) to $0$ (for $X = 1$; i.e., when $j$ is at the brink of being invisible).
}\label{fig:ex_bump0}
\end{figure}
With the bump function, we can model the computation of the target point of a robot in position $z_i\in\R^2$ of a configuration $\z\in\R^{2n}$ as
\begin{equation}%
\label{eq:ex_target}
T(z_i;\z) = \frac{1}{n} \sum_{i=j}^n b(\|z_i-z_j\|^2)(z_j-z_i).
\end{equation}
Note that the weights (values of the bump function) for robots outside of the viewing range of $i$ are $0$ and, hence, do not affect the computation.
With this, we can describe the evolution of position $z_i$ in the form of \cref{eq:general_robot} by a (robot-independent) function $f\coloneq \R^2 \times \R^{2n} \to \R^2$ defined by
\begin{equation}%
\label{eq:ex_dynamics}
z_i^+
=
f(z_i; \z)
=
z_i + \varepsilon \cdot T(z_i;\z)
=
z_i + \frac{\varepsilon}{n} \sum_{i=j}^n b(\|z_i-z_j\|^2) \cdot (z_j-z_i)
\end{equation}
for some fixed $\varepsilon \in (0,1)$.
This yields our protocol's evolution function $F$ as specified in \cref{eq:general_configuration}.

The next \lcnamecref{lem:prot1:invertibleF} states that if $\epsilon$ is chosen small enough, $F$ is locally invertible.
As a direct consequence of \cref{thr:main-symmetries-math}, this implies that $\varepsilon$-\gta{} preserves symmetries.
The proof of the following \lcnamecref{lem:prot1:invertibleF} can be found in \cref{ap:protocol1}.
\begin{lemma}%
\label{lem:prot1:invertibleF}
Consider the evolution function $F$ of the $\varepsilon$-\gta{} protocol for $\varepsilon < \frac{n}{27(n-1)}$.
Then $F$ is locally invertible.
\end{lemma}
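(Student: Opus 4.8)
The plan is to derive local invertibility of $F$ from the inverse function theorem by showing that the Jacobian $DF(\z)$ is nonsingular at \emph{every} configuration $\z \in \R^{2n}$. First observe that $F$ is $C^\infty$: the bump function $b$ of \cref{eq:ex_bump0} extends to a $C^\infty$ function on all of $\R$ (all one‑sided derivatives vanish at $X=1$ — the standard smooth‑bump fact), the squared distances $X_{ij} \coloneqq \|z_i - z_j\|^2$ are polynomials in the coordinates, and by \cref{eq:ex_dynamics,eq:general_configuration} the map $F$ is built from these by composition, addition and scaling. Write $F = \mathrm{id} + \varepsilon\,G$ with $G(\z) = \bigl(T(z_1;\z),\dots,T(z_n;\z)\bigr)^{\trans}$ as in \cref{eq:ex_target}, so that $DF(\z) = I_{2n} + \varepsilon\,DG(\z)$. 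It then suffices to show $\varepsilon\,\norm{DG(\z)} < 1$ in some operator norm: the Neumann series $\sum_{k\ge 0}(-\varepsilon\,DG(\z))^k$ converges and inverts $DF(\z)$, and the inverse function theorem finishes the argument.

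The second step is to compute $DG(\z)$ block by block, viewing it as an $n \times n$ array of $2 \times 2$ blocks whose $(i,k)$-block is $\partial T(z_i;\z)/\partial z_k$. Since $T(z_i;\z) = \tfrac1n\sum_j b(X_{ij})\,(z_j - z_i)$ depends on the positions only through the differences $z_j - z_i$ (and the $j=i$ summand is identically $0$), a straightforward differentiation (chain rule plus product rule) gives, with $d_{ik} \coloneqq z_i - z_k$,
\[
\frac{\partial T(z_i;\z)}{\partial z_k} = \frac1n\bigl(2\,b'(X_{ik})\,d_{ik} d_{ik}^{\trans} + b(X_{ik})\,I_2\bigr) \text{ for } k \ne i, \qquad \frac{\partial T(z_i;\z)}{\partial z_i} = -\sum_{k\ne i}\frac{\partial T(z_i;\z)}{\partial z_k} .
\]
The second identity reflects the translation invariance of the protocol and can also be verified directly.

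For the third step I would estimate these blocks using only two elementary properties of $b$: that $0 \le b(X) \le 1$ everywhere, and that $|b'(X)|\,X \le |b'(X)| \le 8/e$ on the support $[0,1)$. The latter follows from $b'(X) = -\tfrac{2X}{(1-X^2)^2}b(X)$ and $b(X) = e\cdot e^{-1/(1-X^2)}$: the substitution $t \coloneqq 1/(1-X^2) \ge 1$ turns the right-hand side into $\le 2e\,t^2 e^{-t}$, and $\max_{t>0} t^2 e^{-t} = 4e^{-2}$. Because $\norm{d_{ik} d_{ik}^{\trans}} = X_{ik}$ and $\norm{I_2} = 1$, every off-diagonal block has (spectral) norm at most $\tfrac1n(2\cdot\tfrac8e + 1) < \tfrac7n$, and the $(i,i)$-block at most $\tfrac{n-1}{n}$ times the same quantity. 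Summing the at most $n$ block norms in a block-row and passing to the induced operator norm on $\R^{2n}$ yields a uniform bound $\norm{DG(\z)} \le \tfrac{27(n-1)}{n}$ for all $\z$ (a more careful bookkeeping even gives roughly half of this; the stated threshold is all that is needed). Consequently, for $\varepsilon < \tfrac{n}{27(n-1)}$ we get $\varepsilon\,\norm{DG(\z)} < 1$, hence $DF(\z)$ is invertible at every $\z$, and $F$ is locally invertible.

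The whole argument is routine once the pieces are in place; the only genuine work is the careful differentiation of $T$ (getting the rank‑one term $d_{ik}d_{ik}^{\trans}$ and its coefficient right) and the uniform control of $b$ and $b'$. I expect the main obstacle to be purely computational: choosing block estimates that are clean while still landing below the threshold $\tfrac{n}{27(n-1)}$ — but essentially any reasonably careful bound does this, so matching the exact constant is not delicate. (As a bonus, the same Lipschitz bound on $G$ shows via a contraction argument that $F$ is in fact a global homeomorphism of $\R^{2n}$ in this range, though only local invertibility is claimed.)
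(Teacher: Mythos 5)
Your proposal is correct, and it follows the same overall strategy as the paper --- reduce local invertibility to nonsingularity of the Jacobian at every configuration via the inverse function theorem, after noting that the bump function makes $F$ smooth --- but it executes the linear-algebra step differently. The paper keeps the $2n\times 2n$ Jacobian as scalar entries, computes the same partial derivatives you do, and invokes the Gershgorin circle theorem to argue (with the key estimate explicitly omitted, \enquote{omitting any details}) that no Gershgorin disc contains $0$ when $\varepsilon < \frac{n}{27(n-1)}$; you instead factor out the identity exactly, write $DF(\z) = I_{2n} + \varepsilon\,DG(\z)$, organize $DG$ into $2\times 2$ blocks $\frac{1}{n}\bigl(2b'(X_{ik})d_{ik}d_{ik}^{\trans} + b(X_{ik})I_2\bigr)$ (which are precisely the paper's scalar derivatives repackaged), and defeat the perturbation by a block operator-norm/Neumann-series bound. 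The two tools are cousins (both are diagonal-dominance arguments), but your version buys two things: it actually supplies the quantitative control the paper leaves out --- $0\le b\le 1$ and $\lvert b'\rvert \le 8/e$ on the support, hence a per-block bound $(16/e+1)/n < 7/n$ and $\norm{DG(\z)}\le 14(n-1)/n$, comfortably under the stated threshold $27(n-1)/n$ --- and the uniform bound immediately upgrades the conclusion, as you note, to $F$ being a global homeomorphism in this parameter range, whereas Gershgorin only gives pointwise nonsingularity. One bookkeeping point worth making explicit if you write this up: the operator norm in which you sum block norms along a block row is the one induced by the mixed norm $\max_i \norm{x_i}_2$ on $\R^{2n}$; that norm is submultiplicative, so the Neumann-series step is legitimate.
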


Unfortunately, as with the standard \gta{} protocol, we cannot guarantee that connectivity is preserved by this adaptation.
Thus, in general we might not end up in a \neargathering, but in several clusters at distance greater than $1$, each of which can be seen as a \enquote{separate} \neargathering s.
However, if we start in a configuration for which \gta{} maintains connectivity (like highly regular meshes), we achieve \neargathering{} without increasing the symmetries.
It remains an open question to characterize such configurations more clearly (cf.~\cref{sec:conclusion}).

\section{Protocol 2: Preserving Symmetries via Contracting Waves}%
\label{sec:protocol:contractingwaves}
In this section, we will provide a symmetry preserving algorithm that yields \neargathering.
However, it only works on a subset of initial configurations and it needs a larger viewing range.

\subparagraph{Requirements of the algorithm.}
Remember the \conboundary\ of a configuration defined in \cref{sec:preliminaries}.
A \emph{$\delta$-hole} of a configuration is a circular area inside the \conboundary\ with a diameter of $\delta$ that contains no robot.

For our protocol we require that the swarm starts in a configuration that contains no $1$-hole and that has a convex\footnote{Note, that we do not consider it to be strictly convex. It may contain multiple collinear robots.} \conboundary.
The robots have a viewing range of $2 + \sqrt{2}$.

Usually, algorithms with limited visibility allow initial configurations with a connected unit disc graph.
Our requirements only allow a subset of these configurations.
However, the subset still allows a high variety of initial configurations with arbitrary low entropy.

\subparagraph{Overview.}
The robots on the \conboundary{} (we call them \emph{boundary-robots}) will perform the \egtmlong\ algorithm, where robots move towards the midpoint between their two neighbors.
The advantage of this algorithm is that it is invertible (and therefore connectivity preserving) and a gathering algorithm.
The robots near to the boundary will move with the boundary-robots, one may get the impression they are being pushed to the inside by the boundary like a wave (we call them \emph{wave-robots}).
We will carefully construct the way wave-robots move such that this movement is invertible.
All other robots (called \emph{inner-robots}) do not move.
The boundary will contract, more and more inner-robots will become wave-robots and are further pushed inwards until a \neargathering\ is reached.

We split the description of the algorithm in three subsections.
In \Cref{ssec:boundary-algo} we define the boundary-robots and their algorithm.
We prove, that boundary-robots will remain a convex set during the execution of their algorithm (\Cref{lem:convex-egtm-stays-convex}) and that their algorithm is invertible (\Cref{lem:egtm-invertable}).
In \Cref{ssec:wave-algo} we define an area around the boundary-robots called \emph{Wave}.
All robots in this area are wave-robots.
We define their algorithm and prove that their movement is invertible assuming the Wave is known.
Both algorithms are combined in \Cref{ssec:main-algo}.
We depict the execution of one round in \cref{fig:execution-with-wave}.

\subsection{Boundary Algorithm}
\label{ssec:boundary-algo}

\newcommand{\boundary}[2]{\ensuremath{b^{#1}_{#2}}}
\begin{definition}[Boundary-Robots]
    \label{def:boundary-robot}
    Robots that are part of the \conboundary\ are called \emph{boundary-robots}.
    We denote the boundary robots in round $t$ by $\vec{b}^t = (\boundary{t}{0}, \cdots, \boundary{t}{k})$.
    We assume that robots are enumerated counterclockwise with $\boundary{t}{0}$ chosen arbitrarily.
\end{definition}

We define the following algorithm based on \gtmlong\ \cite{DBLP:conf/ifip10/DyniaKLH06} for boundary-robots.
Afterwards, we prove that it is invertible.

\begin{algorithm}[H]
    \caption{Boundary-algorithm: \egtmlong}
    \label{alg:egtm}
    $$\egtm(\boundary{t}{i}, \vec{b}^t) := \epsilon \cdot \frac{\boundary{t}{k-1} + \boundary{t}{k+1}}{2} + (1 - \epsilon) \cdot \boundary{t}{k}$$
\end{algorithm}
\begin{remark}
    The algorithm is in general not executable in our model.
    In general, robots cannot decide locally, whether they are boundary-robots.
    In \cref{lem:main-algo-executable}, we prove that it is executable in swarms meeting our requirements.
\end{remark}
\begin{remark}
    If not stated otherwise, we assume that $\boundary{t+1}{k} = \egtm(\boundary{t}{i}, \vec{b}^t)$.
    Theoretically, the robots on the \conboundary\ may change during the execution, depending on where other robots move.
    However, in \Cref{lem:alg-con-no-holes:conboundary-unchaged}, we prove that the set of boundary-robots does not change during the execution of our algorithm.
\end{remark}

\begin{lemma}[{name=, restate=[name=restated]lemEgtmInvertable}]
    \label{lem:egtm-invertable}
    If $\epsilon \in [0,0.5)$, \egtmlong\ is invertible for a global observer.
\end{lemma}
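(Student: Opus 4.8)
The plan is to observe that \egtm{} acts \emph{linearly} on the tuple of boundary positions, and to show that the associated matrix is invertible exactly when $\epsilon < \tfrac12$.

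First I would fix a round $t$, write $m \coloneqq k+1$ for the number of boundary robots, and read all indices modulo $m$. By \cref{alg:egtm}, robot $\boundary{t}{i}$ moves to $\boundary{t+1}{i} = (1-\epsilon)\,\boundary{t}{i} + \tfrac{\epsilon}{2}\,\boundary{t}{i-1} + \tfrac{\epsilon}{2}\,\boundary{t}{i+1}$. Treating the two spatial coordinates separately (both evolve under the same rule), this means the update of the entire boundary configuration $\vec{b}^t \in (\R^2)^m$ is
\[
\vec{b}^{t+1} \;=\; (C \otimes \id_2)\,\vec{b}^t,
\qquad
C \;=\; (1-\epsilon)\,\id_m + \tfrac{\epsilon}{2}\bigl(P + P^{-1}\bigr) \;=\; \circulant\bigl(1-\epsilon,\ \tfrac{\epsilon}{2},\ 0,\dots,0,\ \tfrac{\epsilon}{2}\bigr) \;\in\; \R^{m\times m},
\]
where $P$ is the cyclic shift permutation matrix. (For $m \le 2$ the two off-diagonal contributions fall into the same entry; this is harmless for what follows.)

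Next I would show that $C$ is invertible whenever $\epsilon \in [0,\tfrac12)$. The quickest route is strict diagonal dominance: in each row of $C$ the diagonal entry is $1-\epsilon$, while the off-diagonal entries are nonnegative and sum to $\epsilon$ (to $0$ if $m=1$), and $\epsilon < \tfrac12$ gives $1-\epsilon > \epsilon$; hence $C$ is invertible and $\det(C\otimes\id_2) = (\det C)^2 \neq 0$. (If one prefers, the circulant structure yields the eigenvalues explicitly as $\lambda_j = 1 - \epsilon\bigl(1-\cos(2\pi j/m)\bigr) \in [1-2\epsilon,\,1]$ for $j=0,\dots,m-1$, which are strictly positive for $\epsilon < \tfrac12$.) From this I would conclude that the boundary update is a linear bijection, so a global observer who sees $\vec{b}^{t+1}$ together with its cyclic order and knows $\epsilon$ recovers the previous configuration via $\vec{b}^t = (C^{-1}\otimes\id_2)\,\vec{b}^{t+1}$ — which is exactly the notion of invertibility used in \cref{thr:main-symmetries-math}. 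Consistency of this inversion across rounds (that the boundary set, hence $m$ and the cyclic order, does not change during the execution) is handled separately in \cref{lem:alg-con-no-holes:conboundary-unchaged}.

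I do not expect a serious obstacle. The only points needing a little care are the degenerate small cases $m\in\{1,2\}$ (covered by the parenthetical remark above), the fact that invertibility is asked here of the external observer's evolution function on boundary configurations rather than of a locally computable rule, and the observation that the threshold $\epsilon = \tfrac12$ is precisely where strict diagonal dominance — and with it this argument — breaks down, matching the hypothesis $\epsilon \in [0, 0.5)$.
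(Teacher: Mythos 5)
Your proposal is correct and follows essentially the same route as the paper: both recognize that \egtmlong{} is a linear map given by the circulant matrix with diagonal $1-\epsilon$ and off-diagonal weights $\tfrac{\epsilon}{2}$ on the cyclic neighbors, and both conclude invertibility for $\epsilon < \tfrac12$ from strict diagonal dominance (the paper invokes the Gershgorin circle theorem for this). Your additional remarks on the $C\otimes\id_2$ structure, the explicit circulant eigenvalues, and the small cases $m\in\{1,2\}$ are fine but not a different argument.
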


Note, that while \egtm\ is invertible it is only symmetry preserving when considering the neighborhood relation as part of the symmetry.

\begin{lemma}[{name=, restate=[name=restated]lemConvexEgtmStaysConvex}]
    \label{lem:convex-egtm-stays-convex}
    If $\vec{b}^t$ is convex, $\vec{b}^{t+1}$ is convex as well.
    Let $area(\vec{b}^t)$ denote the area enclosed by $\vec{b}^t$.
    Then, $area(\vec{b}^{t+1}) \subseteq area(\vec{b}^{t})$.
\end{lemma}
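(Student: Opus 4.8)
The plan is to treat \Cref{alg:egtm} as a smoothing operation on the polygon $\vec b^t$ of boundary-robots and show it cannot destroy convexity. The area statement then comes essentially for free, so I would handle it first. Since $\epsilon\in[0,\tfrac12)$, the update can be written $\boundary{t+1}{i} = \tfrac\epsilon2\,\boundary{t}{i-1} + (1-\epsilon)\,\boundary{t}{i} + \tfrac\epsilon2\,\boundary{t}{i+1}$, a convex combination (the coefficients are nonnegative and sum to $1$) of three consecutive vertices of $\vec b^t$. Hence each $\boundary{t+1}{i}$ lies in the triangle spanned by $\boundary{t}{i-1},\boundary{t}{i},\boundary{t}{i+1}$, which — as $\vec b^t$ is convex — is contained in $area(\vec b^t)$. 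Once we know $\vec b^{t+1}$ is convex, so that $area(\vec b^{t+1})$ equals the convex hull of its vertices, the inclusion $area(\vec b^{t+1})\subseteq area(\vec b^t)$ follows immediately. Thus everything reduces to showing that \emph{$\vec b^{t+1}$ is convex}.

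For convexity I would pass to edge vectors $e_i^t\coloneqq \boundary{t}{i+1}-\boundary{t}{i}$, for which the update reads $e_i^{t+1} = \tfrac\epsilon2 e_{i-1}^t + (1-\epsilon)e_i^t + \tfrac\epsilon2 e_{i+1}^t$; i.e., the cyclic sequence of edge vectors is convolved with the fixed symmetric kernel $\kappa=(\tfrac\epsilon2,\,1-\epsilon,\,\tfrac\epsilon2)$ — the same kernel on each coordinate, and it fixes constant sequences since its entries sum to $1$. A non-degenerate closed polygon is convex exactly when its edge directions, listed cyclically, rotate monotonically and make a single full turn; equivalently, for every $\phi\in\R^2$ the cyclic sequence $(\langle e_i,\phi\rangle)_i$ has at most two sign changes. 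Because $\phi$ acts linearly and coordinatewise, $\langle e_i^{t+1},\phi\rangle$ is the convolution of $(\langle e_i^t,\phi\rangle)_i$ with $\kappa$. The crucial algebraic fact is that $\epsilon<\tfrac12$ is exactly the condition under which the generating polynomial $\tfrac\epsilon2 x^2+(1-\epsilon)x+\tfrac\epsilon2$ has two real negative roots (discriminant $(1-\epsilon)^2-\epsilon^2 = 1-2\epsilon>0$); equivalently, the smoothing operator factors as a positive multiple of $(I+aS)(I+aS^{-1})$ with a cyclic shift $S$ and some $a\in(0,1)$, a product of elementary two-term averaging steps. Such operators are variation-diminishing: convolution with them never increases the number of (cyclic) sign changes. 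Hence $(\langle e_i^{t+1},\phi\rangle)_i$ again has at most two sign changes for every $\phi$, which — together with non-degeneracy — gives convexity of $\vec b^{t+1}$.

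Non-degeneracy of $\vec b^{t+1}$ is quick: the circulant $C=(1-\epsilon)I+\tfrac\epsilon2(S+S^{-1})$ has eigenvalues $1-\epsilon\bigl(1-\cos\tfrac{2\pi m}{k+1}\bigr)>0$ (for $\epsilon<\tfrac12$) and fixes constants, so it is invertible; if the image polygon were contained in a line, applying $C^{-1}$ would force $\vec b^t$ into that line, contradicting convexity (positive enclosed area) of $\vec b^t$.

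The main obstacle is the convexity step, and within it the variation-diminishing argument on the cycle: the classical theory of totally positive (Pólya-frequency) kernels is developed for sequences on $\Z$, so one has to transfer it to $\Z/(k+1)\Z$, or argue directly that each factor $I+aS$ with $a\in(0,1)$ does not create cyclic sign changes. A self-contained but more computational alternative avoids this machinery: expand $e_{i-1}^{t+1}\times e_i^{t+1}$ by bilinearity, write $e_a\times e_b=\|e_a\|\,\|e_b\|\sin(\theta_b-\theta_a)$ using the sorted edge angles $\theta_{i-2}\le\theta_{i-1}\le\theta_i\le\theta_{i+1}$ with $\theta_{i+1}-\theta_{i-2}<2\pi$ (all guaranteed by convexity of $\vec b^t$), and show the resulting sum of six terms is nonnegative; the dominant term has coefficient $(1-\epsilon)^2-\tfrac{\epsilon^2}{4}>0$, while the sign-indefinite terms coming from edges two or three apart are controlled by a short case analysis on the partial angle sums. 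Either route must additionally exclude a U-turn at a new vertex and confirm that the total turning is exactly $2\pi$ rather than a larger multiple of $2\pi$ — both of which are automatic from the "at most two sign changes for all $\phi$" characterization, or, alternatively, follow from a continuity argument as $\epsilon$ varies in $[0,\tfrac12)$ starting from the trivial case $\epsilon=0$.
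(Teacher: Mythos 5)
Your plan takes a genuinely different route from the paper. The paper's proof is elementary and local: for each vertex $\boundary{t}{k}$ it takes the chord $l$ through $\boundary{t}{k-1}$ and $\boundary{t}{k+1}$ and the parallel $l'$ shifted halfway towards $\boundary{t}{k}$, observes that $\epsilon<\tfrac12$ forces $\boundary{t+1}{k}$ strictly beyond $l'$ while convexity of $\vec b^t$ keeps $\boundary{t+1}{k-1}$ and $\boundary{t+1}{k+1}$ on the other side of $l'$, so every new corner is convex; the area inclusion follows because each $\boundary{t+1}{k}$ lies in the triangle $\boundary{t}{k-1}\boundary{t}{k}\boundary{t}{k+1}$. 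Your reduction of the area claim (new vertices are convex combinations of old ones, then take convex hulls) and your non-degeneracy argument via the circulant's eigenvalues are both correct, and your factorization $\epsilon=\tfrac{2r}{(1+r)^2}$, $C=\tfrac{\epsilon}{2r}(I+rS)(I+rS^{-1})$ with $r\in(0,1)$ is exactly right.

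The convexity step, however, is left open, and of your two proposed ways to close it only one is viable. The cyclic variation-diminishing property of a single factor $I+aS$ is in fact true and provable by an elementary counting argument (each maximal constant-sign arc of $(x_i)$ of length $\ell$ forces the sign of $y_i=x_i+ax_{i+1}$ on its first $\ell-1$ indices, and a gap of $g$ ambiguous indices between forced blocks can contribute at most $g$ cyclic sign changes, which is what $x$ already had there); better still, your own factorization yields a proof with no sign-change machinery at all: each normalized factor $v_i\mapsto\tfrac{1}{1+r}(v_i+rv_{i\pm1})$ places the new vertices on the edges of the current convex polygon in cyclic boundary order, i.e., it is a corner-cutting step, which preserves convexity and shrinks the enclosed area, so both assertions of the lemma follow at once. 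By contrast, the fallback ``short case analysis on the partial angle sums'' for the six-term expansion of $e_{i-1}^{t+1}\times e_i^{t+1}$ cannot work as a purely local argument: the term $\tfrac{\epsilon^2}{4}\,e_{i-2}\times e_{i+1}$ is negative whenever the three intermediate turning angles sum to more than $\pi$ (which convexity of $\vec b^t$ does not forbid), and its magnitude scales with the product of the two outer edge lengths, so on local angle and length data alone it can dominate every positive term. What rescues the inequality is the global closure constraint — the remaining edges must have directions in the residual angular range of length less than $\pi$, which bounds the product of the outer lengths with the ``excess turn'' by the lengths of the short intermediate edges — i.e., information your local case analysis does not account for.
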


The proofs on \Cref{lem:egtm-invertable,lem:convex-egtm-stays-convex} can be found in \Cref{ap:contracting-wave-proofs}.

\subsection{Wave-Algorithm}
\label{ssec:wave-algo}
In this subsection we will define the set of wave-robots and construct \Cref{alg:wave-algo}.
Throughout this section we will assume that $\vec{b}^t$ (\Cref{def:boundary-robot}) is equivalent to the positions of the \conboundary\ in round $t$.
We show later in \Cref{lem:alg-con-no-holes:conboundary-unchaged} that this is indeed the case.

In \Cref{lem:convex-egtm-stays-convex} we have shown, that the area enclosed by $\vec{b}^{t+1}$ is inside the area enclosed by $\vec{b}^t$.
The goal of the wave-algorithm is, to remove all inner robots that are outside of the area of $\vec{b}^{t+1}$ inside that area such that $\vec{b}^{t+1}$ is the \conboundary\ in round $t+1$.
We first define this area formally.
Because this process is similar to a wave front that pushes the robots inwards we use the terminology wave to describe it and call the area \emph{Wave}.

\begin{definition}[Wave]
    Let $area(\vec{b}^t)$ denote the area enclosed by $\vec{b}^t$.
    We define $\wave{t}$ as $area(\vec{b}^t) \setminus area(\vec{b}^{t+1})$. 
\end{definition}

\begin{definition}[Wave-robot]
    \label{def:wave-robot}
    We call robots in $\wave{t}$ and $\wave{t+1}$ at time $t$ \emph{wave-robots}.
\end{definition}

\begin{definition}[Wave-segment]
    We cut $\wave{t}$ by cutting from $\boundary{t}{k}$ to $\boundary{t+1}{k}$ for all $1 \leq k \leq n$.
    This leaves us with $n$ \emph{Wave-Segments}.
    We call the segment with corners $\boundary{t}{k}, \boundary{t}{k+1}, \boundary{t+1}{k}, \boundary{t+1}{k+1}$ the $k$-th wave-segment of $\wave{t}$ or $\waveseg{t}{k}$.
\end{definition}

We will design \Cref{alg:wave-algo} such that all robots from $\waveseg{t}{k}$ move into $\waveseg{t+1}{k}$.
The robots in $\waveseg{t+1}{k}$ move inside their segment as well, to prevent collisions with incoming robots.

\subparagraph{Preliminary Statements}
To use wave-segments as a base for our algorithm, we need to make sure that they partition the robots unambiguously. 
In the following we prove that segments do not overlap and are not twisted.
But there are configurations of $\vec{b}^t$ (in case of collinear robots in $\vec{b}^t$) where the quadrilateral is degenerated, i.e. partly without area or just a line without any area.
We prove that a segment in $\wave{t}$ will not become more degenerated in $\wave{t+1}$.

\begin{corollary}[{name=, restate=[name=restated]corWaveSegmentsNotOverlappingNotTwisted}]
    \label{lem:wave-segments-not-overlapping-not-twisted}
    Let $\vec{b}^t$ be a convex set of robots.
    The segments of $\wave{t}$ are not twisted quadrilaterals, i.e. two sided do not intersect on a single point, and do not overlap.
\end{corollary}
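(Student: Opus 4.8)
The plan is to reduce everything to a single wave-segment $\waveseg{t}{k}$ with corners $\boundary{t}{k}, \boundary{t}{k+1}, \boundary{t+1}{k+1}, \boundary{t+1}{k}$ and argue in two stages: first that each individual quadrilateral is non-twisted (i.e., convex or degenerate, but never self-intersecting), and second that distinct segments have disjoint interiors. For the first stage, recall from \Cref{lem:convex-egtm-stays-convex} that $\vec{b}^{t+1}$ is convex and its enclosed area is contained in that of $\vec{b}^{t}$. The segment $\waveseg{t}{k}$ is bounded by the boundary edge $[\boundary{t}{k},\boundary{t}{k+1}]$ on the outside, the boundary edge $[\boundary{t+1}{k},\boundary{t+1}{k+1}]$ on the inside, and the two cut segments $[\boundary{t}{k},\boundary{t+1}{k}]$ and $[\boundary{t}{k+1},\boundary{t+1}{k+1}]$ on the sides. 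A quadrilateral is twisted precisely when one pair of opposite sides crosses; here the only candidate crossing is between the two cut segments (the inner and outer edges cannot cross since $area(\vec{b}^{t+1}) \subseteq area(\vec{b}^{t})$ and both are convex boundaries). So I would show the two cut segments $[\boundary{t}{k},\boundary{t+1}{k}]$ and $[\boundary{t}{k+1},\boundary{t+1}{k+1}]$ do not intersect except possibly at shared endpoints in the degenerate collinear case.

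For the non-crossing of cut segments, the key observation is that $\boundary{t+1}{k} = \egtm(\boundary{t}{k},\vec{b}^t)$ lies on the segment $[\tfrac{\boundary{t}{k-1}+\boundary{t}{k+1}}{2},\, \boundary{t}{k}]$, i.e., $\boundary{t+1}{k}$ is a convex combination of $\boundary{t}{k}$ and the midpoint of its two neighbors. I would set up coordinates so that the edge $[\boundary{t}{k},\boundary{t}{k+1}]$ is on a reference line and use convexity of $\vec{b}^t$ to control the side on which $\boundary{t+1}{k}$ and $\boundary{t+1}{k+1}$ lie relative to the cut lines. Concretely: the displacement vector $\boundary{t+1}{k}-\boundary{t}{k}$ points into $area(\vec{b}^t)$ (by convexity, the midpoint of the neighbors lies weakly inside), and likewise for $k+1$. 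Parametrizing both cut segments and looking at the signed area (cross product) of the four corners in cyclic order, one checks this determinant is non-negative, which is exactly the statement that the quadrilateral is non-twisted; it vanishes exactly in the collinear/degenerate situations. The degeneracy-monotonicity claim — that a degenerate segment at time $t$ stays (at least as) degenerate at $t+1$ — follows because if $\boundary{t}{k-1},\boundary{t}{k},\boundary{t}{k+1}$ are collinear then $\boundary{t+1}{k}$ lies on that same line (the \egtm\ update is a convex combination of points on the line), so collinear runs on $\vec{b}^t$ induce collinear runs on $\vec{b}^{t+1}$, and the corresponding segment remains a sub-line with no area.

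For the second stage, non-overlapping of distinct segments, I would use that the cut segments $[\boundary{t}{k},\boundary{t+1}{k}]$ for consecutive $k$ emanate from consecutive boundary vertices and, by the same convexity argument, stay within the ``angular wedge'' at $\boundary{t}{k}$ between edges $[\boundary{t}{k-1},\boundary{t}{k}]$ and $[\boundary{t}{k},\boundary{t}{k+1}]$; since $\vec{b}^t$ is convex these wedges at distinct vertices do not overlap beyond shared boundary edges, and the annular region $\wave{t}=area(\vec{b}^t)\setminus area(\vec{b}^{t+1})$ is partitioned by the cut segments into exactly the $n$ pieces, one per boundary edge. Formally this is just the statement that the $n$ cut chords are pairwise non-crossing (immediate once each is inside its wedge), so they genuinely subdivide the annulus. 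The main obstacle I anticipate is the bookkeeping in the degenerate collinear cases: when several consecutive boundary robots are collinear, some wedges collapse to half-planes and some wave-segments become zero-area, and one must verify that the partition statement and the non-twistedness statement still hold in the appropriate weak (degenerate) sense — this is where \Cref{lem:convex-egtm-stays-convex}'s guarantee that convexity (non-strict) is preserved does the real work and needs to be invoked carefully rather than the strict-convexity intuition.
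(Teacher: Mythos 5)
There is a genuine gap: your argument never uses the bound $\epsilon<\nicefrac{1}{2}$, yet the statement is false without it. The only geometric fact you extract about the new corners is that $b^{t+1}_k$ lies on the segment from $b^t_k$ to the midpoint $\frac{b^t_{k-1}+b^t_{k+1}}{2}$, which holds for every $\epsilon\in[0,1]$. But take $\vec{b}^t$ to be an equilateral triangle and $\epsilon$ close to $1$: each $b^{t+1}_k$ then lies near the midpoint of the opposite side (past the centroid), and the two cut sides $[b^t_k,b^{t+1}_k]$ and $[b^t_{k+1},b^{t+1}_{k+1}]$ of $\waveseg{t}{k}$ cross transversally, so the segment is twisted and neighbouring segments overlap. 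Hence the signed-area check in your stage 1 cannot follow from the premises you state (moreover, \enquote{shoelace value non-negative iff non-twisted} is not a valid equivalence: a bowtie traversed in the given vertex order can have positive signed area, and a simple quadrilateral traversed clockwise has negative one). The same omission is fatal in stage 2: the interior-angle wedge at a vertex of a convex polygon is the intersection of only two supporting half-planes and therefore contains the entire polygon, so wedges at distinct vertices overlap massively; \enquote{each cut segment lies in its wedge} yields no non-crossing conclusion.

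What is missing is precisely the quantitative confinement the paper extracts from $\epsilon<\nicefrac{1}{2}$ via \Cref{lem:convex-egtm-stays-convex}: writing $b^{t+1}_k=(1-2\epsilon)\,b^t_k+\epsilon\,\frac{b^t_k+b^t_{k-1}}{2}+\epsilon\,\frac{b^t_k+b^t_{k+1}}{2}$ shows that $b^{t+1}_k$ lies in the triangle with vertices $b^t_k$, $\frac{b^t_k+b^t_{k-1}}{2}$, $\frac{b^t_k+b^t_{k+1}}{2}$, i.e., at most halfway along the median from $b^t_k$. These half-triangles at distinct vertices of the convex boundary have pairwise disjoint interiors (adjacent ones meet only in the common edge midpoint), and confining all corners of $\waveseg{t}{k}$ — and hence its cut sides — to the two half-triangles at $b^t_k$ and $b^t_{k+1}$ plus the edge between them is what makes both non-twistedness and non-overlap immediate; this is exactly the paper's short argument. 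To repair your proof, replace the unbounded wedges by these bounded triangles and let the factor $\epsilon<\nicefrac{1}{2}$ enter your determinant computation explicitly. A side remark: your claim that a degenerate segment stays at least as degenerate contradicts \Cref{lem:non-degenerated-stays-non-degenerated} (a partially degenerated segment becomes non-degenerated); it is not needed for the present corollary, but do not carry it into the next one.
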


\begin{corollary}[{name=, restate=[name=restated]corNonDegeneratedStaysNonDegenerated}]
    \label{lem:non-degenerated-stays-non-degenerated}
    We call a quadrilateral where all four corners are not collinear \emph{non-degenerated}, a \emph{partially degenerated} quadrilateral has 3 collinear corners and a \emph{fully degenerated} has 4 collinear corners.
    \begin{enumerate}
        \item If $\waveseg{t}{k}$ is a non-degenerated quadrilateral,
        $\waveseg{t+1}{k}$ is also a non-degenerated quadrilateral.
        \item If $\waveseg{t}{k}$ be a partially degenerated segment,
        $\waveseg{t+1}{k}$ is a non-degenerated segment.
    \end{enumerate}
\end{corollary}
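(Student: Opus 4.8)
The plan is to reduce the geometry of the wave-segments to the edge vectors of the boundary polygon and to track how these evolve under \egtmlong. For a convex tuple $\boundary{t}{0},\dots,\boundary{t}{k}$ write $u_j^t := \boundary{t}{j}-\boundary{t}{j-1}$ for the counterclockwise edge vectors and $c_j^t := u_j^t \times u_{j+1}^t$ (scalar cross product) for the discrete curvature at $\boundary{t}{j}$. By \Cref{lem:convex-egtm-stays-convex} we have $c_j^t \ge 0$, and $c_j^t = 0$ precisely when $\boundary{t}{j-1},\boundary{t}{j},\boundary{t}{j+1}$ are collinear. The structural heart of the argument — and the place where \Cref{lem:wave-segments-not-overlapping-not-twisted} and the geometry of the wave are used — is the claim that the degeneracy type of $\waveseg{t}{k}$ is governed entirely by the pair $(c_k^t, c_{k+1}^t)$: the segment is non-degenerate iff $c_k^t>0$ and $c_{k+1}^t>0$, partially degenerate iff exactly one of them vanishes, and fully degenerate iff both vanish. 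Flatness at the two outer corners $\boundary{t}{k},\boundary{t}{k+1}$ is immediately $c_k^t=0$, resp. $c_{k+1}^t=0$; what needs the non-twisted, non-overlapping structure (together with the fact that the inner side of $\waveseg{t}{k}$ lies on the convex polygon $\vec{b}^{t+1}$) is that neither inner corner $\boundary{t+1}{k},\boundary{t+1}{k+1}$ can be flat unless an outer corner already is.

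The second ingredient is the behaviour of the curvatures under one step. On edge vectors the rule reads $u_j^{t+1} = (1-\epsilon)\,u_j^t + \frac{\epsilon}{2}\bigl(u_{j-1}^t + u_{j+1}^t\bigr)$. Expanding $c_j^{t+1} = u_j^{t+1} \times u_{j+1}^{t+1}$ bilinearly and regrouping, every summand is a cross product of edge vectors of the convex polygon taken in counterclockwise order, hence nonnegative, while the coefficient of $c_j^t$ equals $(1-\epsilon)^2 - (\epsilon/2)^2$, which is strictly positive since $\epsilon \in [0,\frac{1}{2})$ (the range from \Cref{lem:egtm-invertable}). Two consequences follow: $c_j^t > 0$ implies $c_j^{t+1} > 0$, and, inspecting the remaining summands, $c_j^{t+1} = 0$ holds if and only if $c_{j-1}^t = c_j^t = c_{j+1}^t = 0$. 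In words, \egtmlong\ never creates a new collinear triple of consecutive boundary robots, and a positive curvature stays positive.

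Combining the two ingredients proves the corollary. If $\waveseg{t}{k}$ is non-degenerate then $c_k^t,c_{k+1}^t>0$, so $c_k^{t+1},c_{k+1}^{t+1}>0$, so $\waveseg{t+1}{k}$ is non-degenerate. If $\waveseg{t}{k}$ is partially degenerate, say $c_k^t=0$ and $c_{k+1}^t>0$ (the case $c_{k+1}^t=0$ is symmetric), then $c_{k+1}^{t+1}>0$ directly, and $c_k^{t+1}>0$ because $c_k^{t+1}=0$ would force $c_{k+1}^t=0$; hence again $\waveseg{t+1}{k}$ is non-degenerate. Here \Cref{lem:convex-egtm-stays-convex} is used once more so that the characterization of the first paragraph applies at time $t+1$ as well.

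I expect the first paragraph to be the main obstacle, namely showing that an inner corner of a wave-segment cannot be flat while both $c_k^t$ and $c_{k+1}^t$ are positive. This is the only place that genuinely uses the non-overlapping and non-twisted property of \Cref{lem:wave-segments-not-overlapping-not-twisted} and, implicitly, that $\epsilon$ is small enough that the one-step displacement of a boundary robot stays inside the local interior angle of $\vec{b}^t$; once this characterization is available, the curvature computation and the case distinction are routine.
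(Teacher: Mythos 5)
Your route (tracking the discrete curvatures $c_j^t=u_j^t\times u_{j+1}^t$ of the boundary polygon) is genuinely different from the paper's, which argues geometrically: part (1) is read off from the strict form of \Cref{lem:convex-egtm-stays-convex} (the moved vertex $\boundary{t+1}{k}$ lies strictly above the parallel $l'$ halfway towards $\boundary{t}{k}$, while its neighbours end up at or below $l'$), and part (2) from the observation that if $\boundary{t}{k},\boundary{t}{k+1},\boundary{t}{k+2}$ are collinear but $\boundary{t}{k-1}$ is not, one step moves $\boundary{t}{k}$ off that line while $\boundary{t}{k+1}$ stays on it. However, your proposal has a step that fails as written: in the bilinear expansion of $c_j^{t+1}$ you get, besides multiples of $c_{j-1}^t,c_j^t,c_{j+1}^t$, the cross products $u_{j-1}^t\times u_{j+1}^t$, $u_j^t\times u_{j+2}^t$ and $u_{j-1}^t\times u_{j+2}^t$ of \emph{non-adjacent} edges, and these are \emph{not} nonnegative for a convex polygon---they are nonnegative only when the cumulative turning between the two edges is at most $\pi$. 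For a triangular boundary, for instance, $u_j^t\times u_{j+2}^t=-c_{j-1}^t<0$, and sharp corners produce the same effect in larger polygons. Consequently neither ``$c_j^t>0\Rightarrow c_j^{t+1}>0$'' nor the equivalence ``$c_j^{t+1}=0\iff c_{j-1}^t=c_j^t=c_{j+1}^t=0$'' follows from your computation; both statements are in fact true, but you would need the paper's geometric argument (the $l$/$l'$ picture in the proofs of \Cref{lem:convex-egtm-stays-convex} and of part (2)) or a real quantitative bound on the negative terms.

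The second gap is the characterization you defer to the first paragraph, namely that the degeneracy type of $\waveseg{t}{k}$ is governed entirely by $(c_k^t,c_{k+1}^t)$. This is not only unproven, it is false under the literal definition in the statement (``three collinear corners''). Only collinearity with the outer side $\boundary{t}{k}\,\boundary{t}{k+1}$ is controlled by the two curvatures. For the triple $A=\boundary{t}{k}$, $D=\boundary{t+1}{k}$, $C=\boundary{t+1}{k+1}$ one has $D-A=\frac{\epsilon}{2}(u_{k+1}-u_k)$ and $C-A=(1-\frac{\epsilon}{2})u_{k+1}+\frac{\epsilon}{2}u_{k+2}$, so $(D-A)\times(C-A)=\frac{\epsilon}{2}\,(\frac{\epsilon}{2}c_{k+1}-(1-\frac{\epsilon}{2})c_k-\frac{\epsilon}{2}\,u_k\times u_{k+2})$; taking a short edge $u_k$, a small turn at $\boundary{t}{k}$, a unit edge $u_{k+1}$ and a near-$\pi$ turn at $\boundary{t}{k+1}$ (so that $u_k\times u_{k+2}<0$), one can tune $|u_k|$ so this vanishes while $c_k^t,c_{k+1}^t>0$: the segment then has three collinear corners although no three boundary robots are collinear, and your case analysis in the final paragraph does not cover it. (To be fair, the paper's own proof silently restricts attention to degeneracy caused by collinear boundary robots, which is the only case \Cref{lem:wave-algo-bijective} actually needs; but a complete proof along your lines would have to make that restriction explicit, or treat this diagonal collinearity separately, rather than rely on the claimed iff.)
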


\begin{corollary}[{name=, restate=[name=restated]corDistanceToBoundaryInSegment}]
    \label{lem:distance-to-boundary-in-segment}
    Robots in \waveseg{t}{k} and \waveseg{t+1}{k} have a distance of $\leq 1 + \nicefrac{\epsilon^2}{2}$ to \boundary{t}{k} and \boundary{t}{k+1}.
\end{corollary}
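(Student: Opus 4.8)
The plan is to bound, for any point $p$ lying inside $\waveseg{t}{k}$ or $\waveseg{t+1}{k}$, its distance to the two boundary corners $\boundary{t}{k}$ and $\boundary{t}{k+1}$. Since both $\waveseg{t}{k}$ and $\waveseg{t+1}{k}$ are (possibly degenerate) quadrilaterals with corners among $\boundary{t}{k},\boundary{t}{k+1},\boundary{t+1}{k},\boundary{t+1}{k+1}$, and since by \Cref{lem:wave-segments-not-overlapping-not-twisted} these are non-twisted, any interior point is a convex combination of the four corners. Hence it suffices to bound the distance of each of the four corners to $\boundary{t}{k}$ and to $\boundary{t}{k+1}$: the distance of a convex combination to a fixed point is at most the maximum of the distances of the combined points (by convexity of $\norm{\cdot - q}$). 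So the whole statement reduces to four elementary bounds.

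First I would record that $\boundary{t}{k}$ and $\boundary{t}{k+1}$ are adjacent boundary-robots, hence $\norm{\boundary{t}{k}-\boundary{t}{k+1}} \le 1$ (they are neighbors in the unit disc graph on the \conboundary). Next, by the definition of \egtm\ (\Cref{alg:egtm}), $\boundary{t+1}{k} = \epsilon \cdot \frac{\boundary{t}{k-1}+\boundary{t}{k+1}}{2} + (1-\epsilon)\boundary{t}{k}$, so
\begin{equation*}
\boundary{t+1}{k} - \boundary{t}{k} = \frac{\epsilon}{2}\bigl((\boundary{t}{k-1}-\boundary{t}{k}) + (\boundary{t}{k+1}-\boundary{t}{k})\bigr),
\end{equation*}
and since both $\norm{\boundary{t}{k-1}-\boundary{t}{k}} \le 1$ and $\norm{\boundary{t}{k+1}-\boundary{t}{k}} \le 1$, we get $\norm{\boundary{t+1}{k}-\boundary{t}{k}} \le \epsilon$. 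The same computation at index $k+1$ gives $\norm{\boundary{t+1}{k+1}-\boundary{t}{k+1}} \le \epsilon$. Combining these with the triangle inequality yields all four corner-to-corner distances: $\norm{\boundary{t}{k}-\boundary{t}{k}} = 0$, $\norm{\boundary{t}{k+1}-\boundary{t}{k}}\le 1$, $\norm{\boundary{t+1}{k}-\boundary{t}{k}}\le\epsilon$, and $\norm{\boundary{t+1}{k+1}-\boundary{t}{k}} \le \norm{\boundary{t+1}{k+1}-\boundary{t}{k+1}} + \norm{\boundary{t}{k+1}-\boundary{t}{k}} \le 1+\epsilon$, and symmetrically for distances to $\boundary{t}{k+1}$.

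The remaining question is how the claimed bound $1+\nicefrac{\epsilon^2}{2}$ — which is much tighter than the crude $1+\epsilon$ — emerges. I expect this is the main obstacle: the naive triangle-inequality estimate on $\boundary{t+1}{k+1}$ is too lossy, and one must instead exploit convexity of $\vec{b}^t$ more carefully. The point is that $\boundary{t+1}{k+1}$ does not move \emph{away} from the segment spanned by $\boundary{t}{k}$ and $\boundary{t}{k+1}$ in an arbitrary direction; because $\vec{b}^t$ is convex and the \egtm\ step moves each boundary-robot towards the midpoint of its neighbors, the displacement of $\boundary{t+1}{k+1}$ has a controlled component in the direction of $\boundary{t}{k}$. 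Quantitatively, writing $u = \boundary{t}{k+1}-\boundary{t}{k}$ (with $\norm{u}\le 1$) and using that the displacement $d := \boundary{t+1}{k+1} - \boundary{t}{k+1} = \tfrac{\epsilon}{2}\bigl((\boundary{t}{k}-\boundary{t}{k+1}) + (\boundary{t}{k+2}-\boundary{t}{k+1})\bigr)$ contains the term $-\tfrac{\epsilon}{2}u$ which points back towards $\boundary{t}{k}$, one computes
\begin{equation*}
\norm{\boundary{t+1}{k+1} - \boundary{t}{k}}^2 = \norm{u + d}^2 = \norm{u}^2 + 2\langle u, d\rangle + \norm{d}^2,
\end{equation*}
and the cross term $2\langle u,d\rangle = -\epsilon\norm{u}^2 + \epsilon\langle u, \boundary{t}{k+2}-\boundary{t}{k+1}\rangle$ is non-positive by convexity of the boundary polygon at the vertex $\boundary{t}{k+1}$ (the vector $\boundary{t}{k+2}-\boundary{t}{k+1}$ turns away from $u$ by the exterior-angle condition), so the leading corrections cancel favourably and only the second-order term $\norm{d}^2 \le \epsilon^2$ survives, giving $\norm{\boundary{t+1}{k+1}-\boundary{t}{k}}^2 \le \norm{u}^2 + \epsilon^2 \le 1 + \epsilon^2$, hence $\norm{\boundary{t+1}{k+1}-\boundary{t}{k}} \le \sqrt{1+\epsilon^2} \le 1 + \nicefrac{\epsilon^2}{2}$. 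The only delicate point is making the sign of the cross term rigorous in the degenerate (collinear) cases; there \Cref{lem:non-degenerated-stays-non-degenerated} and the explicit convexity of $\vec{b}^t$ from \Cref{lem:convex-egtm-stays-convex} should close the gap. Finally, once all four corner distances are bounded by $1 + \nicefrac{\epsilon^2}{2}$, the convex-combination argument from the first paragraph transfers the bound to every robot in $\waveseg{t}{k}$ and $\waveseg{t+1}{k}$.
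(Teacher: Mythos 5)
Your reduction to the four corners of each quadrilateral (via convexity of $\norm{\cdot-q}$) matches the paper's final step, but you have misidentified the corners of $\waveseg{t+1}{k}$, and this is a genuine gap. By the definition of wave-segments, $\waveseg{t+1}{k}$ is the $k$-th segment of $\wave{t+1}$, so its corners are $\boundary{t+1}{k},\boundary{t+1}{k+1},\boundary{t+2}{k},\boundary{t+2}{k+1}$ — not a subset of $\{\boundary{t}{k},\boundary{t}{k+1},\boundary{t+1}{k},\boundary{t+1}{k+1}\}$ as your first paragraph assumes. Consequently you never bound the distances from $\boundary{t}{k}$ (or $\boundary{t}{k+1}$) to the time-$(t+2)$ corners, and these are precisely where the bound $1+\nicefrac{\epsilon^2}{2}$ comes from: the paper expands two \egtm{} steps, e.g. $\boundary{t+2}{k+1}=(1-\epsilon)\boundary{t+1}{k+1}+\frac{\epsilon}{2}\boundary{t+1}{k}+\frac{\epsilon}{2}\boundary{t+1}{k+2}$, and estimates the resulting terms using $\norm{\boundary{t}{k}-\boundary{t}{k+j}}\le j$ for $j\le 3$, arriving at $\norm{\boundary{t}{k}-\boundary{t+2}{k+1}}\le 1+\frac{\epsilon^2}{2}$. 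For the corner you do treat, $\boundary{t+1}{k+1}$, no quadratic correction is needed at all: writing $\boundary{t}{k}-\boundary{t+1}{k+1}=\frac{\epsilon}{2}(\boundary{t}{k}-\boundary{t}{k+2})+(1-\epsilon)(\boundary{t}{k}-\boundary{t}{k+1})$ gives $\le \epsilon+(1-\epsilon)=1$ directly, which is how the paper handles it.

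A secondary problem is your sign argument for the cross term. Convexity of $\vec{b}^t$ does not make $2\langle u,d\rangle=-\epsilon\norm{u}^2+\epsilon\langle u,\boundary{t}{k+2}-\boundary{t}{k+1}\rangle$ non-positive: the boundary may contain collinear robots (explicitly allowed), and if $\boundary{t}{k+2}-\boundary{t}{k+1}$ is parallel to $u$ with $\norm{\boundary{t}{k+2}-\boundary{t}{k+1}}>\norm{u}$ (say $\norm{u}=\tfrac12$, the next edge of length $1$), the cross term is strictly positive. Your final inequality for that corner can be salvaged by Cauchy--Schwarz ($\langle u,v\rangle\le\norm{u}\le 1$) without any convexity input, but as noted above the simpler linear estimate already yields the sharper bound $1$ there; the real work of the corollary lies in the $t+2$ corners you omitted.
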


The proofs of \Cref{lem:wave-segments-not-overlapping-not-twisted,lem:non-degenerated-stays-non-degenerated,lem:distance-to-boundary-in-segment} can be found in \Cref{ap:contracting-wave-proofs}.

\subparagraph{Definition of the Algorithm}

The algorithm uses a bijective mapping from the area $\waveseg{t}{k} \cup \waveseg{t+1}{k}$ onto $\waveseg{t+1}{k}$.
For this mapping we define a normalized two-dimensional coordinate system inside each segment from ranging from $(0,0)$ to $(1,1)$ (\Cref{def:delta-line}).
In rectangular segments these are simple vertical and horizontal lines and the $x$- and $y$ unit-distance is scaled accordingly, for other shapes the lines are adjusted to follow the boundaries of the shape.
\Cref{alg:wave-algo} is a simple mapping between the areas of the segments based on the normalized coordinates.
Wave robots from \waveseg{t}{k} move inside the outer half of \waveseg{t+1}{k} while the robots inside \waveseg{t+1}{k} move into the inner half of \waveseg{i+1}{k}.
See \Cref{fig:execution-with-wave} for an example.
In the end we prove that the wave algorithm is invertible (\Cref{lem:wave-algo-bijective}).

\begin{definition}[normalized coordinate-system inside a quadrilateral]
    \label{def:delta-line}
    Let $A,B,C,D$ be the corners of the quadrilateral, we denote the line-segments between the corners by $\overline{AB}$.
    For $x \in [0,1]$ we define $(x, 0) := A + x \cdot (D - A)$ and $(x, 1) := B + x \cdot (C - B)$ (in particular $(0,0) = A; (1,0) = B; (1,1) = C$ and $(0,1) = D$).
    If the quadrilateral is convex, for $y \in [0,1]$ we equally distribute $(x,y)$ on the straight line between $(x,0)$ and $(x,1)$.
    For non-convex quadrilaterals we assume w.l.o.g. that $C$ is the concave corner.
    For $x \in [0,1]$ we connect $(x,0)$ and $(x,1)$ with a parallel to $\overline{AB}$ starting at $(x,0)$ and a parallel to $\overline{CD}$ starting at $(x,1)$.
    For $y \in [0,1]$ we equally distribute $(x,y)$ on this connection.
    See \Cref{fig:wave-coordinate-sys} for an example.

    For wave-segment $\waveseg{t}{k}$ we define $A = \boundary{t}{k}, B = \boundary{t}{k+1}, C = \boundary{t+1}{k+1}$ and $D = \boundary{t+1}{k}$.
    To denote the position $(x,y)$ inside $\waveseg{t}{k}$ we use the notation $\waveseg{t}{k}(x,y)$.
\end{definition}

\begin{algorithm}[H]
    \caption{Wave-Algorithm}
    \label{alg:wave-algo}
    This algorithm gets the positions of boundary-robots $\vec{b}^t$ as input.
    It is used to compute \wave{t} and \wave{t+1}. 
    The algorithm then determines whether $z^t_i$ is in $\waveseg{t}{k}$ or $\waveseg{t+1}{k}$ for some $k$ and the coordinates $x$ and $y$ according to \cref{def:delta-line}.
    \[
    \wavealgo(z^t_i, \vec{b}^t)
    =
\begin{cases}
    \waveseg{t+1}{k}(x/2,y),& \text{if } z^t_i = \waveseg{t}{k}(x,y)\\
    \waveseg{t+1}{k}(1/2 + x/2,y),& \text{if } z^t_i = \waveseg{t+1}{k}(x,y)
\end{cases}
\]
\end{algorithm}

\begin{lemma}
    \label{lem:wave-algo-bijective}
    Assuming $\vec{b}^t$ and the wave-robots in round $t$ are fixed and known.
    After executing \Cref{alg:wave-algo} with all wave-robots, we can compute the positions of the wave-robots in round $t$.
\end{lemma}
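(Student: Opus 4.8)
The plan is to show that the global observer can undo \Cref{alg:wave-algo} one wave-segment at a time. First it reconstructs the geometry it needs: applying \Cref{alg:egtm} to $\vec{b}^t$ produces $\vec{b}^{t+1}$, and a second application produces $\vec{b}^{t+2}$, so the observer knows $\wave{t}$, $\wave{t+1}$, and all wave-segments $\waveseg{t}{k}$ and $\waveseg{t+1}{k}$ (the inner corners of the latter lie on $\vec{b}^{t+2}$). Since $\vec{b}^t$ is convex and, by \Cref{lem:convex-egtm-stays-convex}, so are $\vec{b}^{t+1}$ and $\vec{b}^{t+2}$, \Cref{lem:wave-segments-not-overlapping-not-twisted} applies to both waves: for $w \in \{t, t+1\}$ the quadrilaterals $\waveseg{w}{k}$ are non-twisted and pairwise overlap at most along the cut edges $\overline{\boundary{w}{k}\,\boundary{w+1}{k}}$, so they partition $\wave{w}$ up to those edges; in particular the target quadrilaterals $\waveseg{t+1}{k}$ are essentially disjoint.

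The key ingredient is that the normalized coordinate system of \Cref{def:delta-line} is an \emph{invertible} bijection $[0,1]^2 \to Q$ for every quadrilateral $Q$ that occurs: from a point $p \in Q$ one can compute the unique $(x,y)$ with $Q(x,y) = p$. For a non-degenerate convex $Q$ the map is the standard bilinear interpolation, whose inverse reduces to a quadratic equation with a unique root in $[0,1]^2$; for a non-convex $Q$ one inverts the two-stage construction of \Cref{def:delta-line} by first identifying the fixed-$x$ fiber (a polyline with one leg parallel to $\overline{AB}$ and one parallel to $\overline{CD}$) through $p$ and then reading off $y$ from the position of $p$ along that fiber; the partially degenerate case follows as a limit. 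Here \Cref{lem:non-degenerated-stays-non-degenerated} is essential: the target segments $\waveseg{t+1}{k}$ are always at worst non-degenerate, so the observed round-$(t+1)$ positions lie in honest quadrilaterals on which the coordinate map is a genuine bijection. The only segments on which the coordinate map can fail to be injective are fully degenerate $\waveseg{t}{k}$; such a segment is necessarily the boundary edge $\overline{\boundary{t}{k}\,\boundary{t}{k+1}}$ itself (\Cref{alg:egtm} leaves the interior robots of a collinear boundary stretch fixed), hence has empty interior and contains no wave-robot, so it can be ignored.

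Given all this, the inversion is direct. For the observed round-$(t+1)$ position $p$ of a wave-robot, the observer picks the segment index $k$ with $p \in \waveseg{t+1}{k}$ (unique up to cut edges) and computes the coordinates $(x', y')$ with $p = \waveseg{t+1}{k}(x', y')$; comparing with the two cases of \Cref{alg:wave-algo}, the round-$t$ position of that robot is $\waveseg{t}{k}(2x', y')$ if $x' \le \tfrac12$ and $\waveseg{t+1}{k}(2x' - 1, y')$ if $x' \ge \tfrac12$. It remains to check that this recipe is globally well defined. The images of the two cases meet only on the midline $x'' = \tfrac12$ of $\waveseg{t+1}{k}$, where $\waveseg{t}{k}(1, y') = \waveseg{t+1}{k}(0, y')$ is the same point of the boundary edge $\overline{\boundary{t+1}{k}\,\boundary{t+1}{k+1}}$, so both rules agree; and if $p$ lies on a cut edge shared by two adjacent target segments, the two candidate pre-images coincide because that edge carries the same parametrization in both segments. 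I expect the main obstacle to be exactly this bookkeeping---verifying that \Cref{def:delta-line} really defines a bijection on the (possibly non-convex, possibly degenerate) wave-quadrilaterals and that the per-segment inverses patch together into one consistent map; inverting the interpolation on a single fixed quadrilateral is routine by comparison.
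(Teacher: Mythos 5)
Your proposal is correct and follows essentially the same route as the paper's own proof: the non-overlapping, non-twisted segments (\Cref{lem:wave-segments-not-overlapping-not-twisted}) make the segment assignment unambiguous, non-degeneracy of the target segments (\Cref{lem:non-degenerated-stays-non-degenerated}) makes the coordinate map of \Cref{def:delta-line} invertible where it matters, and the degenerate segments are dealt with as an edge case. The only real difference is in that edge case -- the paper argues that a fully degenerate target forces a fully degenerate source so the map is still bijective on a line, while you argue fully degenerate segments lie on the boundary line and hence contain no wave-robots -- and note that your blanket claims that the targets $\waveseg{t+1}{k}$ are \emph{always} at worst non-degenerate and that only fully degenerate segments have non-injective coordinate maps slightly overstate \Cref{lem:non-degenerated-stays-non-degenerated} (a partially degenerate segment, where two corners coincide, is also non-injective, and a fully degenerate source need not yield a fully degenerate target), though these slips do not affect the inversion recipe.
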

\begin{proof}
    All robots from $\waveseg{t}{k}$ and $\waveseg{t+1}{k}$ moved inside $\waveseg{t+1}{k}$ for $0 \leq k < |\vec{b}^t|$.
    The segments do not overlap (\Cref{lem:wave-segments-not-overlapping-not-twisted}), therefore the movement is unambiguous.
    If $\waveseg{t}{k}$ is non-degenerated, then $\waveseg{t+1}{k}$ is also non-degenerated (\Cref{lem:non-degenerated-stays-non-degenerated}).
    In both segments, the coordinates are unambiguous.
    Therefore, \Cref{def:delta-line} yields a bijective mapping.

    If $\waveseg{t}{k}$ is partly degenerated, there may exist $(x,y) \neq (x',y')$ with $\waveseg{t}{k}(x,y) = \waveseg{t}{k}(x',y')$.
    But $\waveseg{t+1}{k}$ is non-degenerated (\Cref{lem:non-degenerated-stays-non-degenerated}), therefore no two robots move onto the same positions in $\waveseg{t+1}{k}$.
    Therefore, the origins for all robots in unambiguous.
    If $\waveseg{t+1}{k}(x,y)$ is fully-degenerated (is only a line), it follows from \Cref{lem:non-degenerated-stays-non-degenerated} that $\waveseg{t}{k}(x,y)$ must be fully-degenerated as well.
    But because both segments are only a line without area in this case, the mapping is bijective.
\end{proof}

\subsection{Main Algorithm}
\label{ssec:main-algo}
We first define \emph{inner-robots}, that are the remaining robots beside boundary- and wave-robots.
Afterwards we state the main algorithm formally and prove its correctness as well as that it is symmetry preserving.

\begin{definition}[inner-robots]
    \label{def:inner-robot}
    We call robots in $\wave{k}, k > t+1$ at time $t$ \emph{inner-robots}.
\end{definition}

\begin{algorithm}
    \caption{Main Algorithm}
    \label{alg:near-gath-convex-no-holes}

    Based on $\z^t$ the positions of boundary-robots $\vec{b}^t$ can be observed.
    $\vec{b}^t$ is used to compute \wave{i} and \wave{i+1} to determine wave-robots and inner-robots.
    \[
        f(z^t_i, \z^t)= 
    \begin{cases}
        \egtm(z^t_i, \vec{b}^t)& \text{if } z^t_i \text{ is a boundary-robot (\Cref{def:boundary-robot})},\\
        \wavealgo(z^t_i, \vec{b}^t)& \text{if } z^t_i \text{ is a wave-robot (\Cref{def:wave-robot})},\\
        z^t_i& \text{if } z^t_i \text{ is an inner-robot (\Cref{def:inner-robot})}.
    \end{cases}
    \]
\end{algorithm}
\begin{remark}
    To execute \Cref{alg:near-gath-convex-no-holes} as it is written, robots must observe $\vec{b}^t$.
    They are not able to observe $\vec{b}^t$ fully because of the limited visibility in out model.
    But we prove in \Cref{lem:main-algo-executable} that they are able to observe the locally relevant part of $\vec{b}^t$ to compute \Cref{alg:near-gath-convex-no-holes} locally.
\end{remark}

\begin{figure}
    \begin{subfigure}{0.65\linewidth}
        \includegraphics[page=2]{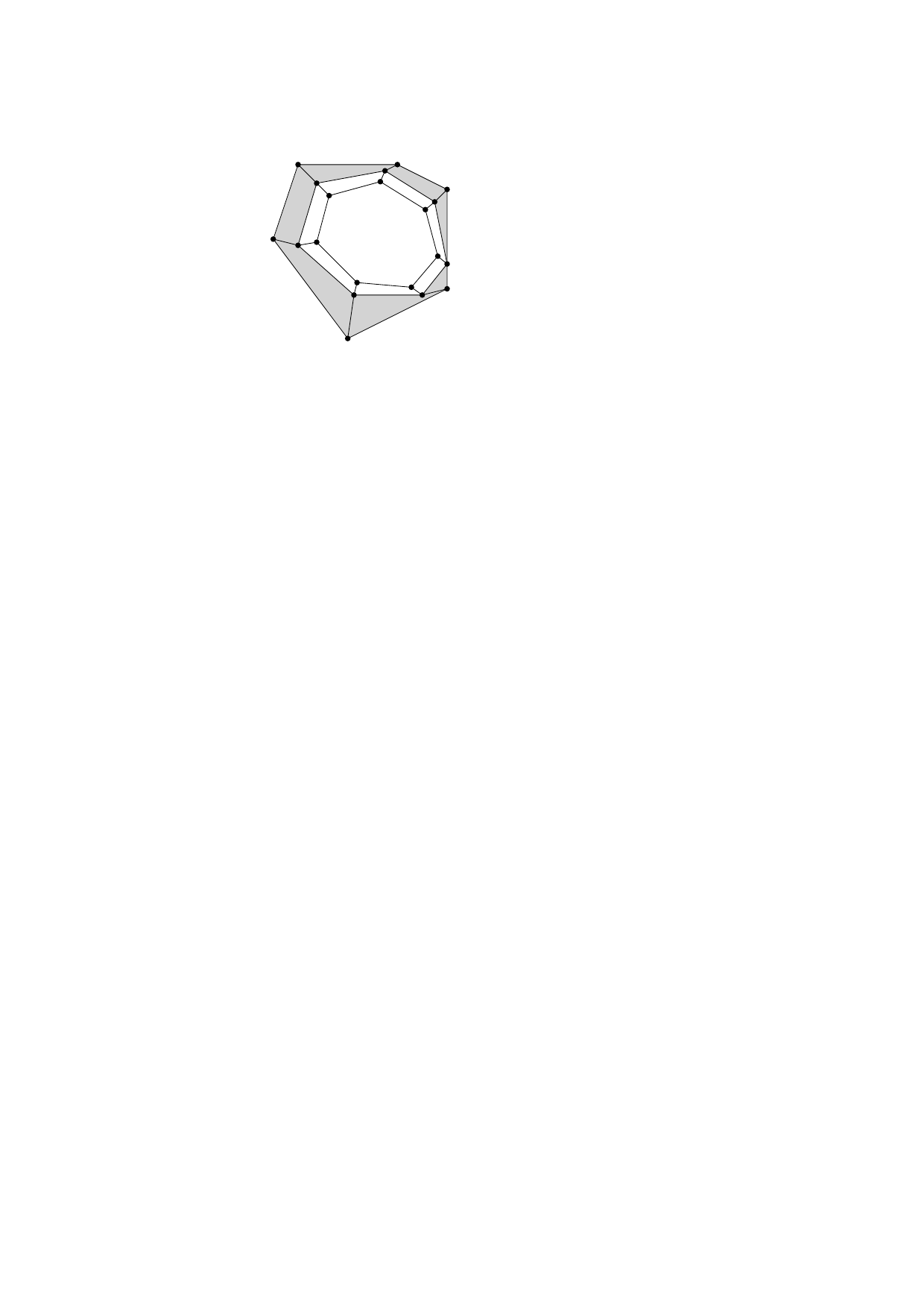}
        \hfill
        \includegraphics[page=3]{figures/wave.pdf}
        \label{fig:execution-with-wave}
        \caption{
            Execution of \Cref{alg:near-gath-convex-no-holes} one round. 
            $\bullet$ are boundary-robots, $\square$ are wave-robots and $\times$ are inner-robots in round $t$. 
            \wave{t} (gray) and \wave{t+1} are partitioned in segments.
        }
    \end{subfigure}
    \hfill
    \begin{subfigure}{0.3\linewidth}
        \includegraphics[page=6,width=\linewidth]{figures/wave.pdf}

        \caption{Shows a grid inside a segment according to \Cref{def:delta-line}.}
        \label{fig:wave-coordinate-sys}
    \end{subfigure}
\end{figure}

\begin{lemma}
    \label{lem:alg-con-no-holes:conboundary-unchaged}
    Let $\vec{b}^t$ be convex and the \conboundary\ of a swarm without $2.24$-holes.
    Let $\vec{b}^{t+1}$ be the \conboundary\ after executing \Cref{alg:near-gath-convex-no-holes} one round and let $\egtm(\vec{b}^t)$ denote the positions of the boundary-robots in round $t$ after executing the algorithm.
    $\vec{b}^{t+1} = \egtm(\vec{b}^t)$.
\end{lemma}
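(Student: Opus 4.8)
The plan is to show that after one round of \Cref{alg:near-gath-convex-no-holes}, the new \conboundary{} consists exactly of the images of the old boundary-robots under \egtm, i.e.\ $\vec{b}^{t+1} = \egtm(\vec{b}^t)$. There are two directions: (i) every robot in $\egtm(\vec{b}^t)$ lies on the new \conboundary, and (ii) no other robot lies on the new \conboundary. First I would recall from \Cref{lem:convex-egtm-stays-convex} that $\egtm(\vec{b}^t)$ is again convex and encloses an area $\mathrm{area}(\vec{b}^{t+1}) \subseteq \mathrm{area}(\vec{b}^t)$ — here I write $\vec{b}^{t+1}$ provisionally for $\egtm(\vec{b}^t)$ and the goal is to justify this notation. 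The key structural fact is that after the move, every robot lies inside $\mathrm{area}(\egtm(\vec{b}^t))$: boundary-robots move to its vertices by definition; wave-robots, by construction of \Cref{alg:wave-algo}, move from $\waveseg{t}{k}\cup\waveseg{t+1}{k}$ into $\waveseg{t+1}{k}\subseteq \wave{t+1} \subseteq \mathrm{area}(\vec{b}^{t+1})$; and inner-robots do not move but, being in $\wave{k}$ for $k>t+1$, already lie strictly inside $\mathrm{area}(\vec{b}^{t+1})$. Hence the convex polygon spanned by $\egtm(\vec{b}^t)$ contains all robots, so the outer polygon enclosed by the unit disc graph — the \conboundary{} — can only be "cut off" at the vertices $\egtm(\vec{b}^t)$, giving direction~(i): every $\boundary{t+1}{k}$ is on the \conboundary, provided consecutive images $\boundary{t+1}{k}, \boundary{t+1}{k+1}$ are still within distance $1$.

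The crucial quantitative step, and the main obstacle, is precisely this last proviso together with the converse inclusion. I would argue as follows. Since $\egtm$ moves each boundary-robot an $\epsilon$-fraction toward the midpoint of its neighbours, and the original boundary edges have length $\le 1$, the new edge lengths satisfy $\|\boundary{t+1}{k} - \boundary{t+1}{k+1}\| \le 1$ (a short computation: $\boundary{t+1}{k+1}-\boundary{t+1}{k}$ is a convex combination of the old edge vectors $\boundary{t}{k+1}-\boundary{t}{k}$ and the two adjacent ones, by convexity each of norm $\le 1$, and the contraction factor keeps it $\le 1$; this is essentially the content used in \Cref{lem:convex-egtm-stays-convex}). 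Thus the new boundary polygon is itself a valid cycle in the unit disc graph. For the converse — that \emph{only} these robots are on the \conboundary{} — I must rule out that some wave-robot lands on the new outer polygon. Here the no-$2.24$-hole hypothesis enters: because there is no large empty disc near the boundary, for each new edge $\overline{\boundary{t+1}{k}\,\boundary{t+1}{k+1}}$ there are robots on both "sides" within the relevant strip, so the unit-disc-graph polygon does not bulge inward past this edge nor does a wave-robot become extremal; more carefully, any wave-robot in $\waveseg{t+1}{k}$ is at distance $\le 1$ from both $\boundary{t+1}{k}$ and $\boundary{t+1}{k+1}$ (a variant of \Cref{lem:distance-to-boundary-in-segment} applied after the move — the segment has shrunk, so the bound improves), hence it lies in the "interior" region cut off by these two boundary-robots and cannot be a boundary vertex itself. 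Combining, the \conboundary{} in round $t+1$ is exactly the convex polygon on $\egtm(\vec{b}^t)$.

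I expect the delicate part to be making the \conboundary{} argument rigorous in the presence of degenerate (collinear) configurations — when $\vec{b}^t$ has collinear robots, some new edges may have length exactly $1$ or the polygon may be "flat" along a side — and in ensuring that no robot from a neighbouring segment $\waveseg{t+1}{k\pm 1}$ sneaks onto edge $k$ of the new boundary. I would handle the former by invoking \Cref{lem:non-degenerated-stays-non-degenerated} (degeneracy can only decrease), so after one step the interior of each segment is genuinely two-dimensional and the collinear boundary-robots still appear as (possibly non-extreme but still present) boundary-robots along a straight side; and the latter by the disjointness of wave-segments (\Cref{lem:wave-segments-not-overlapping-not-twisted}) together with the distance bound above, so each wave-robot is "assigned" to a unique segment whose two corner boundary-robots dominate it. This establishes $\vec{b}^{t+1} = \egtm(\vec{b}^t)$, justifying a posteriori the notational convention from the second remark after \Cref{alg:egtm}.
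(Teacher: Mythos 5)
Your overall route coincides with the paper's: use \Cref{lem:convex-egtm-stays-convex} to get that $\egtm(\vec{b}^t)$ is convex and encloses a subset of $area(\vec{b}^t)$, check that consecutive positions of $\egtm(\vec{b}^t)$ are at distance $\leq 1$ (the paper merely asserts this; your convex-combination computation $(1-\epsilon)e_k+\tfrac{\epsilon}{2}(e_{k-1}+e_{k+1})$ with edge vectors of norm $\leq 1$ is a correct way to fill that in), and verify that boundary-, wave- and inner-robots all end up inside $area(\egtm(\vec{b}^t))$, so this convex cycle of the unit disc graph is the new \conboundary{} provided no other robot lies on it.

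The gap is in that last, converse step. Your stated justification does not establish that no wave-robot lands on the new boundary: a robot at distance $\leq 1$ from both $\boundary{t+1}{k}$ and $\boundary{t+1}{k+1}$ can perfectly well lie \emph{on} the segment between them (every point of that edge satisfies the bound), so the distance estimate in the spirit of \Cref{lem:distance-to-boundary-in-segment} cannot exclude it from the \conboundary; and the no-$2.24$-hole hypothesis is not the right tool either --- the paper's proof of this lemma does not use it at all (it is needed for executability in \Cref{lem:main-algo-executable}, and holes are controlled separately in \Cref{lem:alg-con-no-holes:no-holes-created}). What the paper uses instead is a structural property of \Cref{alg:wave-algo} itself: the mapping $\waveseg{t}{k}(x,y)\mapsto\waveseg{t+1}{k}(x/2,y)$ (resp.\ $\waveseg{t+1}{k}(x,y)\mapsto\waveseg{t+1}{k}(1/2+x/2,y)$) preserves the coordinate $y$, so only robots at coordinates $\waveseg{t}{k}(x,0)$ are sent onto coordinates $\waveseg{t+1}{k}(x,0)$, i.e., onto the edge of $\egtm(\vec{b}^t)$, and positions $\waveseg{t}{k}(x,0)$ are by definition occupied only by boundary-robots. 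You already observe that wave-robots are mapped into $\waveseg{t+1}{k}$; making this coordinate-preservation explicit closes the converse direction cleanly and without invoking any extra hypotheses, whereas as written your argument would not survive the case of a wave-robot sitting exactly on (or being mapped onto) the new boundary edge.
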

\begin{proof}
    From \Cref{lem:convex-egtm-stays-convex} we know, that $\egtm(\vec{b}^t)$ is convex.
    Neighboring positions in $\egtm(\vec{b}^t)$ have distance $\leq 1$.
    Therefore, $\egtm(\vec{b}^t)$ would be the \conboundary\ of the new configuration, if all other robots are within $area(\egtm(\vec{b}^t))$.
    We defined $\wave{t}$ as $area(\vec{b}^t) \setminus area(\egtm(\vec{b}^t))$.
    \Cref{alg:wave-algo} is designed such that all robots from $\wave{t}$ move into $\wave{t+1}$, therefore moving inside $area(\egtm(\vec{b}^t))$.
    Only robots from coordinates $\waveseg{t}{k}(x,0)$ move onto coordinates $\waveseg{t+1}{k}(x,0)$ (see \Cref{alg:wave-algo}).
    On coordinates $\waveseg{t}{k}(x,0)$ are by definition only boundary-robots.
    Therefore, $\egtm(\vec{b}^t)$ is the \conboundary\ of the swarm in round $t+1$.
\end{proof}

\begin{lemma}
    \label{lem:main-algo-executable}
    In a configuration with a convex \conboundary\ and no $2.24$-holes, \Cref{alg:near-gath-convex-no-holes} \textbf{is executable} for \oblot/-robots with a viewing range of $2+\sqrt{2}$.
\end{lemma}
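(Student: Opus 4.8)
The plan is to show that every robot can compute its role (boundary-, wave-, or inner-robot) and the locally relevant part of $\vec{b}^t$ needed to evaluate the corresponding branch of \Cref{alg:near-gath-convex-no-holes}, using only what it sees within distance $2+\sqrt{2}$. The natural structure is to fix an arbitrary robot $z^t_i$ and reason by cases on its true role, in each case exhibiting an explicit local decision rule and arguing its correctness from the two structural hypotheses (convex \conboundary, no $2.24$-hole). The no-hole assumption is what forces enough ``witnesses'' into every ball of the relevant radius, so that a robot can always certify its situation; convexity is what makes the local view of the boundary polyline unambiguous (no far-away part of the boundary can fold back into view and be confused with the nearby part).

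First I would handle the boundary-robots. A boundary-robot $\boundary{t}{k}$ needs its two boundary-neighbors $\boundary{t}{k-1},\boundary{t}{k+1}$ to run \egtm, and these are at distance $\le 1$ (they are \conboundary\ edges), hence well within range. The key claim is that $\boundary{t}{k}$ can \emph{recognize} that it is on the boundary and correctly identify $\boundary{t}{k-1},\boundary{t}{k+1}$: by convexity, all robots lie in a half-plane through $\boundary{t}{k}$, so $\boundary{t}{k}$ sees an empty half-plane locally; the no-$1$-hole condition (which follows from no $2.24$-hole) guarantees that any robot \emph{not} on the boundary, by contrast, has robots on ``all sides'' within distance $1$, so the two situations are locally distinguishable. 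One must check that the empty-halfplane test is not fooled by the viewing-range cutoff — here the no-hole condition again supplies, for an interior robot, a witness within distance $1<2+\sqrt2$ on the ``missing'' side. Using \Cref{lem:distance-to-boundary-in-segment}, a boundary-robot's relevant wave-segment corners $\boundary{t+1}{k},\boundary{t+1}{k+1}$ are functions of $\boundary{t}{k-1},\boundary{t}{k},\boundary{t}{k+1},\boundary{t}{k+2}$, all within range, so the \egtm\ branch is locally computable.

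Next, wave- and inner-robots. A robot in $\waveseg{t}{k}\cup\waveseg{t+1}{k}$ has distance $\le 1+\nicefrac{\epsilon^2}{2}$ to both $\boundary{t}{k}$ and $\boundary{t}{k+1}$ by \Cref{lem:distance-to-boundary-in-segment}; from these two boundary-robots and their boundary-neighbors (a chain of a few robots, each link of length $\le1$, hence total reach well under $2+\sqrt2$) the robot reconstructs the corners $\boundary{t}{k},\boundary{t}{k+1},\boundary{t+1}{k},\boundary{t+1}{k+1}$, builds the normalized coordinates of \Cref{def:delta-line}, determines which of $\waveseg{t}{k},\waveseg{t+1}{k}$ it lies in (using \Cref{lem:wave-segments-not-overlapping-not-twisted} so this is unambiguous), and applies \wavealgo. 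Crucially, it can also \emph{tell} it is a wave-robot rather than inner: it lies in some $\waveseg{t}{k}\cup\waveseg{t+1}{k}$ iff it is within $area(\vec b^t)\setminus area(\vec b^{t+2})$ near the boundary strip — membership it can test once it has the local boundary arc. Conversely an inner-robot sees, by the no-hole condition, enough robots ``outside'' it (toward the boundary) to certify it is strictly inside $area(\vec b^{t+1})$ and thus does not move. The boundary radius $2+\sqrt2$ is calibrated precisely so that the chain boundary-robot $\to$ boundary-neighbor (length $\le1$) plus the segment-width slack $1+\nicefrac{\epsilon^2}{2}$ plus one more hop stays below it.

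The main obstacle I anticipate is not any single computation but the \emph{recognition} steps — proving that a robot can distinguish its role purely locally, in particular that no pathological convex configuration makes a wave- or inner-robot mistake itself for a boundary-robot or vice versa, and that the finite viewing radius never truncates the local boundary arc a robot needs. This is exactly where the $2.24$-hole bound and convexity must be combined quantitatively: convexity bounds how much boundary can be within a given radius of an interior point, and the no-hole bound forces witnesses to exist at the right scale; I expect the delicate part to be bookkeeping the exact radii (the $2+\sqrt2$ and the $2.24$) so that every ``look up a few boundary-robots along the chain'' argument provably fits inside the viewing range, and I would isolate this as a geometric sublemma before assembling the case analysis.
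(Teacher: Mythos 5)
Your overall architecture matches the paper's proof: establish local role recognition (boundary/wave/inner) from convexity plus the no-hole hypothesis, use \Cref{lem:distance-to-boundary-in-segment} to bound a wave-robot's distance to its two nearest boundary-robots, identify the locally relevant part of the boundary chain, and check that everything fits into the viewing range $2+\sqrt{2}$ before arguing the target computation itself is unambiguous. Two concrete points, however, do not survive as written. First, you invoke a ``no $1$-hole condition (which follows from no $2.24$-hole)''; the implication goes the other way (an empty disc of diameter $1$ sits inside any empty disc of diameter $2.24$, not conversely), and the lemma's hypothesis is only the absence of $2.24$-holes --- which is all that \Cref{lem:alg-con-no-holes:no-holes-created} maintains after the first round. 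So your interior-robot witnesses are only guaranteed at scale $2.24$, not within distance $1$. The paper's recognition criterion is exactly at that scale: a robot lies on the \conboundary\ if and only if it lies on the border of the convex hull of its $2.24$-surrounding (essentially your empty-half-plane test), and a wave-robot can re-run this test for a boundary-robot at distance $\leq 1+\nicefrac{\epsilon^2}{2} < 1.12$ because $1.12+2.24 < 2+\sqrt{2}$. This is precisely the radius bookkeeping you deferred to a ``geometric sublemma''; with your distance-$1$ witnesses replaced by the available $2.24$-scale ones the slack becomes tight, so it has to be carried out, not assumed.

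Second, your wave-robot case only reconstructs the corners of $\waveseg{t}{k}$, i.e.\ $b^t_k, b^t_{k+1}, b^{t+1}_k, b^{t+1}_{k+1}$. But \Cref{alg:wave-algo} maps into the target segment $\waveseg{t+1}{k}$, whose far corners are boundary positions two \egtm-steps ahead and hence depend on $b^t_{k-2},\dotsc,b^t_{k+3}$. These farther robots (at distance up to about $1.12+2$ from the wave-robot) are within range distance-wise, but they cannot be certified as boundary-robots by re-running the local convex-hull test, since their $2.24$-surroundings exceed the viewing range. The paper closes this with a chain argument: once one boundary-robot and the outside direction are identified, each next boundary-robot is determined from the $1$-surrounding of an already known one (consecutive boundary-robots are at distance $\leq 1$), and the $1$-surroundings of $b^t_{k-1},\dotsc,b^t_{k+2}$ are visible, which identifies $b^t_{k-2},\dotsc,b^t_{k+3}$. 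Your ``chain of a few robots'' gestures at this but neither explains how boundary membership is certified along the chain nor does the arithmetic; together with the reversed hole implication this is a genuine gap rather than omitted routine detail. The final computation step (segments do not overlap, targets agree on segment borders, move length $\leq 1$) matches the paper.
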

\begin{proof}
    \textbf{Decide robot state locally (boundary, wave, inner).}
    With no $2.24$-holes, a robot is on the \conboundary\ if it is on the border of the convex hull of its $2.24$-surrounding.
    This can be determined with a viewing range of 2.24.
    A robot $r$ inside $\waveseg{t}{k}$ or $\waveseg{t+1}{k}$ has a distance $\leq 1 + \nicefrac{\epsilon^2}{2} < 1.12$ to $\boundary{t}{k}$ and $\boundary{t}{k+1}$ (see \Cref{lem:distance-to-boundary-in-segment}).
    Therefore, $r$ with a viewing range of $1.12 + 2.24 < 2 + \sqrt{2}$ can determine, whether the robot on $\boundary{t}{k}$ is a boundary-robot.
    To compute $\waveseg{t}{k}$ and $\waveseg{t+1}{k}$ robots on $\boundary{t}{k-2}, \cdots, \boundary{t}{k+3}$ must be observed.
    To $\boundary{t}{k}$ and $\boundary{t}{k+1}$ these have a maximal distance of up to $2$.
    Because the robot on \boundary{t}{k} could already be identified, it is known where the outside of the \conboundary\ is.
    We know that the \conboundary\ is connected with a distance $\leq 1$.
    Therefore, the $1$-surrounding around a known boundary-robot is sufficient to identify its next neighbor along the boundary.
    With a viewing range of $2+\sqrt{2}$ the $1$-surrounding of $\boundary{t}{k}$ and $\boundary{t}{k+1}$ as well as from $\boundary{t}{k-1}$ and $\boundary{t}{k+2}$ is observable.
    This allows to identify robots on $\boundary{t}{k-2}, \cdots, \boundary{t}{k+3}$.
    Therefor, each robot in $\waveseg{t}{k}$ and $\waveseg{t+1}{k}$ can identify, that it is in the mentioned segment (wave-robots).
    All robots $r$ not in these segments can either not find sufficient many boundary-robots or can compute that they are not in segments of $\wave{t}$ or $\wave{t+1}$ adjacent to the observed boundary-robots.
    Both is sufficient to decide, that $r$ is an inner robot.

    \textbf{Compute the algorithm locally.}
    For boundary robots (\egtmlong) and inner robots (do not move) this is trivial.
    The wave-segments are not overlapping.
    Target positions are computed based on the segment the robot is in.
    Therefore, robots not on the borders of a segment can unambiguously compute their target positions.
    Robots on the border of a segment will compute for both segments the exact target position, therefor it is unambiguous as well.
    The movement distance is $\leq 1$, therefore the computed move can be executed.
\end{proof}

\begin{lemma}
    \label{lem:alg-con-no-holes:collisionfree}
    In a configuration with a convex \conboundary\, \Cref{alg:near-gath-convex-no-holes} \textbf{does not lead to collisions}.
\end{lemma}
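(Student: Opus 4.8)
The plan is to show that no two robots end up at the same position after one round of \Cref{alg:near-gath-convex-no-holes}. I would partition the argument by robot type, since the evolution function $f$ acts by three different rules on boundary-, wave-, and inner-robots. First I would settle the ``within-type'' collisions: two inner-robots never collide because they do not move and the initial configuration has distinct robot positions; two boundary-robots never collide because \Cref{lem:egtm-invertable} tells us \egtmlong\ is invertible (and in particular injective on $\vec b^t$) for $\epsilon \in [0,0.5)$; and two wave-robots never collide because, by \Cref{lem:wave-segments-not-overlapping-not-twisted} and \Cref{lem:non-degenerated-stays-non-degenerated}, every wave-segment $\waveseg{t}{k}$ (and $\waveseg{t+1}{k}$) maps injectively into a non-degenerated $\waveseg{t+1}{k}$ via the coordinate system of \Cref{def:delta-line}, so \Cref{alg:wave-algo} is injective — this is essentially the content of \Cref{lem:wave-algo-bijective}.

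Next I would handle the ``across-type'' collisions. The key geometric fact is that the three target regions are essentially disjoint up to their shared boundaries. Concretely: boundary-robots move to $\egtm(\vec b^t) = \vec b^{t+1}$, which lies on $area(\vec b^t)\setminus \mathrm{int}(area(\vec b^{t+1}))$; wave-robots move into $\wave{t+1} = area(\vec b^{t+1})\setminus area(\vec b^{t+2})$; and inner-robots stay in $\wave{k}$ for $k>t+1$, i.e. strictly inside $area(\vec b^{t+1})$ (in fact inside $area(\vec b^{t+2})$ after one more contraction, using \Cref{lem:convex-egtm-stays-convex}). Thus a boundary-robot's target lies on the outer boundary of $\wave{t+1}$ while a wave-robot's target lies in $\wave{t+1}$; the only possible overlap is on the curve $\waveseg{t+1}{k}(x,1)$, but \Cref{alg:wave-algo} sends wave-robots to coordinates with first component in $[0,1/2]\cup[1/2,1]$ and second component $y$ unchanged — so a wave-robot lands on $\waveseg{t+1}{k}(\cdot,1)$ only if it started with $y=1$, i.e. on the inner boundary $\waveseg{t}{k}(x,1)=\boundary{t+1}{k}\text{-to-}\boundary{t+1}{k+1}$ edge, which are not boundary-robot target positions unless they coincide with a $\boundary{t+1}{k}$; and such corner coincidences are harmless because the robot sitting there is counted once. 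Similarly, inner-robots sit strictly inside $area(\vec b^{t+2}) \subseteq area(\vec b^{t+1})$, which is disjoint from $\wave{t+1}$ and from $\vec b^{t+1}$, so they cannot collide with wave- or boundary-robots. I would also invoke \Cref{lem:alg-con-no-holes:conboundary-unchaged} to guarantee that the robot-type classification (which robot is a boundary-robot, wave-robot, or inner-robot) is consistent with the final \conboundary, so that the regions really do play the roles claimed.

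I expect the main obstacle to be the careful bookkeeping at the \emph{shared boundaries} of the segments and of the three regions — i.e., the measure-zero curves where a robot could in principle be classified as being in two adjacent wave-segments simultaneously, or where a wave-robot's image meets the new \conboundary. The honest way to deal with this is exactly the observation used in the executability proof (\Cref{lem:main-algo-executable}): a robot on the border between $\waveseg{t}{k}$ and $\waveseg{t}{k+1}$ computes the \emph{same} target from both segments, so its image is well-defined and it is still a single robot occupying a single point; and the coordinate parametrization in \Cref{def:delta-line} is continuous across these shared edges. So ``two robots at the same final position'' forces the two robots to have been at the same initial position, contradicting that a configuration is a tuple of $n$ distinct points. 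I would phrase the whole proof as: pick robots $i\neq j$, assume $f(z_i^t,\z^t)=f(z_j^t,\z^t)$, split into the (at most six) type-pair cases, and in each case derive $z_i^t=z_j^t$ using the injectivity lemmas above plus the region-disjointness, reaching a contradiction.
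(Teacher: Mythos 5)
Your overall decomposition (within-type plus across-type, with inner robots handled by region disjointness and wave robots by the injectivity behind \Cref{lem:wave-algo-bijective}) matches the spirit of the paper, but there is a genuine gap in your boundary--boundary case. You argue that two boundary-robots cannot collide because \Cref{lem:egtm-invertable} makes \egtmlong\ ``invertible (and in particular injective on $\vec b^t$)''. Invertibility here is a statement about the evolution map on configuration space: the linear map $\R^{2k}\to\R^{2k}$ is nonsingular, so distinct \emph{configurations} have distinct images and the previous configuration can be recovered. It does \emph{not} imply that two distinct \emph{components} of one output configuration are distinct; an invertible matrix can perfectly well map a tuple of pairwise distinct points to a tuple with two equal entries. (This distinction between configuration-level invertibility and collision-freeness is exactly the kind of subtlety the paper is otherwise careful about.) So the step ``\Cref{lem:egtm-invertable} $\Rightarrow$ no two boundary-robots land on the same point'' does not follow as stated; if you want this route you must prove separately, from convexity and $\epsilon<\nicefrac12$, that $\boundary{t+1}{k}\neq\boundary{t+1}{j}$ for $k\neq j$.

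The paper avoids this (and most of your across-type bookkeeping) by a different reduction: it observes that boundary-robots are just wave-robots sitting at the special coordinates $\waveseg{t}{k}(x,0)$ of \Cref{def:delta-line}, their \egtm\ target coinciding with the image under \Cref{alg:wave-algo} of those coordinates, so boundary--boundary, boundary--wave and wave--wave collisions are all excluded at once by the collision-freeness established in the proof of \Cref{lem:wave-algo-bijective}; only the inner-robot cases are handled by the disjointness of $\wave{t}\cup\wave{t+1}$ from the region of the non-moving robots. Your across-type discussion also misidentifies where the potential boundary/wave overlap lies: the targets $\vec b^{t+1}$ of the boundary-robots sit on the \emph{outer} edge of $\wave{t+1}$ (corners of the segments $\waveseg{t+1}{k}$), not on the curve $\waveseg{t+1}{k}(x,1)$, and ``corner coincidences are harmless because the robot sitting there is counted once'' is not an argument. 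The correct observation is the one the paper's reduction makes implicit: the only preimages of those corner targets under \Cref{alg:wave-algo} are the current positions of the boundary-robots themselves, which no wave-robot occupies since robot positions are distinct. Repairing your proof therefore requires either this corner/preimage argument together with an independent proof of injectivity of \egtm\ on a convex $\vec b^t$, or simply adopting the paper's ``boundary-robots are wave-robots at $(x,0)$'' reduction.
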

\begin{proof}
    We consider round $t$.
    Robot within $\wave{t'}, t' \geq t +2$ (inner-robots) cannot have collisions with each other, because they do not move.
    They can also have no collisions with other robots, because boundary- and wave-robots move within $\wave{t}$ and $\wave{t+1}$.
    Boundary robots are essentially wave robots with the special case that they are on coordinates $\waveseg{t}{k}(x,0)$.
    From the proof of \Cref{lem:wave-algo-bijective} follows, that \Cref{alg:wave-algo} is collision-free.
\end{proof}

\begin{lemma}
    \label{lem:alg-con-no-holes:no-holes-created}
    In a configuration with convex \conboundary\ and initially no $1$-holes, \Cref{alg:near-gath-convex-no-holes} \textbf{does not create $2.24$-holes}.
\end{lemma}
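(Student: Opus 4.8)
The plan is to exploit that \Cref{alg:near-gath-convex-no-holes} only rearranges robots inside a very thin shell adjacent to the boundary while leaving every robot strictly inside $area(\vec{b}^{t+2})$ completely untouched. Since this shell turns out to be thinner than $1.12$ (half the diameter of a $2.24$-hole), any $2.24$-hole would be forced to sit deep inside the \emph{unchanged} region $area(\vec{b}^{t+2})$, where it contradicts the \enquote{no $1$-hole} hypothesis.

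First I would fix the arena. By \Cref{lem:alg-con-no-holes:conboundary-unchaged} the \conboundary\ after the round equals $\vec{b}^{t+1}=\egtm(\vec{b}^t)$, and applying \Cref{lem:convex-egtm-stays-convex} twice yields nested convex regions $area(\vec{b}^{t+2})\subseteq area(\vec{b}^{t+1})\subseteq area(\vec{b}^t)$; moreover $\wave{t+1}=area(\vec{b}^{t+1})\setminus area(\vec{b}^{t+2})$ by definition. A putative $2.24$-hole after the round is thus a disk $D$ of radius $1.12$ with $D\subseteq area(\vec{b}^{t+1})$ containing no robot, and its center $p$ satisfies $\operatorname{dist}(p,\vec{b}^{t+1})\ge 1.12$.

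Next I would prove the shell is thin. From \Cref{alg:egtm}, a single \egtm-step moves each boundary robot by at most $\epsilon$ (its target is the midpoint of two neighbors within distance $1$), and \egtm\ keeps consecutive boundary robots within distance $1$; hence $\norm{\boundary{t+2}{k}-\boundary{t+1}{k}}\le\epsilon$ for every $k$. Since $area(\vec{b}^{t+1})$ is the convex hull of the $\boundary{t+1}{k}$, writing a point of $area(\vec{b}^{t+1})$ as a convex combination of the $\boundary{t+1}{k}$ and replacing each $\boundary{t+1}{k}$ by $\boundary{t+2}{k}$ shows that every point of $area(\vec{b}^{t+1})$ is within $\epsilon$ of $area(\vec{b}^{t+2})$; symmetrically, every point of $\vec{b}^{t+2}$ is within $\epsilon$ of $\vec{b}^{t+1}$. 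Combining the two estimates with the triangle inequality, every point of $\wave{t+1}$ lies within $2\epsilon$ of $\vec{b}^{t+1}$. As $\epsilon<\nicefrac{1}{2}$ (we use $\epsilon<\nicefrac{1}{2}$, cf.\ \Cref{lem:egtm-invertable}) we get $2\epsilon<1<1.12$, so $p$ cannot lie in $\wave{t+1}$; hence $p\in area(\vec{b}^{t+2})$, and furthermore $\operatorname{dist}(p,\vec{b}^{t+2})\ge\operatorname{dist}(p,\vec{b}^{t+1})-\epsilon\ge 1.12-\epsilon>\nicefrac{1}{2}$.

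Finally I would derive the contradiction: the closed disk of radius $\nicefrac{1}{2}$ around $p$ lies strictly inside $area(\vec{b}^{t+2})\subseteq area(\vec{b}^t)$, so it is a disk of diameter $1$ inside the round-$t$ \conboundary, which by hypothesis contains some robot $r^\ast$. Since $r^\ast$ lies strictly inside $area(\vec{b}^{t+2})$, it is neither a boundary robot (which sit on $\vec{b}^t$) nor a wave robot (which lie in $\wave{t}\cup\wave{t+1}=area(\vec{b}^t)\setminus area(\vec{b}^{t+2})$), hence an inner robot that does not move under \Cref{alg:near-gath-convex-no-holes}; so $r^\ast$ still lies in the radius-$\nicefrac{1}{2}$ disk around $p$, and therefore in $D$, after the round, contradicting that $D$ is empty. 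The step I expect to require the most care is the uniform thinness of the shell, in particular at sharp convex corners: this is exactly where convexity of the \conboundary\ is used, since for a non-convex boundary moving each vertex by at most $\epsilon$ need not keep the enclosed region within $\epsilon$ of the new one. The rest is routine bookkeeping (the degenerate case where $\vec{b}^t$ encloses no area, so $area(\vec{b}^{t+1})$ contains no disk of diameter $2.24$; and the harmless possibility that $r^\ast$ coincides with the limit point $\bigcap_{k}area(\vec{b}^k)$, where it still does not move); notably, no analysis of the packing induced by \Cref{alg:wave-algo} is needed.
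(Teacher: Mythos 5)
Your geometric core is sound, and it is a genuinely different route from the paper's. You bound the thickness of the region the round touches directly: each \egtm{} step displaces a boundary vertex by at most $\epsilon$ (using that consecutive boundary robots stay within distance $1$, which indeed holds and is also asserted in the proof of \Cref{lem:alg-con-no-holes:conboundary-unchaged}), and convexity turns the per-vertex bound into the statement that all of $\wave{t+1}$ lies within $2\epsilon<1.12$ of $\vec{b}^{t+1}$, so the hole's center $p$ sits deep inside $area(\vec{b}^{t+2})$ with $\operatorname{dist}(p,\vec{b}^{t+2})>\nicefrac{1}{2}$. The paper instead argues segment-wise: the outer edge $\overline{\boundary{t}{k}\boundary{t}{k+1}}$ of a wave segment meeting the putative hole is a chord of length $\le 1$ of the radius-$1.12$ circle, hence at distance $\ge 1$ from its center, and a segment has width at most $0.5$, so every point of the wave is at distance $\ge 0.5$ from the center. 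Both routes deliver the same key fact — the concentric disk of diameter $1$ avoids everything the round rearranged — and your convex-combination handling of sharp corners is exactly where the convexity hypothesis enters.

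There is, however, one genuine gap at the end: you conclude that the diameter-$1$ disk around $p$ \enquote{by hypothesis contains some robot $r^\ast$} \emph{at round $t$}. The lemma only assumes no $1$-holes in the \emph{initial} configuration, and \enquote{no $1$-holes} is not shown (and not claimed) to be preserved by \Cref{alg:near-gath-convex-no-holes} — the wave step rearranges robots near the boundary — so a round-$t$ version of the hypothesis is not available and an induction on it does not close. The paper's proof avoids this by tracing the empty disk back to round $0$: the robots in the relevant region \enquote{have been inner robots until now and have not moved at all}, so emptiness there would already be an initial $1$-hole. Your own estimate supplies exactly what is needed for this repair: the disk lies strictly inside $area(\vec{b}^{t+2})\subseteq area(\vec{b}^{s+2})$ for every $s\le t$, so every robot in it has been an inner robot in every round so far (hence never moved), and every robot that does move in some round $s$ lands in $\wave{s+1}\cup\vec{b}^{s+1}$, outside $area(\vec{b}^{s+2})$, so nothing ever moves into the disk either. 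Replacing the appeal to a round-$t$ hypothesis by this observation — emptiness of the disk at round $t+1$ implies emptiness initially, contradicting \enquote{no initial $1$-holes} — makes your argument complete and essentially equivalent in strength to the paper's.
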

\begin{proof}
    All robots not inside $\wave{t}$ have been inner robots until now and have not moved at all.
    Therefore, holes that have no overlap with $\wave{t}$ must have existed initially and can only have diameter $<1$.
    Let us consider a $2.24$-hole that overlaps with $\wave{t}$.
    \begin{figure}
        \begin{minipage}[c]{0.35\textwidth}
            \includegraphics[width=\textwidth]{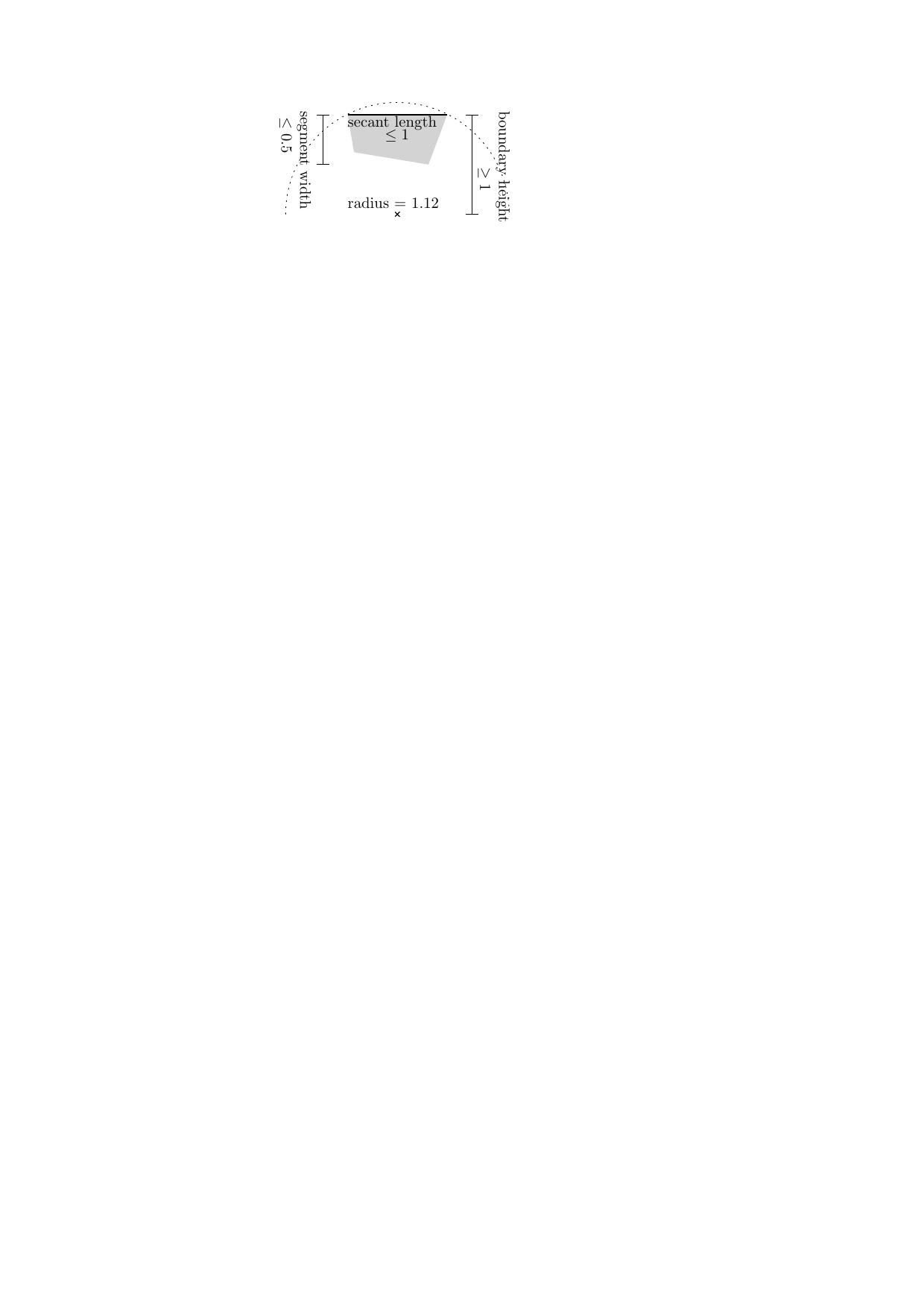}
        \end{minipage}\hfill
        \begin{minipage}[c]{0.5\textwidth}
            \caption{Distance between $\waveseg{t}{k}$ (gray) and the midpoint of a $2.24$-hole.}
            \label{fig:hole-wave-distance}
        \end{minipage}
      \end{figure}
    See the construction of $\waveseg{t}{k}$ that overlaps with a $2.24$-hole in\Cref{fig:hole-wave-distance}.
    $\waveseg{t}{k}$ cannot lie completely within the hole, because that would mean the boundary robots are inside the hole.
    But the boundary of $\waveseg{t}{k}$ can be a secant with length $\leq 1$ of the hole.
    The height of such a secant of a circle with radius $1.12$ is $\geq 1$.
    A wave-segment can have a maximal width of 0.5, therefore all points of $\waveseg{t}{k}$ have a distance $\geq 0.5$ to the midpoint of the hole.
    Therefore, $1$-hole with the same midpoint does not overlap with $\wave{t}$ and must have existed initially.
\end{proof}

\begin{lemma}
    \label{lem:alg-con-no-holes:invertible}
    In a configuration with a convex \conboundary, \Cref{alg:near-gath-convex-no-holes} \textbf{is locally invertible}.
\end{lemma}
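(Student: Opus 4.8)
The plan is to show that the evolution function $F$ of \Cref{alg:near-gath-convex-no-holes} is locally invertible by reconstructing a configuration $\z^t$ from $\z^{t+1} = F(\z^t)$, using the fact that $F$ partitions the robots into three groups whose motions are individually invertible. Concretely, given the successor configuration $\z^{t+1}$, I would first argue that an external observer can recover the previous connectivity-boundary $\vec{b}^t$. By \Cref{lem:alg-con-no-holes:conboundary-unchaged} we know $\vec{b}^{t+1} = \egtm(\vec{b}^t)$, i.e. the boundary robots of round $t+1$ are exactly the images of the boundary robots of round $t$ under \egtmlong; and by \Cref{lem:convex-egtm-stays-convex} the boundary stays convex. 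Since \egtmlong\ is invertible for a global observer (\Cref{lem:egtm-invertable}, using $\epsilon \in [0,0.5)$), the observer can invert the boundary map to obtain $\vec{b}^t$ from $\vec{b}^{t+1}$. This is the crucial enabling step: once $\vec{b}^t$ is known, the observer can recompute $\wave{t}$, $\wave{t+1}$, and the segment decomposition exactly as the algorithm does.

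With $\vec{b}^t$ in hand, the reconstruction of the remaining robots is segment-local. Boundary robots are handled by the inverse of \egtmlong\ just discussed. Inner robots (those in $\wave{k}$ for $k > t+1$, \Cref{def:inner-robot}) do not move, so their round-$t$ positions coincide with their round-$(t+1)$ positions; they are identifiable in $\z^{t+1}$ because they lie outside $\bigcup_k \waveseg{t+1}{k}$, and by collision-freeness (\Cref{lem:alg-con-no-holes:collisionfree}) no moved robot lands on top of them. For the wave robots, \Cref{lem:wave-algo-bijective} already states that, given $\vec{b}^t$ and the set of wave-robots in round $t$, \Cref{alg:wave-algo} is invertible — the outer half $\waveseg{t+1}{k}(x/2,y)$ receives exactly the robots from $\waveseg{t}{k}$ and the inner half $\waveseg{t+1}{k}(1/2+x/2,y)$ receives exactly those from $\waveseg{t+1}{k}$, so the image half-segment determines the source segment, and \Cref{def:delta-line} gives a bijective coordinate map on each (non-degenerate, by \Cref{lem:non-degenerated-stays-non-degenerated}) quadrilateral. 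Composing these three partial inverses yields a global inverse of $F$ in a neighborhood of $\z^{t+1}$.

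For the statement as phrased — \emph{local} invertibility — I would note that we do not need a single global formula: for a fixed configuration $\z^t$ with convex connectivity-boundary, the combinatorial data (which robots are boundary/wave/inner, which segment each wave robot lies in) is locally constant, so on a small enough open neighborhood $U$ of $\z^t$ the map $F$ is given by the same piecewise-affine-in-local-coordinates expression, and the reconstruction above gives a continuous inverse on $F(U)$. The main obstacle I anticipate is the degenerate-segment bookkeeping: when $\vec{b}^t$ has collinear robots, some $\waveseg{t}{k}$ can be partially or fully degenerate, so the forward map is not injective on that segment alone. The resolution — already packaged in \Cref{lem:non-degenerated-stays-non-degenerated} and the proof of \Cref{lem:wave-algo-bijective} — is that the \emph{target} half-segment $\waveseg{t+1}{k}$ is always non-degenerate (a genuine quadrilateral with positive area) whenever there is anything to invert, so distinct source points cannot collapse to the same target point; the only residual case is a fully degenerate segment, where both source and target are a single line segment with no robots off it and the map is trivially bijective. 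One should also check the boundary-sharing points between adjacent segments are mapped consistently (they are, since the algorithm computes the same target for both segments on the shared edge), so the global inverse is well-defined there too.
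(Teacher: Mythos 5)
Your proposal is correct and follows essentially the same route as the paper's own proof: identify $\vec{b}^{t+1}$ as the \conboundary\ of the successor configuration, invert \egtm\ (\Cref{lem:egtm-invertable}) to recover $\vec{b}^t$ and hence $\wave{t}$, $\wave{t+1}$, note that inner robots are unmoved and that every robot found in $\wave{t+1}$ must have been a boundary- or wave-robot, and finish with \Cref{lem:wave-algo-bijective}. Your additional remarks on locally constant combinatorial data and on degenerate segments go slightly beyond what the paper writes but are consistent with it.
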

\begin{proof}
    Let us assume $\vec{b}^t$ is convex and the \conboundary\ of the swarm in round $t$.
    For the initial configuration this is true by definition.
    We will show, that we can compute the swarm in round $t$ from round $t+1$.
    From \Cref{lem:convex-egtm-stays-convex} we know that $\vec{b}^{t+1}$ is convex.
    From \Cref{lem:alg-con-no-holes:conboundary-unchaged} we know, that $\vec{b}^{t+1}$ is the \conboundary\ in round $t+1$.
    The \conboundary\ can easily be identified by a global observer, therefore $\vec{b}^{t+1}$ is known.
    By \Cref{lem:egtm-invertable} we know that \egtm\ is invertible.
    Therefore, we can compute $\vec{b}^t$ and $\wave{t}$ as well as $\wave{t+1}$.
    We know that all robots that were in round $t$ not in $\wave{t}$ or $\wave{t+1}$ are inner-robots and have not moved (\Cref{def:inner-robot}).
    Therefore, all robots in $\wave{t+1}$ in round $t+1$ must have been wave- or boundary-robots in round $t$.
    Because all wave- and boundary-robots in round $t$ move into $\wave{t+1}$ the robots in $\wave{t+1}$ in round $t+1$ are the complete set of wave- and boundary-robots in round $t+1$.
    The boundary-robots are already identified, therefore we know the set of wave-robots.
    This allows us to apply \Cref{lem:wave-algo-bijective} to compute the positions of the wave-robots in round $t$.
    All other robots are inner robots, therefore they do not move in round $t$.
\end{proof}

\begin{theorem}
    We assume robots according the \oblot/ model and \fsync\ scheduler with a viewing range of $2 + \sqrt{2}$ in a swarm with a convex \conboundary\ and no $1$-holes.
    \Cref{alg:near-gath-convex-no-holes} leads to \neargathering\ and does not change the symmetry.
\end{theorem}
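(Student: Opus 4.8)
I would prove the theorem by an induction over the rounds $t\in\N$ carried by the invariant $I(t)$: ``$\z^t$ has a convex \conboundary{} that coincides with the boundary robots $\vec{b}^t$, and $\z^t$ has no $1$-hole''. Note that $I(t)$ implies $\z^t$ has no $2.24$-hole either (any empty disc of diameter $2.24$ contains an empty disc of diameter $1$), so all of the lemmas of this section become applicable whenever $I(t)$ holds. The base case $I(0)$ is exactly the hypothesis on the initial swarm.

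\textbf{Inductive step.}
Assuming $I(t)$: by \cref{lem:main-algo-executable} the round can be executed with viewing range $2+\sqrt2$; by \cref{lem:alg-con-no-holes:collisionfree} it is collision-free, so $\z^{t+1}$ is again a configuration of $n$ pairwise distinct robots; by \cref{lem:convex-egtm-stays-convex} the set $\vec{b}^{t+1}\coloneqq\egtm(\vec{b}^t)$ is convex and satisfies $area(\vec{b}^{t+1})\subseteq area(\vec{b}^t)$; and by \cref{lem:alg-con-no-holes:conboundary-unchaged} this set is precisely the \conboundary{} of $\z^{t+1}$, which gives the boundary part of $I(t+1)$. Finally, \cref{lem:alg-con-no-holes:no-holes-created} shows that the round opens no new empty disc of the critical diameter, so $\z^{t+1}$ still has no $1$-hole, completing the step; hence $I(t)$ holds for all $t$.

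\textbf{The two conclusions.}
For near-gathering: under $I(t)$ every robot of $\z^t$ lies inside the convex region $area(\vec{b}^t)$, and by \cref{lem:alg-con-no-holes:conboundary-unchaged} the boundary robots evolve exactly according to \egtm. Since \egtm{} drives a convex boundary cycle into a \neargathering{} (\cref{remark:egtm-always-near-gathering}), the diameter of $area(\vec{b}^t)$ becomes at most $1$, and since the regions $area(\vec{b}^t)$ are nested and every robot stays inside them, the whole swarm reaches a \neargathering. For symmetry: by $I(t)$ every reachable configuration has a convex \conboundary, so \cref{lem:alg-con-no-holes:invertible} applies and the evolution function $F$ of \cref{alg:near-gath-convex-no-holes} is locally invertible at every reachable configuration — recall that \cref{thr:main-symmetries-math} tolerates merely local invertibility and even discontinuity of $F$, the latter being needed here because the wave part of $F$ is piecewise-defined. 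Then \cref{thr:main-symmetries-math} yields $G_{\z^{t+1}}=G_{\z^t}$ for every $t$, so the swarm's set of symmetries, and in particular its symmetricity, never changes.

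\textbf{Main obstacle.}
The delicate point is the hole clause of $I(t)$: neither the contraction of the boundary by \egtm{} nor the inward compression of the wave-robots may open a new empty disc, and this must be reconciled with the several hard-coded radii in the construction ($1$ for the initial holes, $2.24$ for detecting the \conboundary{} in \cref{lem:main-algo-executable,lem:alg-con-no-holes:conboundary-unchaged}, and $1+\nicefrac{\epsilon^2}{2}$ for the reach of a wave segment). \cref{lem:alg-con-no-holes:no-holes-created} carries the geometric weight here — a new hole meeting the wave would force a pre-existing $1$-hole inside the never-moving interior, contradicting $I(t)$ — and beyond it one only has to check that no hole of intermediate diameter (between $1$ and $2.24$) is opened either, which again follows from the ``thin and dense'' shape of the wave segments (width $\le\nicefrac12$, fed from the, by $I(t)$, $1$-hole-free region ahead of them).
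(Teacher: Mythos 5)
Your overall route is the same as the paper's: the theorem is proved by assembling the section's lemmas---executability (\cref{lem:main-algo-executable}), collision-freeness (\cref{lem:alg-con-no-holes:collisionfree}), convexity preservation (\cref{lem:convex-egtm-stays-convex}), the fact that the boundary evolves exactly by \egtm{} (\cref{lem:alg-con-no-holes:conboundary-unchaged}), hole control (\cref{lem:alg-con-no-holes:no-holes-created})---together with convergence of \egtmlong{} on the boundary cycle (the paper cites the Go-to-the-Middle convergence result of Dynia et al.\ rather than \Cref{remark:egtm-always-near-gathering}, which is about general swarms), and, for symmetry, \cref{lem:alg-con-no-holes:invertible} combined with \cref{thr:main-symmetries-math}. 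Making the round-by-round induction explicit is a faithful elaboration of what the paper leaves implicit.

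However, your invariant is stated too strongly, and the inductive step as written does not close. You carry ``$\z^t$ has no $1$-hole'', but \cref{lem:alg-con-no-holes:no-holes-created} only rules out the creation of $2.24$-holes; it says nothing about holes of diameter in $[1,2.24)$, so ``$\z^{t+1}$ still has no $1$-hole'' is not justified by the citation, and your proposed patch (the ``thin and dense shape of the wave segments'') is an unproven extra claim. The repair is to weaken the hole clause: all that \cref{lem:main-algo-executable} and \cref{lem:alg-con-no-holes:conboundary-unchaged} need is the absence of $2.24$-holes, and that is exactly what \cref{lem:alg-con-no-holes:no-holes-created} maintains---its proof does not rely on a per-round ``no $1$-hole'' hypothesis, but on the global facts that inner robots have never moved (so any hole disjoint from the wave already existed in the initial, $1$-hole-free configuration and has diameter below $1$) and that a $2.24$-hole cannot meet the wave. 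With the invariant ``convex \conboundary{} equal to $\vec{b}^t$, no $2.24$-hole, inner robots unmoved since the $1$-hole-free start'' your induction goes through and coincides with the paper's argument.
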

\begin{proof}
    The algorithm is executable in the assumed model (\Cref{lem:main-algo-executable}).
    From \cite{DBLP:conf/ifip10/DyniaKLH06} we know, that \egtmlong\ converges towards gathering.
    From \Cref{lem:alg-con-no-holes:conboundary-unchaged} we know, that $\vec{b}^{t+1} = \egtm(\vec{b}^t)$.
    Therefore, $\vec{b}^t$ will converge towards gathering for $t \leftarrow \infty$.
    Because $\vec{b}^t$ is the \conboundary\, all robots are within $area(\vec{b}^t)$.
    At the same time, each robot keeps its own position (\Cref{lem:alg-con-no-holes:collisionfree}).
    Therefore, eventually a \neargathering\ will be reached.
    Because \Cref{alg:near-gath-convex-no-holes} is locally invertible (\Cref{lem:alg-con-no-holes:invertible}) we can follow from \Cref{thr:main-symmetries-math} that the symmetries are not changed.
\end{proof}

\begin{remark}
    \label{remark:egtm-always-near-gathering}
    The algorithm presented above works for general swarms.
    Even if the \conboundary\ is not convex, the robots at the convex hull of the swarm can always be detected as boundary-robots and move inwards, such that the convex hull shrinks monotonically. 
    There may be movements at other parts of the swarm, especially if the swarm contains holes, but \egtm\ does never result in connectivity loss.
    For situations where a robot is only connected to one other robots (e.g. robots form a line) \egtm\ must be slightly altered (a robot considers one neighbor as its left and right neighbor at the same time in this situations).
    While a \neargathering{} is always reached, the symmetry preservation cannot be guarantied in general.
\end{remark}
\section{Conclusion \& Discussion}%
\label{sec:conclusion}

In our paper we partially solved the \neargathering\ problem with symmetry preservation.
On one hand, we gave a variant of \gta\ that preserves symmetry for all initial configurations and even reaches \neargathering for certain configurations.
On the other hand, \Cref{alg:near-gath-convex-no-holes} solves \neargathering\ for all configurations but preserves symmetry only for subset of configurations.
This opens a few questions.

\textbf{Can we loosen the restrictions of \Cref{alg:near-gath-convex-no-holes}?}
A central part of our analysis is that the robots on the \conboundary\ perform the same linear function during the execution.
This is only the case, if the set of boundary-robots never changes.
If we allow for a non-convex \conboundary, 
robots move towards the outside of the swarm in concave sections.
So, eventually two robots of the \conboundary\ that initially have a distance $> 1$ will reach a distance $\leq 1$.
One of these robots is longer part of the \conboundary\ in the next round.
One could fix this by introducing memory, but with memory symmetry preserving \neargathering\ is already solved \cite{DBLP:conf/sirocco/YamauchiY13}.
Another possible fix is to not move robots on concave parts of the \conboundary.
But eventually, these parts become convex and robots start to move according to \egtm\ which, again, changes the linear function.
The second restriction are holes.
Robots at the boundary of large holes cannot distinguish their location locally from the \conboundary\ (not even with memory).
Therefore, they will start with a \neargathering\ that eventually leads to a configuration, where no hole exists anymore.
But this configuration can have another pre-image where the inner robots already had the position of the not-anymore-hole.
Therefore, the algorithm is not invertible in this case.

\textbf{Can we create a framework for a more general strategy from \Cref{alg:near-gath-convex-no-holes}?}
The proofs in \Cref{sec:protocol:contractingwaves} can be generalized for a class of strategies.
Our core idea is to split the robots into layers (e.g. boundary, wave, inner).
The outermost layer performs a \neargathering\ algorithm that is symmetry preserving.
All other layers perform algorithms, such that they stay inside the outermost layer.
The outermost layer is always distinguishable and invertible.
The inner layers are distinguishable and invertible, if the outer layers are known.
The advantage of these class of algorithms is, that you can reduce the problem of an invertible \neargathering\ algorithm to a restricted set of robots.
However, even for a restricted set of robots like those on the \conboundary\ it turns out to be a major challenge to find a connectivity preserving algorithm.

\textbf{For which class of configurations does \gta\ lead to \neargathering?}
The configurations must contain evenly spread robots.
One example is a square (or triangle or hexagon) that is filled with a regular grid with distance $<$ viewing range.
Also perturbations of such configurations lead to \neargathering.

\textbf{When do \neargathering\ algorithms from \cite{DBLP:conf/opodis/CastenowH0KKH22} increase symmetry?}
In \cite{DBLP:conf/opodis/CastenowH0KKH22}, a class of \neargathering\ functions is introduced.
These functions do not in general preserve the symmetry.
A simple example is the \GtC\ algorithm where robots move onto the center of the smallest enclosing circle of their local neighborhood.
Let us assume a configuration $\z^t$ with symmetry.
After performing \GtC\ the new configuration $\z^{t+1}$ must be symmetric as well.
The smallest enclosing circle is dependent only on robots exactly on the circle.
We fix all robots that are on these circles and perturbate all other robots such that the configuration is asymmetric.
Let this be configuration $\breve{\z}^t$.
After performing \GtC\ the new configuration is $\breve{\z}^{t+1}$.
Because the smallest enclosing circles have not changed, the robots in $\breve{\z}^t$ move onto the same positions as the robots in $\z^t$.
Therefore, with $\breve{\z}^{t+1} = \z^{t+1}$ we introduced symmetries.

\textbf{Conclusion}
We presented in \Cref{sec:protocol:averaging} the first known non-trivial algorithm for local robots that preserves symmetry.
This in itself is no small achievement because the local capabilities (especially limited visibility) make it impossible for robots to observe the current symmetry and to act accordingly.
Additionally, we presented a \neargathering\ algorithm that preserves the symmetry for a large subset of initial configurations.
The algorithm is based on \egtm, a linear function that leads to the gathering of swarms connected in a chain.
The advantage of a linear function is that we can analyze the invertibility using well-known properties from linear algebra.
Other gathering algorithms like \GtC\ contain conditions, what robots in the viewing range to consider for the computation (\textsc{GtC} ignores all but three robots).
Such properties make it hard to analyze their invertibility to apply our analysis method.
As we have argued in the discussion, the initial configuration limitation stems from the problem of getting a consistent chain of robots (in our case, the \conboundary) that executes \egtm\ while having no memory and no global knowledge. 
\bibliography{main_brtrdb,references}

\begin{thebibliography}{10}

\bibitem{Almuzakki.2023}
Muhammad~Zaki Almuzakki and Bayu Jayawardhana.
\newblock Cooperative nearest-neighbor control of multi-agent systems:
  Consensus and formation control problems.
\newblock {\em IEEE Control Systems Letters}, 7:1873--1878, 2023.
\newblock \href {https://doi.org/10.1109/LCSYS.2023.3282423}
  {\path{doi:10.1109/LCSYS.2023.3282423}}.

\bibitem{DBLP:journals/tcs/BoseAKS20}
Kaustav Bose, Ranendu Adhikary, Manash~Kumar Kundu, and Buddhadeb Sau.
\newblock Arbitrary pattern formation on infinite grid by asynchronous
  oblivious robots.
\newblock {\em Theor. Comput. Sci.}, 815:213--227, 2020.
\newblock URL: \url{https://doi.org/10.1016/j.tcs.2020.02.016}, \href
  {https://doi.org/10.1016/J.TCS.2020.02.016}
  {\path{doi:10.1016/J.TCS.2020.02.016}}.

\bibitem{DBLP:journals/tcs/BoseKAS21}
Kaustav Bose, Manash~Kumar Kundu, Ranendu Adhikary, and Buddhadeb Sau.
\newblock Arbitrary pattern formation by asynchronous opaque robots with
  lights.
\newblock {\em Theor. Comput. Sci.}, 849:138--158, 2021.
\newblock URL: \url{https://doi.org/10.1016/j.tcs.2020.10.015}, \href
  {https://doi.org/10.1016/J.TCS.2020.10.015}
  {\path{doi:10.1016/J.TCS.2020.10.015}}.

\bibitem{DBLP:journals/tcs/CastenowFHJH20}
Jannik Castenow, Matthias Fischer, Jonas Harbig, Daniel Jung, and
  Friedhelm~Meyer auf~der Heide.
\newblock Gathering anonymous, oblivious robots on a grid.
\newblock {\em Theor. Comput. Sci.}, 815:289--309, 2020.
\newblock URL: \url{https://doi.org/10.1016/j.tcs.2020.02.018}, \href
  {https://doi.org/10.1016/J.TCS.2020.02.018}
  {\path{doi:10.1016/J.TCS.2020.02.018}}.

\bibitem{DBLP:conf/opodis/CastenowH0KKH22}
Jannik Castenow, Jonas Harbig, Daniel Jung, Peter Kling, Till Knollmann, and
  Friedhelm~Meyer auf~der Heide.
\newblock A unifying approach to efficient (near)-gathering of disoriented
  robots with limited visibility.
\newblock In Eshcar Hillel, Roberto Palmieri, and Etienne Rivi{\`{e}}re,
  editors, {\em 26th International Conference on Principles of Distributed
  Systems, {OPODIS} 2022, December 13-15, 2022, Brussels, Belgium}, volume 253
  of {\em LIPIcs}, pages 15:1--15:25. Schloss Dagstuhl - Leibniz-Zentrum
  f{\"{u}}r Informatik, 2022.
\newblock URL: \url{https://doi.org/10.4230/LIPIcs.OPODIS.2022.15}, \href
  {https://doi.org/10.4230/LIPICS.OPODIS.2022.15}
  {\path{doi:10.4230/LIPICS.OPODIS.2022.15}}.

\bibitem{Chen.2013}
Yao Chen, Jinhu Lu, Xinghuo Yu, and David~J. Hill.
\newblock Multi-agent systems with dynamical topologies: Consensus and
  applications.
\newblock {\em IEEE Circuits and Systems Magazine}, 13(3):21--34, 2013.
\newblock \href {https://doi.org/10.1109/MCAS.2013.2271443}
  {\path{doi:10.1109/MCAS.2013.2271443}}.

\bibitem{Chossat.2000}
Pascal Chossat and Reiner Lauterbach.
\newblock {\em Methods in Equivariant Bifurcations and Dynamical Systems},
  volume~15 of {\em Advanced series in nonlinear dynamics}.
\newblock {World Scientific}, 2000.

\bibitem{DBLP:journals/dc/CiceroneSN19}
Serafino Cicerone, Gabriele~Di Stefano, and Alfredo Navarra.
\newblock Asynchronous arbitrary pattern formation: the effects of a rigorous
  approach.
\newblock {\em Distributed Comput.}, 32(2):91--132, 2019.
\newblock URL: \url{https://doi.org/10.1007/s00446-018-0325-7}, \href
  {https://doi.org/10.1007/S00446-018-0325-7}
  {\path{doi:10.1007/S00446-018-0325-7}}.

\bibitem{DBLP:journals/siamcomp/CohenP05}
Reuven Cohen and David Peleg.
\newblock Convergence properties of the gravitational algorithm in asynchronous
  robot systems.
\newblock {\em {SIAM} J. Comput.}, 34(6):1516--1528, 2005.
\newblock \href {https://doi.org/10.1137/S0097539704446475}
  {\path{doi:10.1137/S0097539704446475}}.

\bibitem{DBLP:journals/dc/0001FSY15}
Shantanu Das, Paola Flocchini, Nicola Santoro, and Masafumi Yamashita.
\newblock Forming sequences of geometric patterns with oblivious mobile robots.
\newblock {\em Distributed Comput.}, 28(2):131--145, 2015.
\newblock URL: \url{https://doi.org/10.1007/s00446-014-0220-9}, \href
  {https://doi.org/10.1007/S00446-014-0220-9}
  {\path{doi:10.1007/S00446-014-0220-9}}.

\bibitem{DBLP:conf/spaa/DegenerKLHPW11}
Bastian Degener, Barbara Kempkes, Tobias Langner, Friedhelm~Meyer auf~der
  Heide, Peter Pietrzyk, and Roger Wattenhofer.
\newblock A tight runtime bound for synchronous gathering of autonomous robots
  with limited visibility.
\newblock In Rajmohan Rajaraman and Friedhelm~Meyer auf~der Heide, editors,
  {\em {SPAA} 2011: Proceedings of the 23rd Annual {ACM} Symposium on
  Parallelism in Algorithms and Architectures, San Jose, CA, USA, June 4-6,
  2011 (Co-located with {FCRC} 2011)}, pages 139--148. {ACM}, 2011.
\newblock \href {https://doi.org/10.1145/1989493.1989515}
  {\path{doi:10.1145/1989493.1989515}}.

\bibitem{Dorfler.2014}
Florian D{\"o}rfler and Francesco Bullo.
\newblock Synchronization in complex networks of phase oscillators: A survey.
\newblock {\em Automatica}, 50(6):1539--1564, 2014.
\newblock URL:
  \url{https://www.sciencedirect.com/science/article/pii/S0005109814001423},
  \href {https://doi.org/10.1016/j.automatica.2014.04.012}
  {\path{doi:10.1016/j.automatica.2014.04.012}}.

\bibitem{DBLP:conf/ifip10/DyniaKLH06}
Miroslaw Dynia, Jaroslaw Kutylowski, Pawel Lorek, and Friedhelm~Meyer auf~der
  Heide.
\newblock Maintaining communication between an explorer and a base station.
\newblock In Yi~Pan, Franz~J. Rammig, Hartmut Schmeck, and Mauricio Solar,
  editors, {\em Biologically Inspired Cooperative Computing, {IFIP} 19th World
  Computer Congress, {TC} 10: 1st {IFIP} International Conference on
  Biologically Inspired Computing, August 21-24, 2006, Santiago, Chile}, volume
  216 of {\em {IFIP}}, pages 137--146. Springer, 2006.
\newblock \href {https://doi.org/10.1007/978-0-387-34733-2\_14}
  {\path{doi:10.1007/978-0-387-34733-2\_14}}.

\bibitem{DBLP:series/lncs/11340}
Paola Flocchini, Giuseppe Prencipe, and Nicola Santoro, editors.
\newblock {\em Distributed Computing by Mobile Entities, Current Research in
  Moving and Computing}, volume 11340 of {\em Lecture Notes in Computer
  Science}.
\newblock Springer, 2019.
\newblock \href {https://doi.org/10.1007/978-3-030-11072-7}
  {\path{doi:10.1007/978-3-030-11072-7}}.

\bibitem{DBLP:series/lncs/FlocchiniPS19}
Paola Flocchini, Giuseppe Prencipe, and Nicola Santoro.
\newblock Moving and computing models: Robots.
\newblock In Paola Flocchini, Giuseppe Prencipe, and Nicola Santoro, editors,
  {\em Distributed Computing by Mobile Entities, Current Research in Moving and
  Computing}, volume 11340 of {\em Lecture Notes in Computer Science}, pages
  3--14. Springer, 2019.
\newblock \href {https://doi.org/10.1007/978-3-030-11072-7\_1}
  {\path{doi:10.1007/978-3-030-11072-7\_1}}.

\bibitem{DBLP:journals/dc/FlocchiniPSV17}
Paola Flocchini, Giuseppe Prencipe, Nicola Santoro, and Giovanni Viglietta.
\newblock Distributed computing by mobile robots: uniform circle formation.
\newblock {\em Distributed Comput.}, 30(6):413--457, 2017.
\newblock URL: \url{https://doi.org/10.1007/s00446-016-0291-x}, \href
  {https://doi.org/10.1007/S00446-016-0291-X}
  {\path{doi:10.1007/S00446-016-0291-X}}.

\bibitem{DBLP:journals/tcs/FlocchiniPSW08}
Paola Flocchini, Giuseppe Prencipe, Nicola Santoro, and Peter Widmayer.
\newblock Arbitrary pattern formation by asynchronous, anonymous, oblivious
  robots.
\newblock {\em Theor. Comput. Sci.}, 407(1-3):412--447, 2008.
\newblock URL: \url{https://doi.org/10.1016/j.tcs.2008.07.026}, \href
  {https://doi.org/10.1016/J.TCS.2008.07.026}
  {\path{doi:10.1016/J.TCS.2008.07.026}}.

\bibitem{DBLP:journals/siamcomp/FujinagaYOKY15}
Nao Fujinaga, Yukiko Yamauchi, Hirotaka Ono, Shuji Kijima, and Masafumi
  Yamashita.
\newblock Pattern formation by oblivious asynchronous mobile robots.
\newblock {\em {SIAM} J. Comput.}, 44(3):740--785, 2015.
\newblock \href {https://doi.org/10.1137/140958682}
  {\path{doi:10.1137/140958682}}.

\bibitem{Gersgorin.1931}
Semyon~Aronovich Ger{\v{s}}gorin.
\newblock {\"U}ber die {A}bgrenzung der {E}igenwerte einer {M}atrix.
\newblock {\em Izvestija Akademii Nauk SSSR, Serija Matematika}, 7(6):749--754,
  1931.

\bibitem{Golubitsky.1988}
Martin Golubitsky, Ian Stewart, and David~G. Schaeffer.
\newblock {\em Singularities and Groups in Bifurcation Theory}, volume~69 of
  {\em Applied mathematical sciences}.
\newblock Springer, 1988.
\newblock \href {https://doi.org/10.1007/978-1-4612-4574-2}
  {\path{doi:10.1007/978-1-4612-4574-2}}.

\bibitem{DBLP:conf/sand/HahnHK24}
Christopher Hahn, Jonas Harbig, and Peter Kling.
\newblock Forming large patterns with local robots in the {OBLOT} model.
\newblock In Arnaud Casteigts and Fabian Kuhn, editors, {\em 3rd Symposium on
  Algorithmic Foundations of Dynamic Networks, {SAND} 2024, June 5-7, 2024,
  Patras, Greece}, volume 292 of {\em LIPIcs}, pages 14:1--14:20. Schloss
  Dagstuhl - Leibniz-Zentrum f{\"{u}}r Informatik, 2024.
\newblock URL: \url{https://doi.org/10.4230/LIPIcs.SAND.2024.14}, \href
  {https://doi.org/10.4230/LIPICS.SAND.2024.14}
  {\path{doi:10.4230/LIPICS.SAND.2024.14}}.

\bibitem{Hirsch.1988}
Morris~W. Hirsch and Stephen Smale.
\newblock {\em Differential equations, dynamical systems, and linear algebra},
  volume~60 of {\em Pure and applied mathematics}.
\newblock {Acad. Press}, [nachdr.] edition, 1988.
\newblock Smale, Stephen (VerfasserIn).

\bibitem{kang2019marsbee}
Chang-kwon Kang, Farbod Fahimi, Rob Griffin, D~Brian Landrum, Bryan Mesmer,
  Guangsheng Zhang, Taeyoung Lee, Hikaru Aono, Jeremy Pohly, Jesse McCain,
  et~al.
\newblock Marsbee-swarm of flapping wing flyers for enhanced mars exploration.
\newblock Technical report, NASA, 2019.

\bibitem{DBLP:conf/ecal/KhuongTJPG11}
Ana{\"{\i}}s Khuong, Guy Theraulaz, Christian Jost, Andrea Perna, and Jacques
  Gautrais.
\newblock A computational model of ant nest morphogenesis.
\newblock In Tom Lenaerts, Mario Giacobini, Hugues Bersini, Paul Bourgine,
  Marco Dorigo, and Ren{\'{e}} Doursat, editors, {\em Advances in Artificial
  Life: 20th Anniversary Edition - Back to the Origins of Alife, {ECAL} 2011,
  Paris, France, August 8-12, 2011}, pages 404--411. {MIT} Press, 2011.
\newblock URL:
  \url{http://mitpress.mit.edu/sites/default/files/titles/alife/0262297140chap61.pdf}.

\bibitem{DBLP:conf/podc/KirkpatrickKNPS21}
David~G. Kirkpatrick, Irina Kostitsyna, Alfredo Navarra, Giuseppe Prencipe, and
  Nicola Santoro.
\newblock Separating bounded and unbounded asynchrony for autonomous robots:
  Point convergence with limited visibility.
\newblock In Avery Miller, Keren Censor{-}Hillel, and Janne~H. Korhonen,
  editors, {\em {PODC} '21: {ACM} Symposium on Principles of Distributed
  Computing, Virtual Event, Italy, July 26-30, 2021}, pages 9--19. {ACM}, 2021.
\newblock \href {https://doi.org/10.1145/3465084.3467910}
  {\path{doi:10.1145/3465084.3467910}}.

\bibitem{DBLP:conf/icdcit/MondalC20}
Moumita Mondal and Sruti~Gan Chaudhuri.
\newblock Uniform circle formation by swarm robots under limited visibility.
\newblock In Dang~Van Hung and Meenakshi D'Souza, editors, {\em Distributed
  Computing and Internet Technology - 16th International Conference, {ICDCIT}
  2020, Bhubaneswar, India, January 9-12, 2020, Proceedings}, volume 11969 of
  {\em Lecture Notes in Computer Science}, pages 420--428. Springer, 2020.
\newblock \href {https://doi.org/10.1007/978-3-030-36987-3\_28}
  {\path{doi:10.1007/978-3-030-36987-3\_28}}.

\bibitem{DBLP:journals/dc/PagliPV15}
Linda Pagli, Giuseppe Prencipe, and Giovanni Viglietta.
\newblock Getting close without touching: near-gathering for autonomous mobile
  robots.
\newblock {\em Distributed Comput.}, 28(5):333--349, 2015.
\newblock URL: \url{https://doi.org/10.1007/s00446-015-0248-5}, \href
  {https://doi.org/10.1007/S00446-015-0248-5}
  {\path{doi:10.1007/S00446-015-0248-5}}.

\bibitem{Pikovskij.2003}
Arkadij Pikovskij, Michael Rosenblum, and J{\"u}rgen Kurths.
\newblock {\em Synchronization}, volume~12 of {\em Cambridge nonlinear science
  series}.
\newblock {Cambridge Univ. Press}, 1st paperback ed., repr edition, 2003.
\newblock URL:
  \url{http://www.loc.gov/catdir/description/cam041/2003283137.html}.

\bibitem{DOI:10.1126/science.1210361}
Thomas~D. Seeley, P.~Kirk Visscher, Thomas Schlegel, Patrick~M. Hogan, Nigel~R.
  Franks, and James A.~R. Marshall.
\newblock Stop {Signals} {Provide} {Cross} {Inhibition} in {Collective}
  {Decision}-{Making} by {Honeybee} {Swarms}.
\newblock {\em Science}, 335(6064):108--111, January 2012.
\newblock URL: \url{https://www.science.org/doi/10.1126/science.1210361}, \href
  {https://doi.org/10.1126/science.1210361}
  {\path{doi:10.1126/science.1210361}}.

\bibitem{DOI:10.1002/advs.202002203}
Fernando Soto, Jie Wang, Rajib Ahmed, and Utkan Demirci.
\newblock Medical {Micro}/{Nanorobots} in {Precision} {Medicine}.
\newblock {\em Advanced Science}, 7(21), November 2020.
\newblock URL:
  \url{https://onlinelibrary.wiley.com/doi/10.1002/advs.202002203}, \href
  {https://doi.org/10.1002/advs.202002203} {\path{doi:10.1002/advs.202002203}}.

\bibitem{DBLP:journals/siamcomp/SuzukiY99}
Ichiro Suzuki and Masafumi Yamashita.
\newblock Distributed anonymous mobile robots: Formation of geometric patterns.
\newblock {\em {SIAM} J. Comput.}, 28(4):1347--1363, 1999.
\newblock \href {https://doi.org/10.1137/S009753979628292X}
  {\path{doi:10.1137/S009753979628292X}}.

\bibitem{DBLP:series/lncs/Viglietta19}
Giovanni Viglietta.
\newblock Uniform circle formation.
\newblock In Paola Flocchini, Giuseppe Prencipe, and Nicola Santoro, editors,
  {\em Distributed Computing by Mobile Entities, Current Research in Moving and
  Computing}, volume 11340 of {\em Lecture Notes in Computer Science}, pages
  83--108. Springer, 2019.
\newblock \href {https://doi.org/10.1007/978-3-030-11072-7\_5}
  {\path{doi:10.1007/978-3-030-11072-7\_5}}.

\bibitem{DBLP:journals/tcs/YamashitaS10}
Masafumi Yamashita and Ichiro Suzuki.
\newblock Characterizing geometric patterns formable by oblivious anonymous
  mobile robots.
\newblock {\em Theor. Comput. Sci.}, 411(26-28):2433--2453, 2010.
\newblock URL: \url{https://doi.org/10.1016/j.tcs.2010.01.037}, \href
  {https://doi.org/10.1016/J.TCS.2010.01.037}
  {\path{doi:10.1016/J.TCS.2010.01.037}}.

\bibitem{DOI:10.1088/1742-6596/473/1/012016}
Yukiko Yamauchi.
\newblock A survey on pattern formation of autonomous mobile robots:
  asynchrony, obliviousness and visibility.
\newblock {\em Journal of Physics: Conference Series}, 473:012016, December
  2013.
\newblock URL:
  \url{https://iopscience.iop.org/article/10.1088/1742-6596/473/1/012016},
  \href {https://doi.org/10.1088/1742-6596/473/1/012016}
  {\path{doi:10.1088/1742-6596/473/1/012016}}.

\bibitem{DBLP:series/lncs/Yamauchi19}
Yukiko Yamauchi.
\newblock Symmetry of anonymous robots.
\newblock In Paola Flocchini, Giuseppe Prencipe, and Nicola Santoro, editors,
  {\em Distributed Computing by Mobile Entities, Current Research in Moving and
  Computing}, volume 11340 of {\em Lecture Notes in Computer Science}, pages
  109--133. Springer, 2019.
\newblock \href {https://doi.org/10.1007/978-3-030-11072-7\_6}
  {\path{doi:10.1007/978-3-030-11072-7\_6}}.

\bibitem{DBLP:conf/sirocco/YamauchiY13}
Yukiko Yamauchi and Masafumi Yamashita.
\newblock Pattern formation by mobile robots with limited visibility.
\newblock In Thomas Moscibroda and Adele~A. Rescigno, editors, {\em Structural
  Information and Communication Complexity - 20th International Colloquium,
  {SIROCCO} 2013, Ischia, Italy, July 1-3, 2013, Revised Selected Papers},
  volume 8179 of {\em Lecture Notes in Computer Science}, pages 201--212.
  Springer, 2013.
\newblock \href {https://doi.org/10.1007/978-3-319-03578-9\_17}
  {\path{doi:10.1007/978-3-319-03578-9\_17}}.

\end{thebibliography}

\appendix
\numberwithin{equation}{section}
\renewcommand{\theequation}{\thesection.\arabic{equation}}

\section{Dynamical Systems Proofs for \Cref{sec:preliminaries}}%
\label{ap:math-proofs}

In this appendix, we provide some additional details on the mathematical dynamical systems and symmetries perspective and, in particular, prove \Cref{thr:main-symmetries-math}.

\subsection{Symmetries of a protocol}
We begin by investigating the interplay of the symmetries defined in \cref{sec:preliminaries} and the dynamics induced by a protocol. The robots do not know the global coordinate system that we choose as the external observer. Hence, their computation and movement is insensitive to rotations of the global coordinates in the sense that collectively rotating all robots positions by $\rho$ causes each robot to  move to a rotated target point in each round.
\begin{remark}
	\label{remk:translation}
	In principle, the same holds true for translations of the global coordinate systems. However, since we fixed the origin, we typically omit these transformations without loss of generality.
\end{remark}

On the other hand, recall that the robots are indistinguishable. That means if a robot observes another robot in a certain position, it does not know which label this robot has. More precisely, the computations and movement of every single robot depend on the set of the other robots' positions rather than their ordered tuple. Once again, this can be recast by stating that computation and movement are insensitive to arbitrary permutations of all robots positions.

Summarizing these observations and reformulating them in terms of the function governing the dynamics of all robots $f$, we obtain
\begin{lemma}
	\label{lem:invariance}
	Let $\eta\in\R^2$ and $\vzeta=(\zeta_1,\dotsc,\zeta_n)^\trans\in\R^{2n}$ be arbitrary. The function governing the dynamics of all robots $f$ has the following \emph{symmetry properties}:
	\begin{enumerate}[label=(\roman*)]
		\item $f(\rho \eta; \rho \zeta_1, \dotsc, \rho \zeta_n) = \rho f(\eta; \mathbf{\zeta})$ for all rotations $\rho\colon\R^2\to\R^2$;
		\item $f(\eta; \zeta_{\kappa(1)}, \dotsc, \rho \zeta_{\kappa(n)}) = f(\eta; \mathbf{\zeta})$ for all permutations $\kappa\colon\set{1,\dotsc,n}\to\set{1,\dotsc,n}$.
	\end{enumerate}

	These can be restated using matrices \eqref{eq:symm-mat} as
	\begin{enumerate}[label=(\roman*)]
		\item $f(\rho \eta; M_\rho\mathbf{\zeta}) = \rho f(\eta; \mathbf{\zeta})$ for all rotations $\rho\colon\R^2\to\R^2$;
		\item $f(\eta; M_\kappa\mathbf{\zeta}) = f(\eta; \mathbf{\zeta})$ for all permutations $\kappa\colon\set{1,\dotsc,n}\to\set{1,\dotsc,n}$.
	\end{enumerate}
\end{lemma}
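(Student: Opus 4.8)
The plan is to turn the intuitive argument preceding the \lcnamecref{lem:invariance} into a short calculation by first making explicit what data a protocol consists of in the \oblot/ model, and then verifying the two identities by substitution. First I would record that a single robot's move is determined solely by its \Look/-snapshot: a robot located at $\eta$ sees the other robots' positions \emph{relative to itself}, i.e.\ the multiset $S(\eta;\vzeta) \coloneqq \set{\zeta_j - \eta \mid j \in \set{1,\dots,n},\ \norm{\zeta_j - \eta} \le V}$ (with $V$ the viewing range), but expressed in its own local coordinate frame. Since the robots are disoriented, that frame differs from the global one by a rotation about $\eta$ (translations are dropped as in \Cref{remk:translation}) that is otherwise arbitrary; as $f$ must be well defined independently of this choice, the underlying protocol map $\Phi$ from snapshots to target displacements is forced to be equivariant, $\Phi(\rho\cdot S) = \rho\,\Phi(S)$ for every rotation $\rho$. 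Assembling these observations gives the representation $f(\eta;\vzeta) = \eta + \Phi\!\left(S(\eta;\vzeta)\right)$, which is the only structural property of $f$ the proof needs.

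Given this representation, part (i) is a one-line check. Applying a rotation $\rho$ to all positions, $S(\rho\eta; M_\rho\vzeta) = \set{\rho\zeta_j - \rho\eta \mid \norm{\rho\zeta_j - \rho\eta}\le V} = \rho\cdot S(\eta;\vzeta)$, because $\rho$ is an isometry (so the viewing-range cutoff is unaffected) and $\rho\zeta_j - \rho\eta = \rho(\zeta_j-\eta)$; hence $f(\rho\eta; M_\rho\vzeta) = \rho\eta + \Phi(\rho\cdot S(\eta;\vzeta)) = \rho\eta + \rho\,\Phi(S(\eta;\vzeta)) = \rho\, f(\eta;\vzeta)$ by equivariance of $\Phi$. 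Part (ii) is even more direct: since $(M_\kappa\vzeta)_j = \zeta_{\kappa(j)}$ and $\kappa$ is a bijection of $\set{1,\dots,n}$, the multiset $S(\eta; M_\kappa\vzeta)$ coincides with $S(\eta;\vzeta)$, so $f(\eta; M_\kappa\vzeta) = \eta + \Phi(S(\eta;\vzeta)) = f(\eta;\vzeta)$. The matrix formulations at the end of the statement are then just \eqref{eq:symm-mat} plugged into the tuple versions.

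I expect the work here to be conceptual rather than computational. The single delicate point is justifying the representation $f(\eta;\vzeta) = \eta + \Phi(S(\eta;\vzeta))$ with an equivariant $\Phi$ directly from the \oblot/ semantics: one must fix the convention that \enquote{disoriented} means agreement up to rotation only (so reflections need not be considered, consistent with the paper studying only rotational symmetry), and one must note that the explicit first argument $\eta$ of $f$ is precisely the mechanism by which an otherwise anonymous robot identifies its own position. Everything after that is immediate. As a sanity check I would also verify (i) and (ii) directly against the explicit evolution function \eqref{eq:ex_dynamics} of the $\varepsilon$-\gta{} protocol, where the weight $b(\norm{z_i-z_j}^2)$ depends only on the (rotation-invariant) pairwise distance and vanishes beyond range, and the sum over $j$ is insensitive to reordering; this reproduces both identities with no need for the equivariance abstraction.
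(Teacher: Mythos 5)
Your argument is correct and follows essentially the same route as the paper, which in fact states \cref{lem:invariance} without a formal proof, deriving it directly from the two model assumptions you invoke: disorientation makes each robot's computation insensitive to a global rotation (your part (i)), and anonymity makes it depend only on the \emph{set} of observed positions rather than their labels (your part (ii)). Your snapshot/$\Phi$ representation with the equivariance forced by well-definedness of $f$ is a sound formalization of exactly that reasoning, so there is no gap.
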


From the global perspective, considering the collective evolution of the entire formation, these symmetry properties imply that the evolution is insensitive to arbitrary rotations and arbitrary permutations of the robots. More precisely, it does not matter if first all robots compute/move and then rotate/permute or do it the other way around. More precisely, we may even replace rotate/permute by rotate and permute which gives us the combined transformations in $G$.
\begin{proposition}
	\label{prop:equiv}
	The evolution function $F$ is symmetric---or \emph{equivariant}---with respect to all potential symmetries of formations $M_\kappa M_\rho \in G$:
	\begin{equation}
		\label{eq:equiv}
		F \circ (M_\kappa M_\rho) = (M_\kappa M_\rho)\circ F.
	\end{equation}
\end{proposition}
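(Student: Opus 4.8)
The plan is to decouple \eqref{eq:equiv} into the two elementary equivariances
\[
F \circ M_\rho = M_\rho \circ F
\qquad\text{and}\qquad
F \circ M_\kappa = M_\kappa \circ F ,
\]
one for every rotation $\rho$ and one for every permutation $\kappa$, and then compose them. Indeed, starting from $F \circ (M_\kappa M_\rho) = F \circ M_\kappa \circ M_\rho$ and pushing $F$ rightwards one factor at a time gives first $M_\kappa \circ F \circ M_\rho$ and then $M_\kappa \circ M_\rho \circ F = (M_\kappa M_\rho) \circ F$, which is exactly \eqref{eq:equiv}. (This route does not even require $M_\kappa$ and $M_\rho$ to commute, although they do, being block-diagonal and a block permutation respectively.)

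To establish the two equivariances I would unwind the definition \eqref{eq:general_configuration} of $F$ blockwise, invoking the matrix reformulations of the symmetry properties in \cref{lem:invariance}. For rotation-equivariance: the $i$-th block of $F(M_\rho\z)$ is $f\bigl((M_\rho\z)_i;\, M_\rho\z\bigr) = f(\rho z_i;\, M_\rho\z)$, which by property (i) of \cref{lem:invariance} equals $\rho\, f(z_i;\z)$, i.e. the $i$-th block of $M_\rho F(\z)$; since $i$ is arbitrary this gives $F\circ M_\rho = M_\rho\circ F$. For permutation-equivariance: using $(M_\kappa\z)_i = z_{\kappa(i)}$, the $i$-th block of $F(M_\kappa\z)$ is $f\bigl(z_{\kappa(i)};\, M_\kappa\z\bigr)$, which by property (ii) of \cref{lem:invariance} equals $f(z_{\kappa(i)};\z)$; but $f(z_{\kappa(i)};\z)$ is precisely the $\kappa(i)$-th block of $F(\z)$, hence the $i$-th block of $M_\kappa F(\z)$, so $F\circ M_\kappa = M_\kappa\circ F$.

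I do not expect a genuine obstacle here: all the mathematical content already sits in \cref{lem:invariance}, which encodes the robots' disorientation (insensitivity to rotations) and anonymity (insensitivity to relabelings). The only point demanding care is the index bookkeeping for the permutation action --- keeping straight that $M_\kappa$ places the $\kappa(i)$-th block of a vector into slot $i$ --- and making sure property (ii) is applied with exactly the permutation $\kappa$ that appears in $M_\kappa$. Everything else is a one-line substitution, so I would present the proof essentially as the two blockwise computations above followed by the three-line composition.
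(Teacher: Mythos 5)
Your proposal is correct and follows essentially the same route as the paper: reduce \eqref{eq:equiv} to the separate equivariances $F\circ M_\rho = M_\rho\circ F$ and $F\circ M_\kappa = M_\kappa\circ F$ by pushing $F$ through one factor at a time, and establish each by the blockwise computation using properties (i) and (ii) of \cref{lem:invariance}, with the same index bookkeeping $(M_\kappa\z)_i = z_{\kappa(i)}$. No gaps; nothing further is needed.
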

\begin{proof}
	We claim that it suffices to prove that $F$ is equivariant as in \eqref{eq:equiv} with respect to $M_\kappa$ for all $\kappa$ and $M_\rho$ for all $\rho$ individually to prove the statement. In fact, if this holds true we immediately see
	\begin{align*}
		F \circ (M_\kappa M_\rho) 	&= F \circ (M_\kappa \circ M_\rho)\\
									&= (F \circ M_\kappa) \circ M_\rho \\
									&= (M_\kappa \circ F) \circ M_\rho \\
									&= M_\kappa \circ (F \circ M_\rho) \\
									&= M_\kappa \circ (M_\rho \circ F) \\
									&= (M_\kappa M_\rho) \circ F.
	\end{align*}

	Hence, we prove the claim using the symmetry properties in \Cref{lem:invariance}. To that end let $\vzeta=(\zeta_1,\dotsc,\zeta_n)^\trans\in\R^{2n}$ be an arbitrary point in configuration space, $\kappa\colon\set{1,\dotsc,n}\to\set{1,\dotsc,n}$ an arbitrary permutation, and $\rho\colon\R^2\to\R^2$ an arbitrary rotation. Then, we compute
	\begin{equation}
		\label{eq:sym-kappa-F}
		\begin{split}
			(F\circ M_\kappa)(\vzeta)	&= F(M_\kappa \vzeta) \\
										&= \begin{pmatrix} f((M_\kappa \vzeta)_1, M_\kappa \vzeta) \\ \vdots \\ f((M_\kappa \vzeta)_n, M_\kappa \vzeta) \end{pmatrix} \\
										&= \begin{pmatrix} f(\zeta_{\kappa(1)}, M_\kappa \vzeta) \\ \vdots \\ f(\zeta_{\kappa(n)}, M_\kappa \vzeta) \end{pmatrix}  \\
										&= \begin{pmatrix} f(\zeta_{\kappa(1)}, \vzeta) \\ \vdots \\ f(\zeta_{\kappa(n)}, \vzeta) \end{pmatrix} \\
										&= \begin{pmatrix} F(\vzeta)_{\kappa(1)} \\ \vdots \\ F(\vzeta)_{\kappa(n)} \end{pmatrix} \\
										&= M_\kappa F(\vzeta) \\
										&= (M_\kappa \circ F)(\vzeta),
		\end{split}
	\end{equation}
	where the fourth equality holds due to \Cref{lem:invariance}. Similarly, we obtain
	\begin{equation}
		\label{eq:sym-rho-F}
		\begin{split}
			(F\circ M_\rho)(\vzeta)	&= F(M_\rho \vzeta) \\
									&= \begin{pmatrix} f((M_\rho \vzeta)_1, M_\rho\vzeta) \\ \vdots \\ f((M_\rho \vzeta)_n, M_\rho\vzeta) \end{pmatrix} \\
									&= \begin{pmatrix} f(\rho \zeta_1, M_\rho\vzeta) \\ \vdots \\ f(\rho \zeta_n, M_\rho\vzeta) \end{pmatrix} \\
									&= \begin{pmatrix} \rho f(\zeta_1, \vzeta) \\ \vdots \\ \rho f(\zeta_n, \vzeta) \end{pmatrix} \\
									&= M_\rho F(\vzeta) \\
									&= (M_\rho F)(\vzeta)
		\end{split}
	\end{equation}
	again using \Cref{lem:invariance}. This completes the proof of the claim.
\end{proof}
\noindent
By the previous proposition, $F$ commutes with all elements of $G$. Hence, whenever we refer to an arbitrary symmetry without the need to specify rotation and permutation separately, we use $M, M',\dotsc \in G$ from now on.

\subsection{Proof of \Cref{thr:main-symmetries-math}}%
\label{sec:proofthm1}

\Cref{prop:equiv} places our mathematical framework in the context of \emph{equivariant dynamics}, for which there exists a well developed theory to investigate the interplay of dynamics and symmetries (e.g. \cite{Golubitsky.1988,Chossat.2000}). It allows us to prove \Cref{thr:main-symmetries-math}. We do so in the following three propositions, which combined give the statement of the theorem.

\begin{proposition}
    \label{prop:ap-preserve}
	Consider the dynamics of a configuration according to an arbitrary protocol \eqref{eq:general_configuration}. Then the configuration after one round cannot have fewer symmetries than the initial one: $G_{\z} \subset G_{\z^+}$.
\end{proposition}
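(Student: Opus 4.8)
The plan is to show $G_{\z} \subseteq G_{\z^+}$ directly from the equivariance of $F$ established in \Cref{prop:equiv}. Let $M = M_\kappa M_\rho \in G_{\z}$ be an arbitrary symmetry of the configuration $\z$, so by definition \eqref{eq:Gz} we have $M \in G$ and $M\z = \z$. We want to conclude that $M \in G_{\z^+}$, i.e. that $M\z^+ = \z^+$. Since $\z^+ = F(\z)$, this is the statement $M F(\z) = F(\z)$.

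The key step is to apply $F$ to both sides of $M\z = \z$ and then commute $F$ past $M$. Concretely, starting from $M\z = \z$, apply $F$ to obtain $F(M\z) = F(\z)$. By \Cref{prop:equiv}, $F \circ M = M \circ F$ (note $M = M_\kappa M_\rho$, and \eqref{eq:equiv} is exactly the statement that $F$ commutes with such elements), so the left-hand side equals $M(F(\z)) = M\z^+$. The right-hand side is $\z^+$ by definition. Hence $M\z^+ = \z^+$, and since $M \in G$ as well, we get $M \in G_{\z^+}$ by \eqref{eq:Gz}. As $M$ was an arbitrary element of $G_{\z}$, this proves $G_{\z} \subseteq G_{\z^+}$, which is the claimed inclusion.

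There is essentially no obstacle here: the entire content has been front-loaded into \Cref{prop:equiv}, and what remains is a one-line diagram chase. The only point requiring a modicum of care is the bookkeeping that the $M$ appearing in the definition of $G_{\z}$ is precisely of the form $M_\kappa M_\rho$ to which \eqref{eq:equiv} applies, and that no regularity or invertibility of $F$ is needed for this direction — invertibility will only enter in the subsequent proposition establishing the reverse inclusion. I would also remark explicitly that the footnote's intuition (a swarm of robots forming a perfect $n$-gon stays symmetric forever because identical local views force identical moves) is exactly what this proposition formalizes at the level of the global evolution function.
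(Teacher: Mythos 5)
Your proof is correct and is essentially identical to the paper's: both arguments consist of the single chain $M\z^+ = MF(\z) = F(M\z) = F(\z) = \z^+$, using the equivariance from \Cref{prop:equiv} together with $M\z = \z$. No gap and no meaningful difference in approach.
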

\begin{proof}
	Let $\z\in\R^{2n}$ be some arbitrary configuration and $M\in G_{\z}$. Consider the evolution $\z^+=F(\z)$ in one round. Then
	\begin{equation*}
		M\z^+=MF(\z)=F(M\z)=F(\z)=\z^+,
	\end{equation*}
	where we have exploited the symmetry of $F$ (\Cref{prop:equiv}) as well as the fact that $M$ leaves $\z$ unchanged. In particular, this implies $M\in G_{\z^+}$ proving the statement.
\end{proof}

In a very similar manner we may prove that a configuration cannot gain any symmetries during the temporal evolution. This statement, however, is only true in general if the evolution function $F$ is invertible.
\begin{proposition}
    \label{prop:ap-nogain}
	Consider the dynamics of a configuration according to an arbitrary protocol \eqref{eq:general_configuration}. Assume that the evolution function $F\colon\R^{2n}\to\R^{2n}$ is invertible. Then the configuration after one round cannot have more symmetries than the initial one. In particular, $G_{\z} = G_{\z^+}$.
\end{proposition}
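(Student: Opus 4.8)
The plan is to mirror the proof of \Cref{prop:ap-preserve} but apply it to the inverse dynamics. Since $F$ is invertible and, by \Cref{prop:equiv}, equivariant with respect to every $M \in G$, its inverse $F^{-1}$ is equivariant as well: from $F \circ M = M \circ F$ we obtain $M \circ F^{-1} = F^{-1} \circ M$ by composing with $F^{-1}$ on both sides. So $F^{-1}$ is an evolution-type map satisfying the same commutation relation, and the argument of \Cref{prop:ap-preserve} applies verbatim to it.

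Concretely, the key steps in order are: first, record that $F^{-1}$ exists (hypothesis) and establish its equivariance $M F^{-1} = F^{-1} M$ for all $M \in G$ by left- and right-multiplying \eqref{eq:equiv} by $F^{-1}$. Second, take an arbitrary $M \in G_{\z^+}$, i.e. $M \z^+ = \z^+$, and compute
\begin{equation*}
	M \z = M F^{-1}(\z^+) = F^{-1}(M \z^+) = F^{-1}(\z^+) = \z,
\end{equation*}
which shows $M \in G_{\z}$ and hence $G_{\z^+} \subset G_{\z}$. Third, combine this with the inclusion $G_{\z} \subset G_{\z^+}$ from \Cref{prop:ap-preserve} to conclude $G_{\z} = G_{\z^+}$.

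One technical point worth flagging is that the footnote in the statement of \Cref{thr:main-symmetries-math} weakens the hypothesis to \emph{local} invertibility, so strictly speaking one should phrase the above with the local inverse $F_{\z}^{-1}\colon V_{F(\z)} \to U_{\z}$ defined on a neighbourhood of $\z^+ = F(\z)$. The equivariance computation then only needs that $M$ maps the relevant neighbourhoods into one another, which holds because $M$ is a linear isometry fixing $\z^+$ and $F$ commutes with $M$; shrinking the neighbourhoods if necessary makes everything well defined. This localization is the only genuinely delicate part — the algebraic heart of the argument is the one-line computation above — and it is what makes the result applicable to the (possibly discontinuous, only locally invertible) evolution functions of both protocols in the paper.

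A final remark: the present proposition is stated for globally invertible $F$, so in the write-up I would prove it in that clean form and relegate the local version to a remark, noting that the same three steps go through with $F^{-1}$ replaced by a local inverse branch. The whole proof is short; no nontrivial estimates are involved, only the bookkeeping of composing equivariant maps with inverses.
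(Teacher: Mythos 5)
Your proof is correct and follows essentially the same route as the paper: derive the equivariance $M \circ F^{-1} = F^{-1} \circ M$ from \eqref{eq:equiv}, apply the argument of \Cref{prop:ap-preserve} to $F^{-1}$ to get $G_{\z^+} \subset G_{\z}$, and combine with \Cref{prop:ap-preserve} for equality. The paper treats the local-invertibility refinement in a separate subsequent proposition (sidestepping the neighbourhood bookkeeping by only evaluating the local inverse at the fixed point $\z^+$), which matches your plan of relegating that case to a remark.
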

\begin{proof}
	The setup for the proof is the same as in the previous. Let $\z\in\R^{2n}$ be some arbitrary configuration. Consider the evolution $\z^+=F(\z)$ in one round. By assumption, $F$ is invertible and we may restate $\z=F^{-1}(\z^+)$. It can readily be seen that the inverse $F^{-1}$ has the same symmetry properties as $F$:
	\[ F \circ M = M \circ F \iff F \circ M \circ F^{-1} = M \iff M \circ F^{-1} = F^{-1} \circ M \]
	for any $M\in G$.

	In particular, for $M\in G_{\z^+}$ we may apply \Cref{prop:ap-preserve} to $F^{-1}$ to obtain $M\in G_{\z}$. This implies $G_{\z^+} \subset G_{\z}$, which in combination with \Cref{prop:ap-preserve} proves the claim.
\end{proof}

The previous proposition requires the existence of a \emph{global} inverse $F^{-1}$ to $F$. However, the weaker notion of \emph{local} invertibility is sufficiently strong to draw the conclusions of \Cref{prop:ap-nogain}.
\begin{proposition}
	Consider the dynamics of a configuration according to an arbitrary protocol \eqref{eq:general_configuration}. Assume that the evolution function $F\colon\R^{2n}\to\R^{2n}$ is \emph{locally} invertible. Then, the configuration after one round cannot have more symmetries than the initial one. In particular, $G_{\z} = G_{\z^+}$.
\end{proposition}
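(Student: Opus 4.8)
Because \Cref{prop:ap-preserve} already yields $G_{\z}\subseteq G_{\z^+}$ with no invertibility hypothesis, the whole statement reduces to the reverse inclusion $G_{\z^+}\subseteq G_{\z}$, and my plan is to rerun the proof of \Cref{prop:ap-nogain} using only a \emph{local} inverse. Fix $M\in G_{\z^+}$; local invertibility supplies open sets $U_{\z}\ni\z$, $V_{\z^+}\ni\z^+$ and a bijection $F_{\z}\colon U_{\z}\to V_{\z^+}$ with inverse $H\coloneqq F_{\z}^{-1}$. If $H$ shares the equivariance of $F$ near $\z^+$, then $M\,H(\z^+)=H(M\z^+)=H(\z^+)$ (the middle equality because $M\z^+=\z^+$), which says $M\z=\z$, i.e.\ $M\in G_{\z}$; combined with \Cref{prop:ap-preserve} this gives $G_{\z}=G_{\z^+}$.

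The delicate point — and the step I expect to be the main obstacle — is establishing that the local inverse $H$ is equivariant. For $F$ this was one line from $F\circ M=M\circ F$ (\Cref{prop:equiv}); locally one writes, for $v$ near $\z^+$ and $u=H(v)$, $F(Mu)=M F(u)=Mv$, so $Mu$ is \emph{some} preimage of $Mv$, and wants to conclude $H(Mv)=Mu=M\,H(v)$. This needs $Mu$ to lie in the branch domain $U_{\z}$; since $u\to\z$ as $v\to\z^+$, that is essentially the requirement $M\z\in U_{\z}$, which for small $U_{\z}$ is already tantamount to the desired conclusion $M\z=\z$. This circularity is genuine: one can exhibit equivariant evolution functions that are local diffeomorphisms at a prescribed asymmetric $\z$ yet map it to a symmetric $\z^+$, so the hypothesis that $F$ be locally invertible \emph{everywhere} (not merely at $\z$) has to enter the argument in an essential way.

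To use that hypothesis I would pass to a $G_{\z^+}$-invariant neighborhood of $\z^+$ before inverting. The group $G$ of potential symmetries is compact, and the stabilizer $G_{\z^+}$ can fail to be finite only when $\z^+$ is the degenerate configuration with all robots at the center; but in that case local invertibility everywhere forces $\z=\z^+$ — otherwise a whole $G$-orbit would be collapsed to a point, contradicting invertibility of the derivative along it — so the statement is trivial. When $G_{\z^+}$ is finite one may set $V'=\bigcap_{M'\in G_{\z^+}}M'(V_{\z^+})$ (a ball around $\z^+$ if $V_{\z^+}$ is) and track the finitely many inverse branches of $F$ over $V'$; the remaining task is to check these branches are permuted consistently by $G_{\z^+}$, which pins down the branch through $\z$ as $M$-invariant and hence gives $M\z=\z$. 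Making this precise, together with the degenerate case, is the technical core; everything else is a transcription of \Cref{prop:ap-nogain}. As a consistency check, for the averaging protocol of \Cref{sec:protocol:averaging} the map $F=\mathrm{id}+\varepsilon\cdot(\cdots)$ is in fact a global homeomorphism for small $\varepsilon$, so there \Cref{prop:ap-nogain} applies without recourse to the local version.
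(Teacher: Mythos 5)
Your reduction to the inclusion $G_{\z^+}\subseteq G_{\z}$ and your treatment of the degenerate case (infinite stabilizer, all robots at the center) are fine, but the proposal is not a proof: the decisive step of your repair --- that the inverse branches over the $G_{\z^+}$-invariant neighborhood are \enquote{permuted consistently}, which \enquote{pins down the branch through $\z$ as $M$-invariant} --- is asserted, not established, and it is exactly the original statement in disguise. Equivariance only tells you that $M$ carries the branch of $F^{-1}$ through $\z$ to the branch through $M\z$; since $F(M\z)=MF(\z)=M\z^+=\z^+$, both are legitimate branches over your invariant neighborhood $V'$, and nothing in the sketch forces them to coincide, i.e.\ nothing yet excludes $M\z\neq\z$. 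Moreover, this is not mere bookkeeping that could be filled in routinely: equivariance with respect to the cyclic group $\langle M\rangle$ together with \emph{everywhere}-local invertibility does not, by itself, imply the conclusion. For instance, $E(z)=\int_0^z e^{w^2}\,dw$ on $\C\cong\R^2$ is odd (equivariant for the antipodal $\Z_2$-action), has nowhere-vanishing derivative $e^{z^2}$ (hence is a local diffeomorphism at every point), yet a Hadamard-factorization argument shows it has a zero $a\neq 0$, so it maps the asymmetric point $a$ to the fully symmetric point $0$. This map is of course not of the swarm form \eqref{eq:general_configuration}, but it shows that any complete argument must invoke more than the ingredients you (and the naive chain) use --- e.g.\ the full rotation--permutation equivariance of \Cref{prop:equiv} (for $n=1$ this already settles the matter, since an entire $SO(2)$-orbit would be collapsed onto $\z^+$, contradicting local injectivity) or additional structural properties of the protocol class.

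For comparison, the paper's own proof is precisely the direct transcription you set aside: the chain $\z=F_{\z}^{-1}(\z^+)=F_{\z}^{-1}(M\z^+)=F_{\z}^{-1}(MF(F_{\z}^{-1}(\z^+)))=F_{\z}^{-1}(F(MF_{\z}^{-1}(\z^+)))=MF_{\z}^{-1}(\z^+)=M\z$, justified only by the remark that all applications of $F_{\z}^{-1}$ are well-defined because $M$ fixes $\z^+$. Your circularity objection --- the last cancellation $F_{\z}^{-1}(F(M\z))=M\z$ presupposes $M\z\in U_{\z}$, which for a small branch domain is tantamount to the conclusion $M\z=\z$, whereas without it the expression evaluates to $\z$ and the equality is exactly what is to be shown --- applies verbatim to that chain; the paper does not address it. So you have correctly located the delicate point in this statement, and your instinct that everywhere-local invertibility must enter in an essential way is sound, but your proposed branch-tracking fix leaves the crux open; as it stands the proposal is a sketch with a genuine gap rather than a proof, and it does not supply an argument the paper's (questionable) one-line cancellation lacks.
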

\begin{proof}
	As we have a local inverse for every configuration $\z\in\R^{2n}$, we may compare $G_{\z}$ and $G_{\z^+}$ using the local inverse as before. To that end, let $\z\in\R^{2n}$ be an arbitrary configuration, $\z^+=F(\z)$, $F_{\z}^{-1}$ the local inverse, and $M\in G_{\z^+}$. Then
	\begin{align*}
		\z	&= F_{\z}^{-1}(\z^+) \\
			&= F_{\z}^{-1}(M\z^+) \\
			&= F_{\z}^{-1}(MF(F_{\z}^{-1}(\z^+))) \\
			&= F_{\z}^{-1}(F(MF_{\z}^{-1}(\z^+))) \\
			&= MF_{\z}^{-1}(\z^+) = M\z.
	\end{align*}
	All applications of the local inverse are well-defined, as $M$ leaves $\z^+$ unchanged. In particular, we have shown that $M\in G_{\z}$ proving the necessary inclusion as in \Cref{prop:ap-nogain}.
\end{proof}

\begin{remark}
    One readily confirms, that the set of all potential symmetries $G$ has the algebraic structure of a group, i.e., it contains the identity, is closed under products, and contains the inverse of every element---fully justifying the reference to equivariant dynamics. Furthermore, the subset of symmetries of a configuration $G_\z$ is a subgroup, i.e., a subset that is a group itself. Consequently, we may refer to $G$ as the \emph{symmetry group} and to $G_\z$ as the \emph{isotropy subgroup}. One may further compute that all matrices in $G$ have determinant $1$. Hence, $G$ is a subgroup of the special orthogonal group $SO(2n)$ consisting of all rotations of $R^{2n}$.
\end{remark}
\begin{remark}
    The equivariant dynamical systems formalism immediately offers simple means to deduce features of the collective dynamics. In fact, any subspace of $R^{2n}$ that is pointwise mapped to itself by any of the potential symmetries in $G$ cannot be left by the collective dynamics. This can for example be used to show the invariance of certain formations. However, since this is not the focus of this paper, we omit any details at this point. 
\end{remark}
\section{Invertibility of Protocol 1}%
\label{ap:protocol1}

Here we prove \cref{lem:prot1:invertibleF}, namely that our first protocol (\cref{sec:protocol:averaging}) satisfies the assumptions of \Cref{thr:main-symmetries-math}.
In fact, we need to confirm, that the induced evolution function is indeed locally invertible.
However, as this proof is technical and tedious but not very enlightening, we omit most of the technicalities and provide a sketch instead.
The main ingredients of the proof are
\begin{itemize}
    \item the inverse function theorem (e.g., \cite{Hirsch.1988}), which states that that a continuously differentiable function is locally invertible at every point where its Jacobian is an invertible matrix, and
    \item the Gershgorin circle theorem \cite{Gersgorin.1931}, which states that all eigenvalues of a matrix $A=(a_{i,j})_{i,j=1}^n$ are contained in the union of the circles $\{z \in \C \mid |z-a_{i,i}| \le \sum_{j=1, j\ne i}^n|a_{i,j}|\}$.
\end{itemize}

We first fix some notation. Recall from \eqref{eq:ex_dynamics} that
\[ f(z_i, \z) = z_i + \frac{\varepsilon}{n} \sum_{i=j}^n b(\|z_i-z_j\|^2)(z_j-z_i), \]
which is smooth---and thus continuously differentiable in particular---by construction. Furthermore, it is a two-dimensional expression. The collection of these expressions for $i=1,\dotsc, n$ is the evolution function $F$. To specify the $x$- and $y$-directions separately, we denote
\begin{equation}
    \label{eq:F-all-entries}
    F(\z) = \begin{pmatrix} f(z_1, \z) \\ \vdots \\ f(z_n, \z) \end{pmatrix} =
    \begin{pmatrix}
        F_1^x(\z) \\ F_1^y(\z) \\ \vdots \\ F_n^x(\z) \\ F_n^y(\z)
    \end{pmatrix}.
\end{equation}

We use the representation \eqref{eq:F-all-entries} to compute the Jacobian $DF(\z)$ at an arbitrary point $\z=((x_1,y_1),\dotsc,(x_n,y_n))\in\R^{2n}$. It is of the form
\begin{equation*}
    \label{eq:jacobian}
    DF(\z) = (D_{i,j})_{i,j=1}^n \quad \text{with} \quad D_{i,j} = \begin{pmatrix}
        \partial_{x_j}F_i^x(\z) & \partial_{y_j}F_i^x(\z) \\
        \partial_{x_j}F_i^y(\z) & \partial_{y_j}F_i^y(\z)
    \end{pmatrix}.
\end{equation*}
The $2n$ Gershgorin circles of the Jacobian $DF(\z)$ are centered at $\partial_{x_i}F_i^x(\z)$ and $\partial_{y_i}F_i^y(\z)$ for $i=1,\dotsc,n$. Their radii are given by
\[ R_i^x(\z) = \sum_{j\ne i}|\partial_{x_j}F_i^x(\z)| + \sum_{j=1}^n|\partial_{y_j}F_i^x(\z)| \quad \text{and} \quad R_i^x(\z) = \sum_{j\ne i}|\partial_{y_j}F_i^y(\z)| + \sum_{j=1}^n|\partial_{x_j}F_i^y(\z)| \]
respectively.

We compute the partial derivatives as
\begin{equation*}
    \label{eq:partial-derivatives}
    \begin{split}
        \partial_{x_j}F_i^x(\z) &= 
        \begin{cases}
            2 \frac{\varepsilon}{n}b'(\|z_i-z_j\|^2)(x_i-x_j)^2 + \frac{\varepsilon}{n}b(\|z_i-z_j\|^2), \quad & j \ne i, \\[5pt]
            1 - 2 \frac{\varepsilon}{n}\sum_{j\ne i}(b'(\|z_i-z_j\|^2)(x_i-x_j)^2 + b((\|z_i-z_j\|^2)), \quad & j = i,
        \end{cases} \\[5pt]
        \partial_{y_j}F_i^x(\z) &= 
        \begin{cases}
            2 \frac{\varepsilon}{n} b'((\|z_i-z_j\|^2))(x_i-x_j)(y_i-y_j), \quad & j \ne i, \\[5pt]
            2 \frac{\varepsilon}{n} \sum_{j\ne i}b'(\|z_i-z_j\|^2)(x_i-x_j)(y_i-y_j), \quad & j = i,
        \end{cases} \\[5pt]
        \partial_{x_j}F_i^y(\z) &= 
        \begin{cases}
            2 \frac{\varepsilon}{n} b'((\|z_i-z_j\|^2))(x_i-x_j)(y_i-y_j), \quad & j \ne i, \\[5pt]
            2 \frac{\varepsilon}{n} \sum_{j\ne i}b'(\|z_i-z_j\|^2)(x_i-x_j)(y_i-y_j), \quad & j = i,
        \end{cases} \\[5pt]
        \partial_{y_j}F_i^y(\z) &= 
        \begin{cases}
            2 \frac{\varepsilon}{n}b'(\|z_i-z_j\|^2)(y_i-y_j)^2 + \frac{\varepsilon}{n}b(\|z_i-z_j\|^2), \quad & j \ne i, \\[5pt]
            1 - 2 \frac{\varepsilon}{n}\sum_{j\ne i}(b'(\|z_i-z_j\|^2)(y_i-y_j)^2 + b((\|z_i-z_j\|^2)), \quad & j = i.
        \end{cases}
    \end{split}
\end{equation*}
Omitting any details, using these expressions it can be shown that for
\begin{equation}
    \label{eq:eps-estimate}
    \varepsilon < \frac{n}{27(n-1)}
\end{equation}
one has
\begin{equation}
    \label{eq:radii-estimate}
    R_i^x(\z) < | \partial_{x_i}F_i^x(\z) | \quad \text{and} \quad R_i^x(\z) < | \partial_{x_i}F_i^x(\z) |.
\end{equation}
This estimate essentially only requires the triangle inequality and the specific form of the bump function $b$ (see \eqref{eq:ex_bump0}). In particular, \eqref{eq:radii-estimate} shows that none of the Gershgorin circles contains $0$. Therefore, $0$ cannot be an eigenvalue and the Jacobian $DF(\z)$ is invertible for any $\z\in\R^{2n}$. By the inverse function theorem, $F$ is therefore locally invertible at every $\z\in\R^{2n}$.
\begin{remark}
    The estimate \eqref{eq:eps-estimate} is not sharp but sufficient to guarantee the estimate of the radii \eqref{eq:radii-estimate}.
\end{remark}
\section{Additional Proofs for Protocol 2}
\label{ap:contracing-waves}

\label{ap:contracting-wave-proofs}
\lemEgtmInvertable*
\begin{proof}
    \egtmlong\ can be described as $\mathbf{z}^{t+1} = F(\textbf{z}^t)$ with
    \begin{align*}
        F = \left(
        \begin{matrix}
            1 - \epsilon & \frac{\epsilon}{2} & 0 & 0 & 0 & \cdots & 0 & \frac{\epsilon}{2} \\
            \frac{\epsilon}{2} & 1 - \epsilon & \frac{\epsilon}{2} & 0 & 0 & \cdots & 0 & 0 \\
            0 & \frac{\epsilon}{2} & 1 - \epsilon & \frac{\epsilon}{2} & 0 & \cdots & 0 & 0 \\
            0 & 0 & \frac{\epsilon}{2} & 1 - \epsilon & \frac{\epsilon}{2} & \cdots & 0 & 0 \\
            \cdots \\
            \frac{\epsilon}{2} & 0 & 0 & 0 & 0 & \cdots & \frac{\epsilon}{2} & 1 - \epsilon \\
        \end{matrix}
        \right).
    \end{align*}
    For $\epsilon < 0.5$, $F$ is strictly diagonally dominant.
    By \cite{Gersgorin.1931}, $F$ is invertable.
    
\end{proof}

\lemConvexEgtmStaysConvex*
\begin{proof}
    Let us assume $\vec{b}^t$ is a convex set.
    We consider three neighboring robots $\boundary{t}{k-1}, \boundary{t}{k}, \boundary{t}{k+1}$ in $\vec{b}^t$ that are not collinear.
    They form a triangle.
    Let $\tau$ be the target point of $\boundary{t}{k}$ executing \egtm\ with $\epsilon = 1$. The point
    $\tau$ is on the line $l$ between $\boundary{t}{k-1}$ and $\boundary{t}{k+1}$ (black in \Cref{fig:convex-egtm-stays-convex}).
    \begin{figure}[H]
        \includegraphics[page=11]{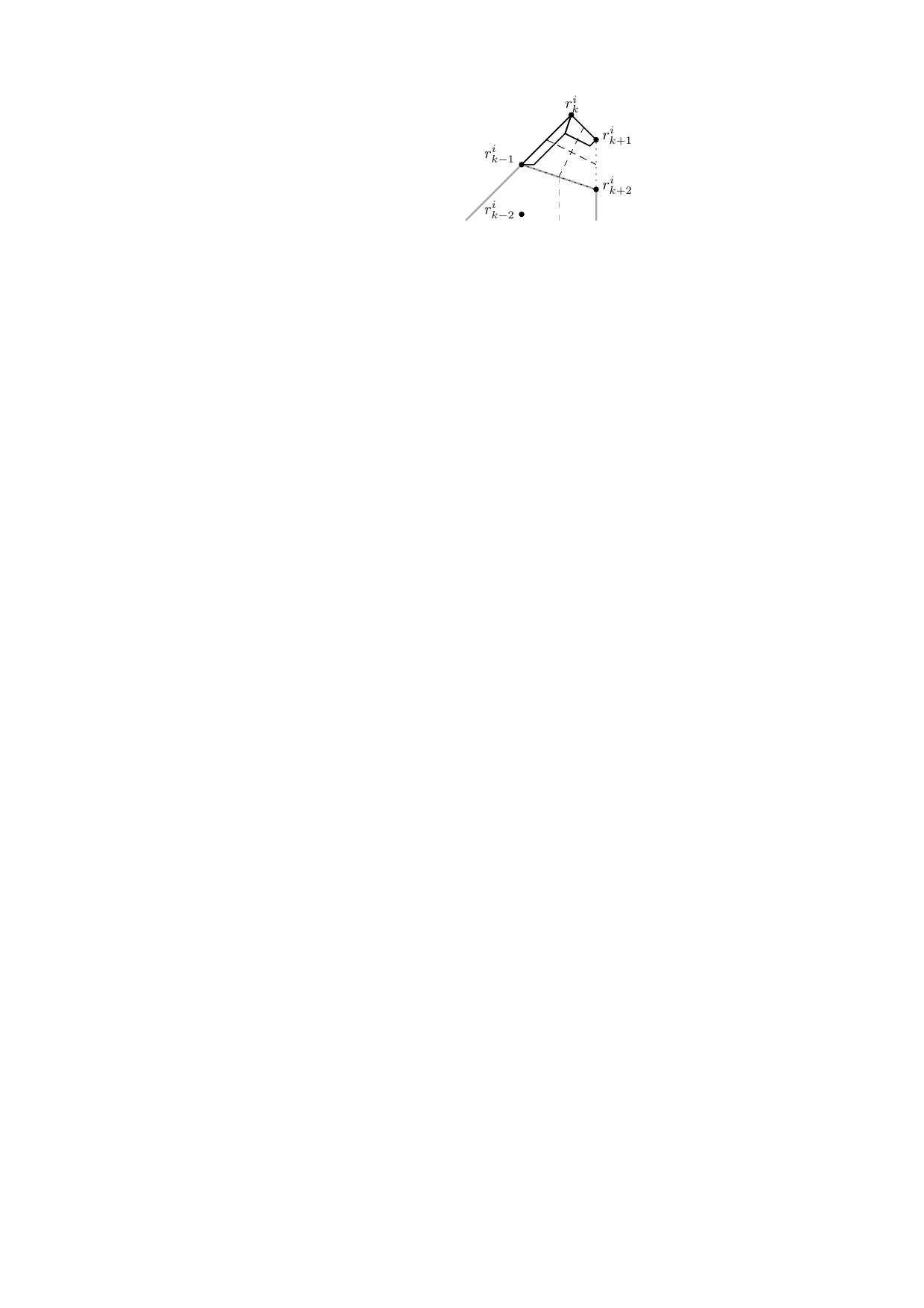}
        \caption{Shows that a convex corner in \wave{t} is convex in \wave{t+1} as well (\Cref{lem:convex-egtm-stays-convex}).}
        \label{fig:convex-egtm-stays-convex}
    \end{figure}
    The position of robot $\boundary{t+1}{k}$ is the target point of \egtm\ with $\epsilon < 0.5$.
    It can be constructed by moving $\tau$ factor $\epsilon$ towards $\boundary{t}{k}$ which is more than half the way.
    Therefore, it must lie above $l'$, the parallel line to $l$ move half way towards $\boundary{t}{k}$ (dotted in \Cref{fig:convex-egtm-stays-convex}).
    
    Let $\tau'$ be the target point of $\boundary{t}{k+1}$ executing \egtm\ with $\epsilon = 1$. 
    Because $\vec{b}^t$ is a convex set, $\boundary{t}{k+2}$ must lie below $l$.
    The midpoint between $\boundary{t}{k}$ and $\boundary{t}{k+2}$ cannot lie above $l'$.
    Moving $\tau'$ towards $\boundary{t}{k+1}$ cannot move it above $l'$.
    Therefore, $\boundary{t+1}{k+1}$ and (analogous) $\boundary{t+1}{k-1}$ lies below $l'$.
    Therefore, the robots $\boundary{t+1}{k-1}, \boundary{t+1}{k}$ and $\boundary{t+1}{k+1}$ form a convex corner in $\vec{b}^{t+1}$.

    If $\boundary{t}{k-1}, \boundary{t}{k}, \boundary{t}{k+1}$ are collinear, they might move onto target points on the same line.
    But with analog arguments as above, it is easy to see that they cannot move onto positions that form a concave corner.

    Therefore, the set $\vec{b}^{t+1}$ is still convex.

    In the proof above we have shown, that $\boundary{t+1}{k}$ lies inside the triangle $\boundary{t}{k-1}, \boundary{t}{k}, \boundary{t}{k+1}$.
    Therefore, the area surrounded by $\vec{b}^{t+1}$ is a subset of the area surrounded by $\vec{b}^t$.
    
\end{proof}

\corWaveSegmentsNotOverlappingNotTwisted*
\begin{proof}
    From the arguments of \Cref{lem:convex-egtm-stays-convex} follows, that each position $\boundary{t+1}{k}$ lies in the triangle $\boundary{t}{k}, \frac{\boundary{t}{k} + \boundary{t}{k-1}}{2}, \frac{\boundary{t}{k} + \boundary{t}{k+1}}{2}$.
    \Cref{fig:not-twisted} shows these triangles with gray dashed lines.
    It is easy to see, that quadrilaterals with corners in these triangles cannot be overlapping or twisted.

    \begin{figure}[H]
        \includegraphics[page=12]{figures/wave-quadrilateral.pdf}
        \caption{Shows that a convex corner in \wave{t} is convex in \wave{t+1} as well (\Cref{lem:convex-egtm-stays-convex}).}
        \label{fig:not-twisted}
    \end{figure}
\end{proof}

\corNonDegeneratedStaysNonDegenerated*
\begin{proof}
    (1) can be followed from \Cref{lem:convex-egtm-stays-convex}.

    (2) If $\boundary{t}{k}, \boundary{t}{k+1}$ and $\boundary{t}{k+2}$ are collinear but $\boundary{t}{k-1}$ is not, $\boundary{t+1}{k+1} = \boundary{t}{k+1}$ will not move but $\boundary{t+1}{k} \neq \boundary{t}{k}$.
    Therefore, $\boundary{t+1}{k}, \boundary{t+1}{k+1}$ and $\boundary{t+1}{k+2}$ are not anymore collinear and $\waveseg{t+1}{k}$ is non-degenerated.
\end{proof}

\corDistanceToBoundaryInSegment*
\begin{proof}
    We compute the distances between the corners of \waveseg{t}{k} and \waveseg{t+1}{k}.\\

    In the equations below we estimate $|\boundary{t}{k} - \boundary{t}{k+1}| \leq 1$, $|\boundary{t}{k} - \boundary{t}{k+2}| \leq 2$ and $\left|\boundary{t}{k} - \boundary{t}{k+3}\right| \leq 3$.
    \begin{align*}
        \left|\boundary{t}{k} - \boundary{t+1}{k+1}\right|
            &= \left| \boundary{t}{k} - \frac{\epsilon}{2} \boundary{t}{k} - \frac{\epsilon}{2} \boundary{t}{k+2} - (1-\epsilon) \boundary{t}{k+1} \right|
            \\
            &= \frac{\epsilon}{2}\left|\boundary{t}{k} - \boundary{t}{k+2}\right| + (1-\epsilon) \left|\boundary{t}{k} + \boundary{t}{k+1}\right|
            \\
            &\leq \epsilon + (1-\epsilon) = 1
    \end{align*}

    Analog is $\left|\boundary{t}{k+1} - \boundary{t+1}{k}\right| \leq 1$.
    It is clear, that $\left|\boundary{t}{k} - \boundary{t+1}{k}\right| \leq \epsilon$ and $\left|\boundary{t}{k+1} - \boundary{t+1}{k+1}\right| \leq \epsilon$.\\
    
    \begin{align*}
        \left|\boundary{t}{k} - \boundary{t+2}{k+1}\right|
            &= \left|\boundary{t}{k} - (1-\epsilon) \boundary{t+1}{k+1} - \frac{\epsilon}{2} \boundary{t+1}{k} - \frac{\epsilon}{2}\boundary{t+1}{k+2}\right|
            \\
            &\leq (1-\epsilon) \cdot 1 + \frac{\epsilon}{2} \cdot \epsilon + \frac{\epsilon}{2} \left|\boundary{t}{k} - \frac{\epsilon}{2} \boundary{t}{k+1} - \frac{\epsilon}{2} \boundary{t}{k+3} - (1-\epsilon) \boundary{t}{k+2}\right|
            \\
            &\leq (1 - \epsilon) + \frac{\epsilon^2}{2} + \frac{\epsilon}{2} (\frac{\epsilon}{2} + 3\frac{\epsilon}{2} + 2(1-\epsilon))
            \\
            &= (1+\epsilon)(1-\epsilon) + \frac{3}{2} \epsilon^2
            \\
            &= 1 + \frac{\epsilon^2}{2}
    \end{align*}

    Analog is $\left|\boundary{t}{k+1} - \boundary{t+2}{k}\right| \leq 1 + \frac{\epsilon^2}{2}$.
    It is clear, that $\left|\boundary{t}{k} - \boundary{t+2}{k}\right| \leq 2\epsilon$ and $\left|\boundary{t}{k+1} - \boundary{t+2}{k+1}\right| \leq 2\epsilon$.
    
    Any robot inside \waveseg{t}{k} or \waveseg{t+1}{k} as a smaller or equal distance to \boundary{t}{k} and \boundary{t}{k+1} than the farthest corner of the wave-segments.
    Therefore, the distance is $\leq 1 + \frac{\epsilon^2}{2}$.
\end{proof}

\end{document}